\documentclass{article}

 \usepackage[preprint]{neurips_2026}

\usepackage[utf8]{inputenc} 
\usepackage[T1]{fontenc}    
\usepackage{hyperref}       
\usepackage{url}            
\usepackage{booktabs}       
\usepackage{amsfonts}       
\usepackage{nicefrac}       
\usepackage{microtype}      
\usepackage{xcolor}         
\usepackage{etoc}
\usepackage{import}
\usepackage{amsmath}
\usepackage{amssymb}
\usepackage{mathtools}
\usepackage{subcaption}
\usepackage{amsthm}
\usepackage{bm}
\usepackage{duckuments}
\usepackage{enumitem}
\usepackage{pgfplots}
\pgfplotsset{compat=1.18}
\usepgfplotslibrary{fillbetween}
\usetikzlibrary{arrows.meta,positioning}
\usepackage{makecell}
\usepackage{wrapfig}
\usepackage{duckuments}

\usepackage[most]{tcolorbox}

\newtcolorbox{assumptionbox}[1][]{%
breakable, enhanced, blanker, left=3mm, borderline west = {1mm}{0pt}{teal}}
\newtcolorbox{theorembox}[1][]{%
breakable, enhanced, blanker, left=3mm, borderline west = {1mm}{0pt}{violet}}
\newtcolorbox{definitionbox}[1][]{%
breakable, enhanced, blanker, left=3mm, borderline west = {1mm}{0pt}{brown}}

\usepackage{pifont}
\newcommand{\cmark}{\textcolor{green!60!black}{\ding{51}}}
\newcommand{\xmark}{\textcolor{red!75!black}{\ding{55}}}

\theoremstyle{plain}
\newtheorem{theorem}{Theorem}[section]
\newtheorem{proposition}[theorem]{Proposition}
\newtheorem{lemma}[theorem]{Lemma}
\newtheorem{corollary}[theorem]{Corollary}
\theoremstyle{definition}
\newtheorem{definition}[theorem]{Definition}
\newtheorem{assumption}{Assumption}

\theoremstyle{remark}
\newtheorem{remark}[theorem]{Remark}

\newcommand{\indep}{\perp \!\!\! \perp}

\newcommand{\R}{\mathbb{R}}
\newcommand{\N}{\mathcal{N}}

\newcommand{\x}{\bm{x}}
\newcommand{\w}{\bm{w}}
\newcommand{\eps}{\bm{\varepsilon}}
\newcommand{\Bb}{\bm{b}}

\newcommand{\z}{\bm{z}}

\newcommand{\s}{\bm{s}}

\newcommand{\mparam}{\bm{\theta}}	

\def\1{\bm{1}}

\newcommand{\func}{f_{\mparam_f}}
\newcommand{\ps}{p_{\mparam_{\s}}}
\newcommand{\pz}{p_{\mparam_{\z}}}

\title{End-to-End Identifiable and Consistent \\
Recurrent Switching Dynamical Systems}

\author{%
  Carles Balsells-Rodas \quad Zhengrui Xiang \quad Xavier Sumba \quad Yingzhen Li \\
  Imperial College London \\
  \texttt{\{cb221,  yingzhen.li\}@imperial.ac.uk}
}

\begin{document}

\maketitle

\begin{abstract}
  Learning identifiable representations in deep generative models remains a fundamental challenge, particularly for sequential data with regime-switching dynamics. Existing approaches establish identifiability under restrictive assumptions, such as stationarity or limited emission models, and typically rely on variational autoencoder (VAE) estimators, which introduce approximation gaps that limit the recovery of the latent structure. In this work, we address both the theoretical and practical limitations of this setting. First, we establish identifiability of a broad class of recurrent nonlinear switching dynamical systems under flexible assumptions, significantly extending prior results. Second, we introduce $\Omega$SDS, a flow-based estimator that enables exact likelihood optimization using expectation-maximisation. Through empirical validation on both synthetic and real-world data, our results demonstrate that $\Omega$SDS achieves improved disentanglement compared to VAE-based estimators and more accurate forecasting of underlying dynamics.
\end{abstract}

\etocdepthtag.toc{mtchapter}

\section{Introduction}
Many real-world time series exhibit nonstationary dynamics, where the underlying temporal mechanisms change over time. Examples include seasonal patterns in climate data \citep{saggioro2020reconstructing}, or behavioural modes in human motion \citep{aist-dance-db}. Regime-switching state-space models, including switching dynamical systems (SDSs; \citep{ghahramani2000variational}) and Markov switching models (MSMs; \citep{lindgren1978markov,poritz1982linear}), target these scenarios by introducing discrete regimes (or switches) which explicitly model dynamical changes. Recent sequential latent variable models incorporate regime-switching as structured priors \citep{linderman2017bayesian,dong2020collapsed,ansari2021deep,liu2023graph,geadah2024parsing}, improving flexibility while preserving interpretable continuous states and discrete switches.

Interpretability of deep generative models relies on identifiability, which ensures that the data distribution determines the latent structure up to acceptable equivalences, such as permutations of switches or latent variables. Without identifiability, observationally equivalent model solutions may represent contradictory latent dynamics, and therefore interpretability may reflect arbitrary parametrisations rather than the underlying data-generating mechanisms. For flexible nonlinear latent variable models, identifiability fails without additional structure \citep{hyvarinen1999nonlinear}. Existing results impose such structures either on the latent distribution, for example through auxiliary variables or temporal dependence \citep{khemakhem2020variational,halva2020hidden,klindt2021towards,morioka2021independent,yao2022temporally}, or the emission map \citep{kivva2022identifiability}. Switching dynamical systems introduce additional complexity, since the latent prior is a mixture over regime-dependent dynamics. Existing SDS identifiability results reflect this ``no free lunch'' principle between restricting emissions or latent dynamics: piece-wise linear emissions allow flexible nonlinear mixture transitions \citep{balsells-rodas2024on}, while general invertible emissions require more restrictive latent dynamics, such as stationary linear transitions \citep{halva2021disentangling}. 

In this work, we push the above trade-off by establishing identifiability and estimation for recurrent switching dynamical systems (rSDS) with nonlinear transition dynamics and invertible emissions. \vspace{-0.5em}
\begin{itemize}[leftmargin=1.5em,itemsep=0pt]
  \setlength{\itemsep}{1pt}
  \setlength{\parskip}{0pt}
  \setlength{\parsep}{0pt}
\item We show that (i) temporal latents are identifiable up to affine transformations, (ii) recurrent Markov switching dynamics are identifiable up to regime permutations, and (iii) regime-dependent covariances reduce affine ambiguity to permutation and scaling.  These results provide an end-to-end identifiability framework for recurrent deep SDSs. 
\item Motivated by the theory, we propose $\Omega$SDS, a likelihood-based estimator that extracts temporal features using a flow-based emission and learns prior dynamics using expectation-maximisation. Unlike VAE-based estimators, $\Omega$SDS optimises the exact likelihood, yielding consistency up to the identifiable equivalence class under standard MLE conditions for recurrent MSMs. 
\item Empirically, exact likelihood improves latent disentanglement on synthetic data, while recurrent switches support interpretable dynamics and strong forecasting performance on controlled physical systems and real-world dancing videos.
\vspace{-0.5em}
\end{itemize}



\paragraph{Related work} SDSs are flexible models, but inference is generally challenging due to the interaction between continuous and discrete states \citep{ghahramani2000variational}. 
Recent works develop inference for switching linear dynamical systems \citep{linderman2017bayesian}, Gumbel-softmax relaxations \citep{becker2019switching}, and infinite switching processes \citep{geadah2024parsing}. Current nonlinear SDSs are mostly trained with variational objectives, including collapsed-switching inference \citep{dong2020collapsed}, recurrent explicit-duration models \citep{ansari2021deep}, and graph-based switching dynamics \citep{liu2023graph}. Flow-based likelihood estimation remains less explored \citep{debezenac2020normalizing}. Identifiability for regime-switching latent variable models builds on nonlinear ICA and finite-mixture theory \citep{khemakhem2020variational,yakowitz1968identifiability,allman2009identifiability}. HMM-ICA uses finite-state HMM identifiability to establish nonlinear ICA \citep{gassiat2016inference,halva2020hidden}, while SNICA proves disentanglement under stationary temporal dependencies without identifying the mixture components \citep{halva2021disentangling}. Identifying latent transitions is harder, as temporal dependencies prevent a direct extension of \citet{allman2009identifiability}. Recently, classical finite mixture model results \citep{yakowitz1968identifiability} have been extended to non-recurrent MSMs, yielding identifiability under Gaussian observations \citep{song2023temporally} and piecewise linear emissions \citep{balsells-rodas2024on}. Alternatively, \citet{song2024causal} establishes results by assuming knowledge of the number of states $K$ and observation-dependent regimes. Related causal-discovery methods allow recurrent switching but identify graph structures \citet{balsells-rodas25causal} rather than the switching process. 
Table~\ref{tab:related_work_comparison} summarises these distinctions. 

\begin{table}[t]
    \caption{
    Comparison with identifiable regime-switching latent variable models.
    Full ident. excludes identifying $K$ and denotes recovery of the latent representation, transitions, and switching process.
    }
    \centering
    \resizebox{\linewidth}{!}{%
    \begin{tabular}{lcccccc}
        \toprule
        Method 
        & Emission 
        & Switching 
        & Nonlin. dyn.
        & Identifies $K$
        & Full ident.
        & Exact LL \\
        \midrule
        HMM-ICA \citep{halva2020hidden}
        & Bijective & Autonomous & \xmark & \xmark & \xmark & \cmark \\

        SNICA \citep{halva2021disentangling}
        & Invertible & Autonomous & \xmark & \xmark & \xmark & \xmark \\

        NCTRL \citep{song2023temporally}
        & Gaussian / obs. & Autonomous & \cmark & \cmark & \cmark & \xmark \\

        iSDS \citep{balsells-rodas2024on}
        & PWL & Autonomous & \cmark & \cmark & \cmark & \xmark \\

        CtrlNS \citep{song2024causal}
        & Invertible & Obs.~dependant & \cmark & \xmark & \cmark & \xmark \\

        SDCI \citep{balsells-rodas25causal}
        & N/A & Recurrent & \cmark & \cmark & \xmark & N/A \\


        \midrule
        \textbf{$\Omega$SDS} (ours)
        & Invertible & Recurrent & \cmark & \cmark & \cmark & \cmark \\
        \bottomrule
    \end{tabular}}
    \label{tab:related_work_comparison}
    \vspace{-0.5em}
\end{table}
\section{Recurrent Switching Dynamical Systems}\label{sec:back_sds}

A recurrent Switching Dynamical System (rSDS), exemplified in Figure~\ref{fig:sds_pgm}, models sequential data by introducing continuous latent states $\z_{1:T}$ and discrete switching variables $\s_{1:T}$. At each time step $t$, the observation $\x_t\in\R^n$ is generated through a (noisy) transformation of the continuous latent variable $\z_t\in\R^m$. While recent nonlinear SDS approaches use additive-noise emissions \citep{dong2020collapsed,ansari2021deep}, we assume a flow-based emission similarly used in \citet{Sorrenson2020Disentanglement}:
\begin{assumptionbox}
\begin{assumption}[Emission model]\label{ass:emission}
    Observations $\x_t\in\R^n$ are generated via an invertible transformation  $\func$ of latent variables $\z_t\in\R^m$ and i.i.d. noise $\bm{\varepsilon}_t\in\R^d$, with $d = n-m$.
    \begin{equation}\label{eq:generation_flows}
        \x_t = \func(\z_t, \bm{\varepsilon}_t), \quad \bm{\varepsilon}_t \sim \mathcal{N}(0,\sigma_{\eps}^2I_d).
    \end{equation}
\end{assumption}
\end{assumptionbox}

\begin{wrapfigure}{r}{.38\linewidth}
    \centering
    \vspace{-1.4em}
    \includegraphics[width=\linewidth]{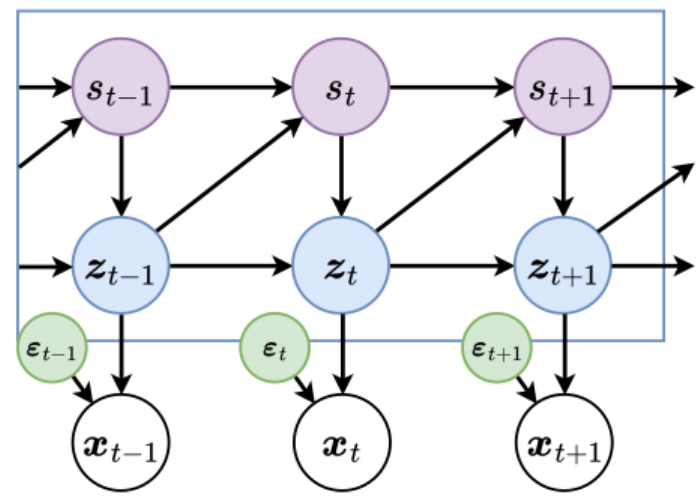}
    \vspace{-1.3em}
    \caption{The considered rSDS model. Observations are generated through an invertible mapping of recurrent switching features and exogenous noises.}
    \label{fig:sds_pgm}
    \vspace{-4em}
\end{wrapfigure}

The latent dynamics $\z_{1:T}$ follow a recurrent Markov Switching Model (rMSM) \citep{ephraim2005revisiting}. Therefore at time $t$, the discrete switch $s_t\in\{1,\dots,K\}$ determines the transition dynamics of $\z_t$, while the switching process receives feedback from past latent variables $\z_{t-1}$. Under first-order Markov assumptions, the joint distribution of $(\z_{1:T}, \s_{1:T})$ factorises as follows:
\begin{multline}\label{eq:transition_dynamics}
    p_{\mparam_{\z},\mparam_{\s}}(\z_{1:T}, \s_{1:T}) = \ps(s_1)\pz(\z_1|s_1) \\\prod_{t=2}^T \pz(\z_t\mid\z_{t-1}, s_t) \ps(s_t\mid s_{t-1}, \z_{t-1}).
\end{multline}
We make the following assumption on the latent transition dynamics $\pz(\z_t\mid\z_{t-1}, s_t)$.

\begin{assumptionbox}
\begin{assumption}[Analytic Gaussian autoregressive dynamics]\label{ass:gaussian_transition}
    The latent variables $\z_t \in \R^m$ follow a regime-dependent Gaussian autoregressive process:
    $$\z_t \mid s_t, \z_{t-1} \sim \N\Big(\z_t\mid \bm{m}_{s_t}(\z_{t-1}), \Sigma_{s_t}\Big), \quad \Sigma_{s_t} = \mathrm{diag}(\bm \sigma^2_{s_t})$$
    where each mean function $\bm{m}_{k}:\R^m\to\R^m$ is analytic, each covariance $\Sigma_{k}\succ 0$ is diagonal with $\bm \sigma^2_{k}$ denoting the diagonal, and both means $\bm{m}_{k}$ and/or covariances $\Sigma_{k}$ are distinct across regimes.
\end{assumption}
\end{assumptionbox}
This assumption is in the same format as in \citet{balsells-rodas2024on}, where results are established for non-recurrent-switching, i.e., $\ps(s_t\mid s_{t-1}, \z_{t-1})=\ps(s_t\mid s_{t-1})$. In contrast, we consider a common recurrent formulation in rSDS \citep{linderman2017bayesian,dong2020collapsed,ansari2021deep} to increase the flexibility of the switching process, where switches can adapt to latent distributional changes rather than following fixed Markov transition matrices.
\begin{assumptionbox}
\begin{assumption}[Recurrent switching process]\label{ass:recurrent}
    The switches are dependent on previous latents:
    $$\ps(s_t=k \mid s_{t-1}=j,\z_{t-1}) = Q_{jk}(\z_{t-1}), \quad Q:\R^{m} \to [0, 1]^{K\times K}, \quad \sum_{k=1}^K Q_{jk}(\z_{t-1})=1,$$
    where $Q$ is continuous almost everywhere in $\R^m$.
\end{assumption}
\end{assumptionbox}
A.e. continuity of $Q$ is a weak regularity assumption: it allows discontinuities on lower-dimensional boundaries, such as switching surfaces or collision boundaries, e.g., in bouncing balls dynamics.

\paragraph{Identifiability desiderata in SDSs.}  Let $\bm\Theta$ be a family of parameters which parametrise a recurrent switching dynamical system under Eqs.~\eqref{eq:generation_flows} and~\eqref{eq:transition_dynamics}, where $\mparam:=\{\mparam_f, \mparam_{\z}, \mparam_{\s},m,K\}$, $\mparam\in\bm\Theta$. We aim to establish identifiability for every parameter of $\mparam$: (i) the latent dimensionality $m$, (ii) the number of regimes $K$, (iii) the distribution of the latent variables $\z_t$, (iv) the autoregressive components $\pz(\z_t\mid \z_{t-1}, s_t)$, and (v) the switching process $\ps(s_t\mid s_{t-1}, \z_{t-1})$. Additionally, we denote $p_{\mparam}(\x_{1:T})$ and $p_{\mparam_{\z}}(\z_{1:T})$ the observational and latent distributions induced by the model. Notably, our emission structure allows including the latent dimensionality $m$ in the target equivalence class, which is non-standard in the literature \citep{song2023temporally,balsells-rodas2024on,song2024causal}.

\begin{definitionbox}
\begin{definition}[Latent identifiability up to affine transformations]\label{def:affine_identifiability} 
The latent variables $\z$ are identifiable up to affine transformations, if for any $\mparam,\mparam'\in\bm\Theta$ such that $p_{\mparam}(\x_{1:T})=p_{\mparam'}(\x_{1:T})$, $\forall \x_{1:T}\in\R^{Tn}$, we have $m=m'$ and there exists an invertible affine transformation $\psi:\R^m\to\R^m$ such that $p_{\mparam_{\z}} = \psi^{(T)}_{\#}p_{\mparam'_{\z}}$, where $\psi^{(T)}:\R^{Tm}\to\R^{Tm}$ is defined by $\psi^{(T)}(\z_{1:T}):= (\psi(\z_1),\dots,\psi(\z_T))$.
\end{definition}
\end{definitionbox}
\begin{definitionbox}
\begin{definition}[Transition identifiability up to permutations]\label{def:perm_identifiability}
The transition model is identifiable up to permutations if for any $\mparam,\mparam'\in\bm\Theta$ such that $p_{\mparam_{\z}}(\z_{1:T})=p_{\mparam'_{\z}}(\z_{1:T})$, $\forall \z_{1:T}\in\R^{Tm}$, we have $K= K'$ and there exists a permutation $\pi\in S_K$ such that for any $k\in\{1,\dots K\}$, $\pz(\z_t\mid\z_{t-1}, s_t=k) = p_{\mparam'_{\z}}(\z_t\mid\z_{t-1}, s_t=\pi(k))$, and for any $k,j\in\{1,\dots,K\}$,  $$\ps(s_t=k\mid s_{t-1}=j, \z_{t-1}) = p_{\mparam'_{\s}}(s_t=\pi(k)\mid s_{t-1}=\pi(j), \z_{t-1}).$$
\end{definition}
\end{definitionbox}
Def.~\ref{def:affine_identifiability} concerns identification of the latent representation up to affine transformations, while Def.~\ref{def:perm_identifiability} targets the regime-dependent dynamics once the continuous latent space is fixed. Since the rSDS is composed of several distinct elements, establishing end-to-end identifiability is a challenging task. 

\section{Identifiability Theory for Recurrent Switching Dynamical Systems}\label{sec:method}

In this section, we present our main identifiability results as follows: \vspace{-0.5em}
\begin{enumerate}[leftmargin=1.5em,itemsep=0pt]
  \setlength{\itemsep}{1pt}
  \setlength{\parskip}{0pt}
  \setlength{\parsep}{0pt}
    \item[i)] \textbf{Latent variable identifiability.} We establish identifiability of latent variable distribution $\pz(\z_{1:T})$ up to affine transformations under invertible emissions in Section~\ref{sec:first_stage}.
    \item[ii)] \textbf{rMSM identifiability.} We then establish identifiability of the transition dynamics up to permutations in Section~\ref{sec:stage_2}.
    \item[iii)] \textbf{Further disentangling latent variables.} Finally, we impose regime-dependent assumptions that reduce the affine ambiguity of latent variables to permutation and scaling in Section~\ref{sec:stage_3}.
    \vspace{-0.5em}
\end{enumerate}
Importantly, each stage may be of independent interest, as we significantly strengthen previous results. 
For each result, we first state the additional assumptions required, followed by the corresponding theoretical result. We then provide a proof sketch and conclude with a discussion of its implications. 


\subsection{Latent Space Identifiability up to Affine Transformations}\label{sec:first_stage}

We first establish the affine identifiability of the continuous latent variables. Since we keep general invertible emissions, identifiability must come from the latent transition dynamics. The following assumptions ensure sufficient regime separation and local variability.

\begin{assumptionbox}
\begin{assumption}[Sticky switching]\label{ass:sticky_switch}
There exists a non-empty open set $\mathcal{Z}\subseteq\R^m$ such that
    \begin{equation}\label{eq:stickiness}
        Q_{kk}(\z) \geq 1 - \epsilon(\z), \quad \epsilon(\z)\in[0, \tfrac12), \ \forall k\in\{1,\dots,K\}, \ \forall \z\in\mathcal{Z},
    \end{equation}
    where $\epsilon:\mathcal{Z}\to [0, \tfrac12)$ is continuous in $\mathcal{Z}$.
\end{assumption}
\end{assumptionbox}
\begin{assumptionbox}
\begin{assumption}[Transition separation]\label{ass:posterior_domi}
    Furthermore, there exist $\z^*\in\mathcal{Z}$, a constant $\ell^*>0$, and an integer $T^*\geq 2$ such that, for any regime $i\in\{1,\dots,K\}$, there exists latent history
    $\z_{1}^{(i)}(\z^*),\dots, \z_{T^*}^{(i)}(\z^*) \in\mathcal{Z},\ \z_{T^*}^{(i)}(\z^*)=\z^*,$ for which
\begin{equation}\label{eq:likelihood_margin}
    \frac{\N( \z_{\tau}^{(i)}(\z^*)|  \z_{\tau-1}^{(i)}(\z^*),s_\tau=i )}{\max_{j\neq i}\N( \z_{\tau}^{(i)}(\z^*)|  \z_{\tau-1}^{(i)}(\z^*), s_\tau=j )} > e^{\ell^*},\quad \tau=2,\dots,T^*-1, \quad \ell^* > \log\frac{1+\epsilon^*}{1 - \epsilon^*},
\end{equation}
where $\epsilon^* = \max \{\epsilon(\z_{1}^{(i)}(\z^*)), \dots, \z_{T^*-1}^{(i)}(\z^*)\}$, and $T^*$ is chosen large enough so that
\begin{equation}
    p(s_{T^*}=i\mid \z^{(i)}_{1:T^*}(\z^*)) > \frac12, \quad \forall i\in\{1,\dots,K\}.
\end{equation}
\end{assumption}
\end{assumptionbox}
\begin{assumptionbox}
\begin{assumption}[Covariance separation]\label{ass:regime_domi}
    For each regime $k\in \{1,\dots,K\}$, there exists at least one dimension $i_k\in\{1,\dots,m\}$ such that $\sigma_{k,i_k} \geq \sigma_{k',i_k}$ for any $k'\neq k$.
\end{assumption}
\end{assumptionbox}
\begin{assumptionbox}
\begin{assumption}[Locally nondegenerate regime]\label{ass:nondegeneratus}
    There exist a regime $1 \leq k_0 \leq K$ and a point $\z^*\in\R^m$ such that the determinant of the Jacobian of $\bm m_{k_0}(\z^*)$ is non-zero: $\det J_{\bm m_{k_0}}(\z^*) \neq 0.$
\end{assumption}
\end{assumptionbox}
Under these assumptions, we establish latent variable identifiability up to affine transformations.

\begin{theorembox}
\begin{theorem}\label{thm:affine_identifiability}
    Under assumptions (\ref{ass:emission}--\ref{ass:nondegeneratus}), the latent representation of the rSDS (Eqs.~\eqref{eq:generation_flows},\eqref{eq:transition_dynamics}) is identifiable up to affine transformations (Def. \ref{def:affine_identifiability}), if one of the following conditions holds: \vspace{-0.5em} 
    \begin{itemize}[leftmargin=2.25em,noitemsep]
  \setlength{\itemsep}{1pt}
  \setlength{\parskip}{0pt}
  \setlength{\parsep}{0pt} 
        \item[(i)] the number of regimes $K$ is known.
        \item[(ii)] $K$ is unknown, but $Q:\R^{m}\to [0,1]^{K\times K}$ is analytic.
    \end{itemize}
\end{theorem}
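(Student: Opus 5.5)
We first reduce the observational model to a statement about the latent process, and then obtain affine identifiability from a second-order structure of the latent log-density.

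\textbf{Reduction to the latent process.} Under Assumption~\ref{ass:emission}, $\func$ is a diffeomorphism from $\R^m\times\R^d$ onto $\R^n$, and the noise is i.i.d.\ Gaussian and independent of $\z_{1:T}$. Hence
\[
p_{\mparam}(\x_{1:T})=\prod_t |\det J_{\func^{-1}}(\x_t)|\, p_{\mparam_{\z}}(\z_{1:T})\prod_t \N(\eps_t\mid 0,\sigma_{\eps}^2 I).
\]
Given two equivalent parameters $\mparam,\mparam'$, define $h=\func'^{-1}\circ\func$. This is an invertible map from $(\z,\eps)$ to $(\z',\eps')$, applied identically at every time step, and it pushes the joint law of $(\z_{1:T},\eps_{1:T})$ to that of $(\z'_{1:T},\eps'_{1:T})$.

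\textbf{Step 1: time-separability and block structure of $h$.} Take the log of the density equality and differentiate mixedly in the components at two different times $t\neq t'$. The noise and Jacobian terms are separable across time, so they vanish, and we get
\[
\partial_{u_t}\partial_{u_{t'}} \log p_{\z}=J_h^\top(\cdots)J_h
\]
for the corresponding latent part. The noise coordinates carry no temporal dependence, whereas Assumption~\ref{ass:nondegeneratus} (and analyticity of $\bm m_k$) makes the cross-time Hessian $\partial_{\z_{t-1}}\partial_{\z_t}\log p(\z_{1:T})$ nondegenerate somewhere. Comparing ranks of these cross-time Hessians gives two conclusions: $m=m'$, and the latent component $h_{\z}$ of $h$ does not depend on $\eps$. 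Thus $\z'_t=h_{\z}(\z_t)$ for a diffeomorphism $h_{\z}$, and the latent laws match under $h_{\z}^{(T)}$.

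\textbf{Step 2: localising to a single Gaussian regime.} Restrict attention to $\mathcal Z$ around $\z^*$. By Assumptions~\ref{ass:sticky_switch} and~\ref{ass:posterior_domi}, along the histories $\z^{(i)}_{1:T^*}(\z^*)$ the posterior over $s_{T^*}$ is dominated by regime $i$. Because of continuity of $Q$ (a.e.) and of $\epsilon$, together with the margin $\ell^*>\log\frac{1+\epsilon^*}{1-\epsilon^*}$, this dominance persists on a neighbourhood. I would then write the density of $\z_{T^*+1}\mid \z_{1:T^*}$ as a mixture $\sum_k w_k(\z_{1:T^*})\N(\cdot\mid \bm m_k(\z_{T^*}),\Sigma_k)$ with dominant weight on $i$. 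Assumption~\ref{ass:regime_domi} ensures that the dominant regime's Gaussian controls the tail behaviour in direction $i_k$. From this I can extract, via a limit in $\z_{T^*+1}$ along that direction or via analyticity, that $\log p(\z_{T^*+1}\mid \z_{T^*},s=i)$ is identified as an exact Gaussian conditional for both parametrisations.

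\textbf{Step 3: second-derivative equation forcing affinity.} Plug $\z'=h_{\z}(\z)$ into the resulting Gaussian-autoregressive conditional identity:
\[
-\tfrac12\|\Sigma_i^{-1/2}(\z_{t}-\bm m_i(\z_{t-1}))\|^2+\log|\det J_h(\z_t)|=-\tfrac12\|\Sigma_{j}'^{-1/2}(h(\z_t)-\bm m'_j(h(\z_{t-1})))\|^2+c.
\]
Differentiate in $\z_{t-1}$ and then $\z_t$. The left side gives $\Sigma_i^{-1}J_{\bm m_i}(\z_{t-1})$, which is invertible near $\z^*$ by Assumption~\ref{ass:nondegeneratus}. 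The right side gives $J_h(\z_t)^\top\Sigma_j'^{-1}J_{\bm m'_j\circ h}(\z_{t-1})$. Fixing $\z_{t-1}$ shows that $J_h(\z_t)$ is constant in $\z_t$ on an open set, so $h$ is affine there. Analyticity then propagates affinity globally: the relevant densities are analytic in the latents, and $h$ is the same map everywhere. This yields Def.~\ref{def:affine_identifiability}.

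\textbf{Case (i) versus (ii), and the main obstacle.} In case (ii), knowledge of $K$ is replaced by analyticity of $Q$: mixture weights analytic in $\z$ let us match components by linear independence of Gaussians with distinct parameters, in the Yakowitz-type sense, without fixing $K$. In case (i), $K$ is known and the matching uses counting directly. I expect the main obstacle to be Step 2: rigorously isolating a single regime's conditional from a recurrent mixture when the weights depend on the latent history. My first attempt would be to bound the non-dominant weights uniformly on a neighbourhood using the likelihood-ratio margin. I would then argue via Assumption~\ref{ass:regime_domi} that the dominant Gaussian's Hessian is recovered in a suitable limit.
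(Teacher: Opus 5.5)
There is a genuine gap, and it is in your Step~2, which is the heart of the theorem.

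\textbf{What Step~3 needs and Step~2 does not supply.} Step~3 differentiates an \emph{exact} identity
\[
g_i(\z_{t+1}\mid \z_t)=\left|\det J_{h_{\z}}(\z_{t+1})\right|\, g'_{j}\big(h_{\z}(\z_{t+1})\mid h_{\z}(\z_t)\big),
\]
valid for all $\z_{t+1}$ and for $\z_t$ in an open set. Nothing in Step~2 produces such an identity.

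\begin{itemize}
\item Dominance of regime $i$ along one history still leaves $p(\z_{t+1}\mid\z_{1:t})$ a genuine mixture with non-negligible weight on other regimes. Bounding those weights uniformly gives approximate statements at best.
\item A tail limit along $\bm e_{i_k}$ isolates regime $k$ regardless of the weights, but only asymptotically and only in $\mparam$'s coordinates. Under the still-nonlinear $h_{\z}$, that ray becomes an unknown curve in $\mparam'$'s latent space. You therefore cannot tell which $\mparam'$-component it matches, nor upgrade an asymptotic equivalence to an identity at finite points.
\item Yakowitz-type matching across the two models is also unavailable here, because $\mparam'$'s Gaussians pulled back through a nonlinear $h_{\z}$ are not Gaussian.
\end{itemize}

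\textbf{The missing idea.} It is how the paper actually uses Assumptions~\ref{ass:sticky_switch}--\ref{ass:posterior_domi}. Take $K$ histories that share the same endpoint $\x_t$, each dominated by a different regime. The matrix of one-step regime probabilities then factorises as $A_{\bm H}(\z_t)=F_{\bm H}(\z_t)Q(\z_t)$, with both factors strictly diagonally dominant and hence invertible on an open set. Inverting gives $\Phi(\x_{t+1}\mid\x_t)=\Pi(\x_t)\Phi'(\x_{t+1}\mid\x_t)$. From there:
\begin{itemize}
\item linear independence of one model's pushforward components makes $\Pi$ full row rank, so $K\le K'$;
\item Assumption~\ref{ass:regime_domi}, through the tail-dominance lemma applied to each model, makes $\Pi$ and $\Pi^{-1}$ entrywise nonnegative;
\item together with $\Pi\bm 1=\bm 1$, this forces $\Pi$ to be a permutation, which is locally constant by continuity.
\end{itemize}
This is also where cases (i) and (ii) really differ. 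Known $K$ makes $\Pi$ square. Analyticity of $Q$ instead makes both models' weight matrices invertible almost everywhere, so the symmetric argument at a common $\x_t$ gives $K'\le K$. Your account of the case split supplies neither mechanism.

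\textbf{Step~1 is not justified as written.} The cross-time Hessian of $\log p_{\mparam_{\z}}(\z_{1:T})$ is that of a log-mixture over regime paths and contains posterior-covariance terms. Assumption~\ref{ass:nondegeneratus} only constrains $J_{\bm m_{k_0}}$, so ``nondegenerate somewhere'' is asserted, not derived. Moreover, every such cross-time derivative differentiates $Q$ through the predictive weights, whereas in case (i) $Q$ is only a.e.\ continuous. The paper runs the same rank/kernel argument ($m=m'$ and $\nabla_{\eps}h_{\z}=\bm 0$) only after components are matched. It does so on the single-component mixed Hessian $\operatorname{blockdiag}(\Sigma_k^{-1}J_{\bm m_k}(\z_t),\bm 0)$, where $Q$ never appears.

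\textbf{Step~3.} Your computation coincides with the paper's last step, but two details need adjusting.
\begin{itemize}
\item You do not need (and do not have) analyticity of $h$. Once the matched identity holds for every $\z_{t+1}\in\R^m$, constancy of $J_{h_{\z}}$ is global at once.
\item You do need analyticity of $\det J_{\bm m_{k_0}}$. It transfers nondegeneracy from the point in Assumption~\ref{ass:nondegeneratus} to the open set where components were matched. That set lies around the $\z^*$ of Assumption~\ref{ass:posterior_domi}, which is generally a different point.
\end{itemize}
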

\end{theorembox}
See Appendix \ref{app:main_proof} for proof. When $K$ is known (e.g., controlled physical systems), identifiability reduces to aligning the mixture components, where we provide a proof sketch below. For unknown $K$ (e.g., discovering behavioural patterns in videos), recovering the number of regimes requires additional structure, where condition (ii) in Thm.~\ref{thm:affine_identifiability} provides a sufficient one. 

\emph{Proof sketch (known $K$)}: Define the pushforward measure of $p(\z_{t+1}, \eps_{t+1} \mid \z_{t}, s_t=k)$ through $f_{\mparam_f}$:
\begin{equation*}
    \phi_k(\x_{t+1}| \x_t):=(f_{\mparam_f})_{\#} \big( \N(\z_{t+1} \mid  m_k(\z_{t}), \Sigma_{k})\, p_{\eps}(\eps_{t+1}) \big).
\end{equation*} Considering all $k\in\{1,\dots,K\}$, define $\Phi(\x_{t+1}| \x_t) := (\phi_1(\x_{t+1}| \x_t), \dots, \phi_K(\x_{t+1}| \x_t))^\top$ as the pushforward measures of the $K$ Gaussian components in vector form (define $\Phi'(\x_{t+1}| \x_t)$ similarly).\vspace{-.25em}
\begin{enumerate}[leftmargin=2.25em,itemsep=0pt]
  \setlength{\itemsep}{1pt}
  \setlength{\parskip}{0pt}
  \setlength{\parsep}{0pt} 
    \item[\textbf{I.}]  Write the predictive distribution $p_{\mparam}(\x_{t+1}\mid\x_{1:t})$ as a mixture in terms of $\Phi(\x_{t+1}| \x_t)$:\vspace{-.15em}
    \begin{equation}
        p_{\mparam}(\x_{t+1}\mid\x_{1:t}) = \sum_{k=1}^K p(s_{t+1}=k \mid \x_{1:t})\phi_k(\x_{t+1}\mid \x_{t}).
    \end{equation}
    \vspace{-.75em}
    
    Under  (\ref{ass:sticky_switch}--\ref{ass:posterior_domi}) and using Lemma \ref{lemma:concentration_to_floor}, we can choose $K$ distinct histories $(\x_1^{(k)},\dots,\x_{t-1}^{(k)},\x_t)$ that share the same endpoint $\x_t$ but induce different dominant posterior distributions satisfying $p(s_{t+1}=k \mid \x_{1:t})>\tfrac12$. This allows the construction of a full-rank linear system in terms of $\x_t$. 
    We then show there exists an invertible matrix $\Pi(\x_t)$ such that
    \begin{equation} 
        p_{\mparam}(\x_{1:T})=p_{\mparam'}(\x_{1:T}) \quad \Rightarrow \quad \Phi(\x_{t+1}| \x_t) = \Pi(\x_t)\Phi'(\x_{t+1}| \x_t).
    \end{equation}
    \item[\textbf{II.}] Under \eqref{ass:regime_domi}, Prop~\ref{prop:sufficient_conditions_nonnegativity}, and Lemma~\ref{lemma:nonnegative_entries}, the matrix $\Pi(\x_t)$ is a constant permutation matrix.
    \item[\textbf{III.}] Since each component in model $\mparam$ corresponds to a permuted components in model $\mparam'$, pick regime $k_0$ from (\ref{ass:nondegeneratus}). Using an iVAE-style argument, we recover the latent dimensionality $m$ and establish that the latent dynamics are identifiable up to affine transformations (Def. \ref{def:affine_identifiability}).
\end{enumerate}


As discussed, establishing the above result requires restricting the flexibility of the nonlinear switching prior through (\ref{ass:sticky_switch}--\ref{ass:nondegeneratus}). (\ref{ass:regime_domi}) imposes a separation condition on the regime-dependent variances, preventing any single regime from dominating across multiple directions. (\ref{ass:nondegeneratus}) requires one regime for which the Jacobian of the mean transition is locally non-singular, so that variations in latent space are distinguishable. This condition is closely related to sufficient variability assumptions used in related ICA literature \citep{hyvarinen2017nonlinear, khemakhem2020variational,yao2022temporally}, but adapted to our setting where variability arises through the means of Gaussian transitions. While these assumptions can be encouraged through generative model design (e.g., via neural network parametrisations), (\ref{ass:sticky_switch}--\ref{ass:posterior_domi}) are less interpretable. 

The intuition behind (\ref{ass:sticky_switch}--\ref{ass:posterior_domi}) is that we have histories $\z_{1:t}^{(k)}$ such that the posterior distribution $p(s_t|\z_{1:t}^{(k)})$ concentrates on a certain regime $k$, and by Lemma~\ref{lemma:concentration_to_floor}, this is possible through sticky switches (\ref{ass:sticky_switch}), and likelihood separation across regimes (\ref{ass:posterior_domi}). While \eqref{ass:sticky_switch} is a common modeling choice in regime-switching models \citep{fox2008hdp,linderman2017bayesian,glaser2020recurrent}, \eqref{ass:posterior_domi} is analogous to signal-strength conditions used in change-point detection. For example, \citet{yu2023note} assumes non-zero jumps between changes. In our case, \eqref{ass:posterior_domi} imposes consistent regime-dependent likelihood margins along a history. A possible limitation to \eqref{ass:posterior_domi} is the difficulty in maintaining large enough margins during multiple transitions such that the posterior domination is satisfied. However, stickiness plays a key role in weakening the required margin. We expand this discussion in Appendix~\ref{app:dominance_assumptions}, where we show that by Corollary~\ref{cor:half_dominance_time} the sufficient history length $T^*$ can be determined from the initial posterior $p(s_1|\z_1)$, the margin $\ell^*$, and stickiness $\epsilon^*$. Specifically for uniform initial posterior over $K$ regimes, $p(s_1=k|\z_{1})=\tfrac1K,\ k\in\{1,\dots,K\}$, posterior dominance is possible after one transition ($T^*=2$) if
$\ell^* > \log\frac{K-1+\epsilon^*}{1-\epsilon^*}$.
Therefore, (\ref{ass:sticky_switch}--\ref{ass:posterior_domi}) can be interpreted as a design choice on a trade-off between transition separation and switching stickiness. We defer to more details in Appendix~\ref{app:dominance_assumptions}, and below exemplify nonlinear dynamics.

\begin{wrapfigure}{r}{.43\linewidth}
\vspace{-1.5em}
        \centering
        \resizebox{\linewidth}{!}
        {\begin{tikzpicture}
\begin{axis}[
    width=10cm,
    height=8cm,
    axis y line=left,
    axis x line=middle,
    xmin=-1.8, xmax=1.8,
    ymin=-1.0, ymax=1.25,
    samples=300,
    domain=-1.8:1.8,
    xlabel={$z_t$},
    ylabel={$z_{t+1}$},
    ylabel style={yshift=-10pt},
    xtick={-1.57079632679,0,1.57079632679},
    xticklabels={$-\frac{\pi}{2}$,$0$,$\frac{\pi}{2}$},
    ytick={-1,0, 0.5,1},
    legend style={draw=none, fill=none, at={(0.98,0.98)}, anchor=north east},
    clip=true
]

\addplot[name path=bluemean, blue, thick] {cos(deg(x))};
\addplot[name path=blueupper, draw=none] {cos(deg(x)) + 0.3};
\addplot[name path=bluelower, draw=none] {cos(deg(x)) - 0.3};
\addplot[blue, fill=blue, fill opacity=0.10, draw=none]
fill between[of=blueupper and bluelower];

\addplot[name path=orangemean, orange!90!black, thick] {cos(deg(x - pi/2))};
\addplot[name path=orangeupper, draw=none] {cos(deg(x - pi/2)) + 0.3};
\addplot[name path=orangelower, draw=none] {cos(deg(x - pi/2)) - 0.3};
\addplot[orange!90!black, fill=orange!90!black, fill opacity=0.10, draw=none]
fill between[of=orangeupper and orangelower];

\addplot[name path=greenmean, green!60!black, thick] {cos(deg(x + pi/2))};
\addplot[name path=greenupper, draw=none] {cos(deg(x + pi/2)) + 0.3};
\addplot[name path=greenlower, draw=none] {cos(deg(x + pi/2)) - 0.3};
\addplot[green!50!black, fill=green!60!black, fill opacity=0.10, draw=none]
fill between[of=greenupper and greenlower];
\addplot[
    green!60!black,
    fill=green!60!black,
    fill opacity=0.15,
    draw=none
] fill between[of=greenupper and greenlower];
\addplot[only marks, mark=*, mark size=2.5pt, blue] coordinates {(0,1)};
\addplot[only marks, mark=*, mark size=2.5pt, orange!90!black] coordinates {(pi/2,1)};
\addplot[only marks, mark=*, mark size=2.5pt, green!60!black] coordinates {(-pi/2,1)};

\addplot[densely dashed, blue!70, thick] coordinates {(0,0) (0,1)};
\addplot[densely dashed, orange!90!black, thick] coordinates {(pi/2,0) (pi/2,1)};
\addplot[densely dashed, green!60!black, thick] coordinates {(-pi/2,0) (-pi/2,1)};
\addplot[densely dashed, black!100, very thick] coordinates {(-1.8,1) (1.2*pi/2,1)};
\node[blue, above,font=\small] at (axis cs:0,1) {$z_t^{(1)}=0$};
\node[orange!50!black, above,font=\small] at (axis cs:pi/2,1) {$z_t^{(2)}=\frac{\pi}{2}$};
\node[green!50!black, above,font=\small] at (axis cs:-pi/2 + 0.1,1) {$z_t^{(3)}=-\frac{\pi}{2}$};

\draw[->, blue, densely dashed, very thick]
    (axis cs:-0.03,0.3) to[out=-120,in=120] (axis cs:-0.5,-0.65);
\node[blue, below,font=\large] at (axis cs:0.,-0.65) {$\ell^{(1)} = \frac{\min_{j\neq 1}(m_1(0) - m_j(0))^{2} }{2\sigma^2} =  5$};

\draw[->, orange, densely dashed, very thick]
    (axis cs:0.03 + pi/6,0.6) to[out=-10,in=90] (axis cs:+0.75,0.25);
\node[orange!50!black, below,font=\large] at (axis cs:0.9,+0.25) {$\ell^{(2)} =  0.67$};

\addplot[densely dashed, black, very thick] coordinates {(-1.8,0.5) (1.8,0.5)};


\addplot[only marks, mark=*, mark size=2.5pt, blue] coordinates {(pi/3,0.5)};
\addplot[only marks, mark=*, mark size=2.5pt, orange!90!black] coordinates {(pi/6,0.5)};
\addplot[only marks, mark=*, mark size=2.5pt, green!60!black] coordinates {(-pi/6,0.5)};

\addplot[densely dashed, blue!70, thick]
coordinates {(pi/3,0.5) (pi/3,{sqrt(3)/2})};

\addplot[densely dashed, orange!90!black, thick]
coordinates {(pi/6,0.5) (pi/6,{sqrt(3)/2})};

\addplot[densely dashed, green!60!black, thick]
coordinates {(-pi/6,0.5) (-pi/6,{sqrt(3)/2})};

\node[blue, above, font=\small] at (axis cs:0.25+pi/3,0.5) {$z_t^{(1)}=\frac{\pi}{3}$};
\node[orange!50!black, above,font=\small] at (axis cs:-.23+pi/6,0.5) {$z_t^{(2)}=\frac{\pi}{6}$};
\node[green!50!black, below,font=\small] at (axis cs:-.23-pi/6,0.5) {$z_t^{(3)}=-\frac{\pi}{6}$};

\end{axis}
\end{tikzpicture}}
        \vspace{-1em}
        \caption{Cosine transition means for the $K=3$ example. For $\z_{t+1}=1,0.5$ we find positive margins $\ell^{*}=5,0.67$, respectively.}
        \label{fig:cosine_example}
        \vspace{-2.75em}
\end{wrapfigure}
\paragraph{Example.} We present an example that further explains (\ref{ass:posterior_domi}). Consider $m=1$, $K=3$, equal variances $\sigma^2=0.1$, and cosine transitions:
\begin{equation*}
    m_k(z) = \cos\left(z + \frac{\pi(k-1)}2\right), \  k=1,2,3.
\end{equation*}
The transition means are shown in Figure \ref{fig:cosine_example}. Choosing the common endpoint $z_2=1$, an aligned set of predecessors is $z_1^{(1)}=0$, $z_1^{(2)}=\frac{\pi}{2}$, and $z_1^{(3)}=-\frac{\pi}{2}$, for which $m_k(z_1^{(k)})=1$, $k=1,2,3$. As illustrated in the figure, the minimum likelihood margin is $\ell^*=5$. Since (\ref{ass:posterior_domi}) requires existence, this single construction is sufficient. Under a uniform initial posterior, the margin is large enough that any $\epsilon^*<\tfrac12$ satisfies Lemma~\ref{lemma:concentration_to_floor} with $T^*=2$. In contrast, choosing a less separated endpoint $z_2=0.5$ yields a margin $\ell^*=0.67$, where no $\epsilon^*>0$ can satisfy  $\ell^* > \log\frac{K-1+\epsilon^*}{1-\epsilon^*}$ to obtain one-step dominance. This illustrates that stronger local separation permits shorter histories, while weaker separation must be compensated either by stronger stickiness or longer histories.


\subsection{Identifying Recurrent Prior Dynamics}\label{sec:stage_2}

Given that the Gaussian family is closed under affine transformations, we continue our analysis by working directly on latent dynamics driven by rMSMs. We present the following identifiability result.
\begin{theorembox}
\begin{theorem}\label{thm:msm_transition_identifiability}
The rMSM is identifiable up to permutations under the following conditions:
\begin{enumerate}[leftmargin=2.25em,itemsep=0pt]
  \setlength{\itemsep}{1pt}
  \setlength{\parskip}{0pt}
  \setlength{\parsep}{0pt}
    \item[(i)] Under Assumption~(\ref{ass:gaussian_transition}), the continuous latent transitions $\pz(\z_t\mid \z_{t-1}, s_t)$   are identifiable.
    \item[(ii)] Under Assumptions (\ref{ass:gaussian_transition}-\ref{ass:posterior_domi}) and further assuming $Q:\R^{m}\to [0,1]^{K\times K}$ is analytic, the rMSM is identifiable following Definition~\ref{def:perm_identifiability}.
\end{enumerate}
\end{theorem}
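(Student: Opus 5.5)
The proof follows the same mixture-alignment template as the proof sketch of Thm.~\ref{thm:affine_identifiability}. Now, however, we work directly in latent space: the emission has been peeled off, so $\phi_k$ is replaced by the Gaussian $\N(\z_{t+1}\mid \bm m_k(\z_t),\Sigma_k)$. Suppose $p_{\mparam_{\z}}(\z_{1:T})=p_{\mparam'_{\z}}(\z_{1:T})$.

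\textbf{Part (i).} For every history, the predictive density is the finite mixture
\[
p(\z_{t+1}\mid \z_{1:t})=\sum_{k=1}^K p(s_{t+1}=k\mid \z_{1:t})\,\N(\z_{t+1}\mid \bm m_k(\z_t),\Sigma_k).
\]
Fix $\z_t$. The Gaussian components $\{\N(\cdot\mid \bm m_k(\z_t),\Sigma_k)\}_k$ are pairwise distinct whenever $(\bm m_k(\z_t),\Sigma_k)$ differ. By analyticity of the means, the set where two regimes with equal covariance have coinciding means is either all of $\R^m$ (excluded by Assumption~\ref{ass:gaussian_transition}) or a closed null set. Outside this set we invoke Yakowitz--Spragins identifiability of finite Gaussian mixtures \citep{yakowitz1968identifiability}. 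Components carrying positive weight are matched, which yields a permutation $\pi_{\z_t}$ with $(\bm m_k(\z_t),\Sigma_k)=(\bm m'_{\pi_{\z_t}(k)}(\z_t),\Sigma'_{\pi_{\z_t}(k)})$. To ensure every component appears with positive weight, we take the union over all histories ending at $\z_t$, or average over $\z_{1:t-1}$.

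Next, we show that $\pi_{\z_t}$ is constant. The map $\z_t\mapsto\pi_{\z_t}$ is locally constant on the connected, open, dense complement of the null set, by continuity of the $\bm m_k$. If two permutations agreed on the mean functions on an open set, then analyticity would force those regimes' means to agree everywhere, so $\pi$ is global. This also gives $K=K'$ and identifies $\pz(\z_t\mid\z_{t-1},s_t)$ up to a permutation. Here $K$ is treated as the number of components that are distinct somewhere.

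\textbf{Part (ii).} After relabelling by $\pi$, we may assume the Gaussian components coincide, and it remains to identify $Q$. The forward recursion gives
\[
p(s_{t+1}=k\mid\z_{1:t})=\sum_j Q_{jk}(\z_t)\,\alpha_t(j),\qquad
\alpha_t(j)\propto \N(\z_t\mid \bm m_j(\z_{t-1}),\Sigma_j)\,p(s_t=j\mid \z_{1:t-1}).
\]
Linear independence of the distinct Gaussians (part (i)) implies that the predictive weights are identified for every history, so $\sum_j \alpha_t(j)\big(Q_{jk}-Q'_{jk}\big)(\z_t)=0$. There is a subtlety: $\alpha_t$ and $\alpha'_t$ themselves depend on $Q$ at earlier steps. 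We therefore argue inductively from $t=1$. The initial posterior $p(s_1\mid\z_1)$ is identified from the first-step mixture, and the filtering weights at step $t$ are then identified once $Q$ is matched on the points used previously.

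At $\z^*$ we use Assumptions~\ref{ass:sticky_switch}--\ref{ass:posterior_domi} and Lemma~\ref{lemma:concentration_to_floor}, which provide $K$ histories ending at $\z^*$ whose posteriors $\alpha^{(i)}$ have dominant entry $>\tfrac12$ at $i$. The matrix $A=[\alpha^{(1)},\dots,\alpha^{(K)}]^\top$ is then strictly diagonally dominant, hence invertible, so $Q(\z^*)=Q'(\z^*)$. By continuity of the margins and of $\epsilon$, the same construction works for all $\z$ in an open neighbourhood of $\z^*$ within $\mathcal Z$. Thus $Q=Q'$ on an open set, and analyticity of $Q$ extends the equality to all of $\R^m$. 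This gives the permuted equality required by Def.~\ref{def:perm_identifiability}.

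\textbf{Main obstacle.} The hard step is the circularity in part (ii): the posteriors $\alpha^{(i)}$ are built from $Q$ along the constructed histories, which lie in $\mathcal Z$ away from $\z^*$.

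The first approach I would try is to write the full path likelihood of each history as a polynomial identity in the entries of $Q$ and $Q'$ at the finitely many history points. I would then perturb the endpoint jointly, so that entire histories move in an open set. The identity $A(\z)\big(Q(\z)-Q'(\z)\big)=0$ should then hold on an open set where both models' posteriors are dominated, using the fact that dominance holds for both $\mparam$ and $\mparam'$ since both satisfy the assumptions.

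If this fails, the fallback is to exploit dominance directly. Equal predictive weights give $A(Q-Q')=0$, where $A$ comes from model $\mparam$ alone: equality of the joint densities lets us express the identified predictive weights through $\mparam$'s own filter, while $Q'$ enters only through the identified quantity. That removes the need for an induction.
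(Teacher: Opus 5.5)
Your proposal is correct in substance and follows the paper's route. For (i), you use pointwise Yakowitz identifiability off the analytic collision set, local constancy of the permutation, and the identity theorem to make it global. For (ii), you use the linear system $B(\z)=A(\z)Q(\z)$ with $A$ strictly diagonally dominant at the common endpoint, invertibility on a neighbourhood, and analytic continuation of $Q$.

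However, the ``main obstacle'' you single out is not a real obstacle. The time-induction you propose for it is unnecessary, and, as you suspect, circular: the history points in $\mathcal Z$ need not lie where $Q$ has already been matched. You already noted that the predictive weights $p(s_t\mid\z_{1:t-1})$ are identified for (almost) every history at every step $t$, by linear independence of the identified components at step $t$. After relabelling, the components also coincide. Bayes' rule then gives $p(s_t\mid\z_{1:t})=p'(s_t\mid\z_{1:t})$ outright, without ever referring to $Q$ at earlier times. Hence $A=A'$ and $B=B'$ are identified quantities, and $A(\z)\big(Q(\z)-Q'(\z)\big)=0$ follows immediately.

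This is exactly how the paper argues, and it is the step your fallback needs stated explicitly. Saying that $Q'$ ``enters only through the identified quantity'' yields only $AQ=A'Q'$; it gives $Q=Q'$ only once you also note $A'=A$. The polynomial-identity and joint-perturbation idea, and dominance for both models, are not needed. Your neighbourhood step via continuity of the margins works, though the paper's route is shorter: it uses continuity of $A(\z)$ in the endpoint.

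One slip in (i): the complement of the collision set need not be connected, e.g.\ $m_1(z)=z$, $m_2(z)=-z$ in $\R$. So local constancy alone does not make the permutation global. What actually does it is your subsequent analytic-continuation argument (the paper's), which does not need connectedness. Your remark about components with zero mixture weight is a point the paper passes over.
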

\end{theorembox}
\emph{Proof sketch.}
See Appendix~\ref{app:proof_identifiability_rmsm} for the full proof. We prove rMSM identifiability in two steps.
\begin{enumerate}[leftmargin=2.25em,itemsep=0pt]
  \setlength{\parskip}{0pt}
  \setlength{\parsep}{0pt}
    \item[(i)] Thm.~\ref{thm:msm_transition_identifiability_z_transitions}: By \citet{yakowitz1968identifiability} and (\ref{ass:gaussian_transition}), the transitions $\pz(\z_t\mid \z_{t-1}, s_t)$  are identifiable up to a permutation indexed by $\z_{t-1}$, which is constant from analyticity by (\ref{ass:gaussian_transition}).
    \item[(ii)] Thm.~\ref{thm:msm_transition_identifiability_ps}: Similarly to Theorem~\ref{thm:affine_identifiability}: Step \textbf{I}, we construct a system of equations in terms of $Q(\z_{t-1})$ by stacking $K$ histories, such that under (\ref{ass:sticky_switch}--\ref{ass:posterior_domi}) the system has a unique solution.
\end{enumerate}

    Our proof strategy for (i) extends \citet{balsells-rodas25causal}, 
    by generalising the transitions to analytic functions, and makes the above result compatible with identifying latent components of the rSDS by combining with the previous affine identifiability results (Def. \ref{def:affine_identifiability}). I.e., there exists a permutation $\pi\in S_k$, and an affine transformation $\psi(\z)=A\z+\Bb$, such that:
    \begin{equation}\label{eq:affine_mean_result}
        \bm m_k(\z) = A \bm m'_{\pi(k)}(A^{-1}(\z - \Bb)) + \Bb, \  Q_{jk}(\z) = Q_{\pi(j)\pi(k)}(A^{-1}(\z - \Bb)),\  k=1:K,\  \z\in\R^m.
    \end{equation}
%
For other rSDS designs, Thm.~\ref{thm:msm_transition_identifiability} is independent to Section~\ref{sec:first_stage} provided we establish identifiability up to affine transformations (Def~\ref{def:affine_identifiability}). For example, emission piece-wise linearity \citep{kivva2022identifiability} could replace (\ref{ass:sticky_switch}-\ref{ass:nondegeneratus}), and Theorem \ref{thm:msm_transition_identifiability} would still be valid to identify the transitions $\bm m_k$ as in Eq.~\eqref{eq:affine_mean_result}. However, for full identifiability including $Q(\z_{t-1})$, further restrictions would be necessary.

\subsection{Disentanglement: Resolving the Affine Indeterminacy}\label{sec:stage_3}

Our last stage consists of resolving the structure of the affine mapping $\psi(\z)=A\z + \Bb$ in Eq.~\eqref{eq:affine_mean_result} to establish identifiable disentangled representations and permit nonlinear ICA \citep{hyvarinen1999nonlinear}. This identifiability, presented in Thm.~\ref{lemma:disentanglement}, requires the following additional assumption.

\begin{assumptionbox}
\begin{assumption}\label{ass:disentanglement}
For a pair of regimes $k_1\neq k_2\in\{1,\dots,K\}$, with diagonal variances $\mathrm{diag}(\bm \sigma^2_{k_1}), \mathrm{diag}(\bm\sigma^2_{k_2})$, respectively, define the ratio vector $\bm r_{k_1,k_2}$ between variances:
\begin{equation}\label{eq:kivva_ratio_vector}
\bm r_{k_1,k_2} := \left(\frac{\sigma_{k_1,1}}{\sigma_{k_2,1}}, \quad \dots, \quad\frac{\sigma_{k_1,m}}{\sigma_{k_2,m}}\right) = \frac{\bm \sigma_{k_1}}{\bm \sigma_{k_2}}\in\R^m.
\end{equation}
For every pair of regimes $k_1\neq k_2,\ k_1,k_2\in\{1,\dots,K\}$, stack ratios $\bm r_{k_1,k_2} \in\R^m$:
\begin{equation}\label{eq:vectorised_kivva}
    R\in\R^{\binom{K}{2}\times m}, \quad R_{(k_1,k_2),i}=\bm r_{k_1,k_2,i}, \quad i\in\{1,\dots,m\}.
\end{equation}
That is, each row of $R$ is a ratio vector formed by combining regime pairs. We assume that $R$ has no two identical columns, i.e., for $i\neq j\in\{1,\dots,m\}$, there exists $(a,b)$ such that  $R_{(a,b),i}\neq R_{(a,b),j}$.
\end{assumption}
\end{assumptionbox}
%
\begin{theorembox}
\begin{theorem}\label{lemma:disentanglement}
    Let $K\geq 2$. Let $\z'_t = A\z_t+\Bb$, where $A\in\R^{m\times m}$ is invertible and $\Bb\in\R^m$. Under Assumption~(\ref{ass:disentanglement}), from $\z'_t$, we can recover an invertible matrix $A'\in\R^{m\times m}$ such that $(A')^{-1}A = PD$, where $P$ is a permutation matrix, and $D$ is diagonal.
\end{theorem}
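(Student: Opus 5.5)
The plan is to use the regime-dependent noise covariances, which are visible from $\z'_t$ once the transitions have been identified through Theorem~\ref{thm:msm_transition_identifiability}. Under $\z'_t = A\z_t+\Bb$, regime $k$ in the primed space has Gaussian transition noise with covariance $\Sigma'_k = A\Sigma_k A^\top$, and since Gaussians are closed under affine maps, each $\Sigma'_k$ is identifiable from the distribution of $\z'_{1:T}$ up to the regime permutation. Write $\Sigma_k = D_k^2$ with $D_k=\mathrm{diag}(\bm\sigma_k)$. The task is to extract $A$ up to $PD$ from the family $\{\Sigma'_k\}_{k=1}^K$. This is a joint-diagonalisation problem in the spirit of \citet{kivva2022identifiability}.

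\textbf{Step 1 (pairwise generalised eigenproblem).} For a pair $k_1\neq k_2$ I will form $M_{k_1,k_2}:=\Sigma'_{k_1}(\Sigma'_{k_2})^{-1} = A D_{k_1}^2 D_{k_2}^{-2} A^{-1}$. This matrix is similar to the diagonal matrix $\mathrm{diag}(\bm r_{k_1,k_2}^2)$. Its eigenvalues are therefore the squared ratios $r_{k_1,k_2,i}^2$, and the $i$-th column of $A$ is an eigenvector for $r^2_{k_1,k_2,i}$. The ratios are positive, so squaring does not merge distinct values, and distinctness of columns of $R$ transfers to distinctness of columns of the squared-ratio matrix.

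\textbf{Step 2 (combining pairs).} Any single pair may have repeated eigenvalues, so its eigenspaces can be larger than one dimension. I will use the fact that all $M_{k_1,k_2}$ commute: they are simultaneously diagonalised by $A$. The plan is to take the common refinement of their eigenspace decompositions. Concretely, I will define the joint eigenspace of $\{M_{(a,b)}\}$ associated with a row-signature $(\lambda_{(a,b)})_{(a,b)}$. By construction this space equals the span of those columns $a_i$ of $A$ whose column of the squared-ratio matrix equals that signature.

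A cleaner alternative is to take a generic linear combination $N=\sum_{(a,b)} c_{(a,b)} M_{(a,b)} = A\,\mathrm{diag}(\mu)A^{-1}$ with $\mu_i=\sum c_{(a,b)} r^2_{(a,b),i}$. Because the columns of $R$ are pairwise distinct, the $\mu_i$ are pairwise distinct for $c$ outside a finite union of hyperplanes. Then $N$ has $m$ simple eigenvalues, and each eigenvector is a scalar multiple of a column of $A$.

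\textbf{Step 3 (conclusion).} I will let $A'$ be the matrix whose columns are the eigenvectors of $N$, in any order and with any normalisation. Then $A' = A P^\top \Lambda$ for some permutation $P$ and invertible diagonal $\Lambda$, so $(A')^{-1}A = \Lambda^{-1}P$. Writing $\Lambda^{-1}P = P(P^\top\Lambda^{-1}P)$ gives the form $PD$ with $D$ diagonal. $A'$ is computed only from quantities of $\z'$, namely the $\Sigma'_k$; the unknown regime labelling merely reorders or inverts pairs, which leaves the set of $M$'s and the conclusion unchanged.

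\textbf{Main obstacle.} I expect the main difficulty to be rigorously justifying that $\Sigma'_k$ is recoverable from $\z'_t$ alone. This requires invoking the transition identifiability of Theorem~\ref{thm:msm_transition_identifiability} in the transformed coordinates, and checking that the conditional covariance of $\z'_t$ given $\z'_{t-1}$ and $s_t=k$ is exactly $A\Sigma_kA^\top$, independent of the mean function. The genericity of $c$ in Step 2 is routine by comparison.
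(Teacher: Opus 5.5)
Your proposal is correct and follows essentially the paper's route. Both arguments form the pairwise products of transformed covariances (your $\Sigma'_{k_1}(\Sigma'_{k_2})^{-1}$ is the transpose of the paper's $\hat\Sigma_{i_2}^{-1}\hat\Sigma_{i_1}$), observe that all of them are diagonalised by $A$, and use the distinct columns of $R$ to force one-dimensional joint eigenspaces. The only difference is that you realise the simultaneous diagonalisation through a generic linear combination $N$ with simple spectrum, whereas the paper uses the commutation argument $SD_i^2=D_i^2S$; your version has the small advantage of making explicit why an eigenbasis computed from $\z'$ alone carries the same eigenvalue ordering for every regime pair.
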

\end{theorembox}
See Appendix~\ref{app:disentanglement_proof} for the proof, which uses similar eigen-decomposition tricks introduced in \citet{kivva2022identifiability} strategy. We also provide examples on how Eq.~\eqref{eq:vectorised_kivva} is constructed, and partial disentanglement when (\ref{ass:disentanglement}) is not satisfied. Theorem~\ref{lemma:disentanglement} shows that in regime-dependent settings, increasing the number of regimes with diverse variances strengthens identifiable latent space disentanglement. 

\begin{remark}
For mixture models, \citet[Assumption~(P2)]{kivva2022identifiability} enables disentanglement via regime-dependent variances, which extends to the temporal case as follows. For $K$ regimes with diagonal covariances, i.e., $z_{t,i}\indep  z_{t,j} \mid s_t,\z_{t-1}$ for all $i\neq j \in\{1,\dots,m\}$, there exists one pair of regimes $k_1\neq k_2\in\{1,\dots,K\}$ such that the ratio vector $\bm r_{k_1,k_2}$ has distinct elements. 
However, this condition does not scale well to high dimensions, since ratios with close values may challenge disentanglement in practical estimation. We note (\ref{ass:disentanglement}) is weaker than \citet[Assumption~(P2)]{kivva2022identifiability}, as a single ratio vector with distinct elements forms $R$ with distinct columns. Therefore, (\ref{ass:disentanglement}) works as a scalable generalisation by fully exploiting the mixture setting of rSDS.
\end{remark}

\section{$\Omega$SDS: Consistent estimation for SDSs}\label{sec:estimation}

The identifiability results suggest a natural estimation approach: use an invertible emission to distinguish temporal features from exogenous noise, and  perform exact inference on the resulting latent rMSM. We propose $\Omega$SDS, a likelihood-based estimator that follows this principle by combining an invertible flow-based emission model with latent-space inference based on rMSMs.
For simplicity, we drop the subscript $\mparam$ in the model equations. 
The flow defines a generative emission map from the latent dynamical variables and exogenous noise to observations, while its inverse acts as an encoder: $(\z_t, \bm \varepsilon_t) = f^{-1}(\x_t),\,  1\leq t  \leq T$. Using the change-of-variables formula, the marginal log-likelihood of an observed trajectory is expressed as follows.
\begin{equation}\label{eq:omega_sds_likelihood}
    \log p(\x_{1:T}) = \sum_{t=1}^T \log\left| \det J_{f^{-1}}(\x_t)\right|  + \log p(\z_{1:T}) + \log p_{\eps}(\bm \varepsilon_{1:T}), \quad p_{\eps}(\eps_t)=\N(\eps_t; 0,\sigma_{\eps}^2I)
\end{equation}
where $\sigma_{\eps}$ is left as a hyper-parameter, and $J_{f^{-1}}$ denotes the Jacobian of the inverse transformation. In our experiments, we parametrise $f$ using a multi-layer TarFlow \citep{zhai2025normalizing}, $f = (f^{(1)}\circ\dots\circ f^{(L)})$, where each layer is a masked autoregressive flow where the conditioning is implemented using causal transformers. To encourage a decomposition between dynamical variables and exogenous noise, we replace the permutation blocks in TarFlow with invertible LU mixing layers, as introduced in RealNVP \citep{dinh2017density}. For video experiments, we instead use a multi-scale flow architecture, inspired by Glow \citep{kingma2018glow}. See Appendix~\ref{app:architecture} for details.

The latent sequence likelihood $p(\z_{1:T})$ is defined by the rMSM prior with discrete regimes $\s_{1:T}$ in Eq.~\eqref{eq:transition_dynamics}. We optimise the likelihood in Eq.~\eqref{eq:omega_sds_likelihood} by gradient ascent. The gradient of the latent log-likelihood $\log p(\z_{1:T})$ can be written as an expectation under the exact switch posterior,  $\nabla_{\mparam}\log p(\z_{1:T}) = \nabla_{\mparam}\mathbb{E}_{p(\s_{1:T}\mid \z_{1:T})}[\log p(\z_{1:T}, \s_{1:T})]$, which yields the following \citep{dong2020collapsed}:
\begin{align*}
    \nabla_{\mparam}\log p(\z_{1:T}) &=  \sum_{k=1}^K \gamma_{1,k}(\z_{1:T})\nabla_{\mparam}\log p(s_1=k) + \sum_{t=2}^T \sum_{k=1}^K\sum_{l=1}^K \xi_{t,k,l}(\z_{1:T}) \nabla_{\mparam}\log Q_{lk}(\z_{t-1})
    \\
     +  \sum_{k=1}^K & \gamma_{1,k}(\z_{1:T}) \nabla_{\mparam} \log p(\z_1 |s_t=k) +  \sum_{t=2}^{T}\sum_{k=1}^K \gamma_{t,k}(\z_{1:T}) \nabla_{\mparam} \log p(\z_t |\z_{t-1}, s_t=k),
\end{align*}
where $\gamma_{t,k}(\z_{1:T})=p(s_{t}=k \mid \z_{1:T})$ and $\xi_{t,k,l}(\z_{1:T})=p(s_{t}=k, s_{t-1}=l \mid \z_{1:T})$ are the marginal and two-step posterior distribution given $\z_{1:T}$. These quantities are computed using standard forward-backward recursions \citep{bishop2006pattern}, where details are deferred to Appendix \ref{app:derivations_fb}.

Training regime-switching models with nonlinear transitions can suffer from state collapse, where only a subset of regimes is used \citep{dong2020collapsed}. Rather than relying on annealing schedules, we initialise the rMSM parameters from PCA features and use a PCA-alignment auxiliary loss early in training. This stabilises the flow decomposition and provides a reliable early latent representation for regime learning. PCA-based initialisation is standard in SDSs \citep{linderman2017bayesian}.

\paragraph{Consistency.}
Unlike VAE-based approaches, $\Omega$SDS optimises the exact marginal likelihood. By invertibility of the emission, $(\z_t,\eps_t)=\func^{-1}(\x_t)$, the latent $\pz(\z_{1:T})$ is a MSM likelihood with covariate-dependent transitions $Q(\z_{t-1})$, aligning with the setting of MLE consistency results of \citet{pouzo2022maximum}. 
Therefore, under the usual assumptions for MSM and MLE consistency, including correct specification, global optimisation, positivity of the transition probabilities, ergodicity, and likelihood regularity, the resulting MLE is consistent up to the equivalence class characterised in Defs.~\ref{def:affine_identifiability}, \ref{def:perm_identifiability}, and Theorem~\ref{lemma:disentanglement} \citep{wald1949note,pouzo2022maximum}. These regularity conditions are standard in likelihood-based consistency analyses and are additional to our identifiability assumptions, but compatible with them. This contrasts with VAE-based SDSs, which optimise an ELBO and require sufficient coverage of the variational family \citep{khemakhem2020variational}. Collapsed-switching approaches marginalise $\s_{1:T}$ conditionally on $\z_{1:T}$, but rely on an amortised variational posterior $q_{\phi}(\z_{1:T}|\x_{1:T})$ which may introduce approximation gaps \citep{balsells-rodas2024on}. We further discuss this empirically in Appendix~\ref{app:vae_estimation}.


\begin{figure}[t]
    \centering
    \includegraphics[width=0.95\linewidth]{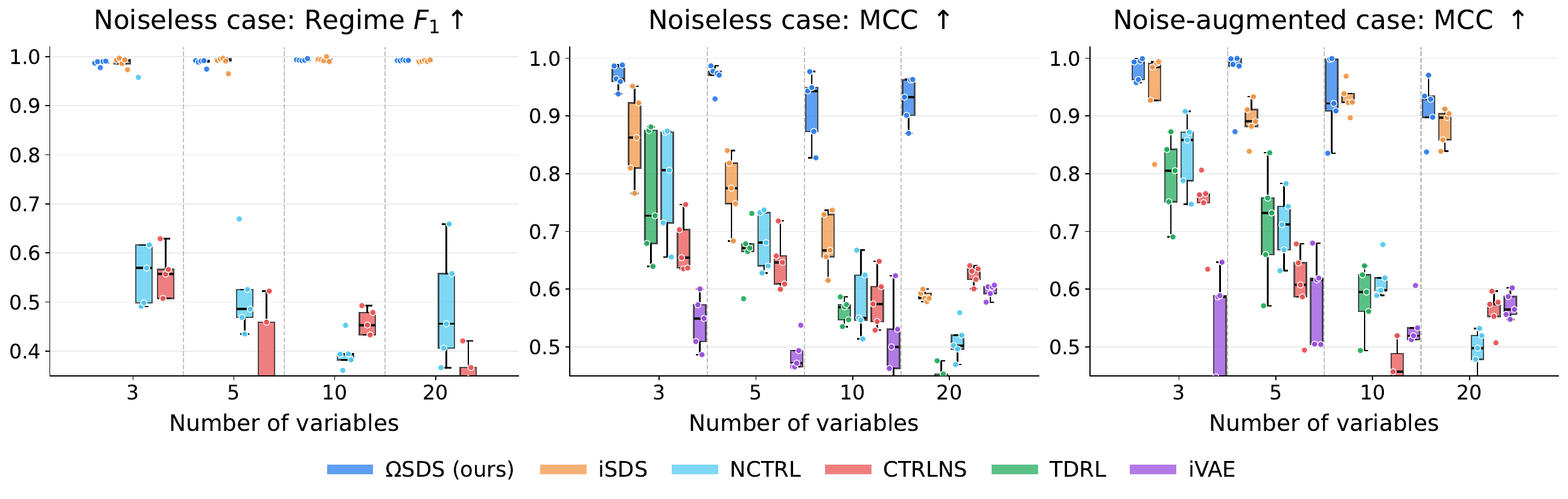}
    \vspace{-.75em}
    \caption{
    Synthetic data results under increasing dimensionality. 
    Left: Regime $F_1$ score ($\Omega$SDS and iSDS achieve near perfect recovery). 
    Middle and right: MCC of noiseless and noise-augmented settings. Higher is better for both $F_1$ and MCC. Reported results use 5 different dataset seeds.
    }
    \vspace{-1em}
    \label{fig:synthetic_increasing_dimensions}
\end{figure}

\section{Experiments}\label{sec:experiments}

We evaluate $\Omega$SDS through synthetic data with controlled ground-truth factors, 2D bouncing ball videos, and real-world dancing videos. Additional results are presented in Appendix~\ref{app:dancing_videos}.

\subsection{Synthetic Data}\label{sec:synthetic_exp}

We evaluate $\Omega$SDS following iSDS settings \citet{balsells-rodas2024on}, using piecewise-linear emissions and two scaling dimensionality cases: noiseless observations ($n=m$), and noise-augmented observations ($n=5m$). We also compare with VAE-based approaches, including NCTRL \citep{song2023temporally}, CtrlNS \citep{song2024causal}, TDRL \citep{yao2022temporally}, and iVAE \citep{khemakhem2020variational} (with ground-truth regimes as auxiliary variables). See Appendix~\ref{app:synthetic_data} for data generation details.

Figure~\ref{fig:synthetic_increasing_dimensions} reports switch recovery $F_1$ in the noiseless setting and latent recovery MCC in both settings. As expected, $\Omega$SDS recovers regimes comparably to iSDS, since the data is compatible with a non-recurrent SDS. The main difference appears in latent recovery. In the noiseless setting, $\Omega$SDS maintains consistently high MCC scores across dimensions, while VAE-based approaches degrade with increasing latent dimensionality. This suggests that $\Omega$SDS provides a more scalable approach to disentanglement.
In the noise-augmented setting, VAE-based methods improve, possibly due to the additional observed constraints. However, $\Omega$SDS remains competitive in both settings. Overall, these results show that $\Omega$SDS provides a scalable alternative to VAE-based SDS methods for disentanglement, even in non-recurrent settings where existing baselines are directly applicable.



\subsection{Video Experiments}\label{sec:videos}

\paragraph{2D Bouncing Ball Videos.}
We synthesise videos of a ball bouncing inside a 2D square box, starting from random positions and one of four velocity modes ($K=4$), $k\in\{\nearrow, \nwarrow, \searrow, \swarrow\}$. 
This benchmark suits rSDSs, where the discrete regimes determine the direction of motion, and switches depend on the continuous states through wall collisions. 
Therefore, this data illustrates the setting targeted by our theory. In contrast, previous identifiable SDS models (iSDS) cannot incorporate recurrent feedback from the continuous states in the switching process. We train on $10k$ videos of length $T=64$, with noisy coloured $32\times 32$ frames. See Appendix \ref{appendix:bouncing_ball} for additional data generation details and training.

We compare recurrent and non-recurrent variants, $\Omega$SDS (R) and $\Omega$SDS (A), against iSDS and other baselines that are not designed around identifiability: VRNN, KVAE, SNLDS, and REDSDS. All switching methods use $K=4$, while VRNN is a no-switching recurrent baseline. 
Table~\ref{tab:bb32_regime_rollout} reports regime $F_1$ score on held-out observed sequences of length $T=64$, and on predictions from $t=65$ to $t=100$. We also report LPIPS errors for long-horizon forecasts at $20$ and $500$ steps, using a context of $5$ observed frames. Figure~\ref{fig:long_rollout} shows a representative $500$-step rollout. The recurrent model, $\Omega$SDS (R), achieves the strongest regime estimation and prediction performance, while also obtaining the best long-horizon perceptual quality. SNLDS is the closest competitor, since it is a VAE-based design with rMSMs in latent space. Crucially, both recurrent switching methods are competitive with VRNN, a memory-based sequence model with unidentifiable latent dynamics. The comparison between $\Omega$SDS (R), $\Omega$SDS (A), and iSDS highlights the role of recurrent switches. While non-recurrent switching can recover regimes on observed sequences, it struggles to predict future dynamics. Additional evaluation and visualisations are provided in Appendix~\ref{appendix:bouncing_ball}. Overall, these results show that enforcing identifiable latent dynamics does not sacrifice predictive performance, while providing an interpretable decomposition of complex dynamics into simpler behavioural modes.

\begin{figure}[t]
\centering

\begin{minipage}{0.63\linewidth}
\centering
\captionof{table}{Regime estimation and long-rollout quality. Regime $F_1$ is computed on observed input sequences ($T=64$) and on predicted regimes from $t=65$ to $t=100$. LPIPS is computed on 5-frame-context rollouts. Higher is better for $F_1$; lower is better for LPIPS. We highlight the \textbf{best} and \underline{second-best} methods.}
\centering
\label{tab:bb32_regime_rollout}
{\setlength{\tabcolsep}{3pt}
\resizebox{\linewidth}{!}{
\begin{tabular}{lcccc}
\toprule
Method
& Input $F_1 \uparrow$
& Pred. $F_1 \uparrow$
& LPIPS@20 $\downarrow$
& LPIPS@500 $\downarrow$ \\
\midrule
VRNN
& -- 
& -- 
& 0.014
& \underline{0.028} \\
KVAE
& 0.105
& 0.105
& 0.053
& 0.324 \\
iSDS
& 0.819
& 0.345
& 0.111
& 0.106 \\
REDSDS
& 0.655
& 0.580
& 0.073
& 0.135 \\
SNLDS
& \underline{0.998}
& \underline{0.913}
& \underline{0.007}
& 0.059 \\
\midrule
$\Omega$SDS (A)
& 0.867
& 0.362
& 0.110
& 0.092 \\
$\Omega$SDS (R)
& \textbf{1.000}
& \textbf{0.932}
& \textbf{0.003}
& \textbf{0.027} \\
\bottomrule
\end{tabular}}}
\end{minipage}
\hfill
\begin{minipage}{0.35\linewidth}
\centering
\includegraphics[width=0.9\linewidth]{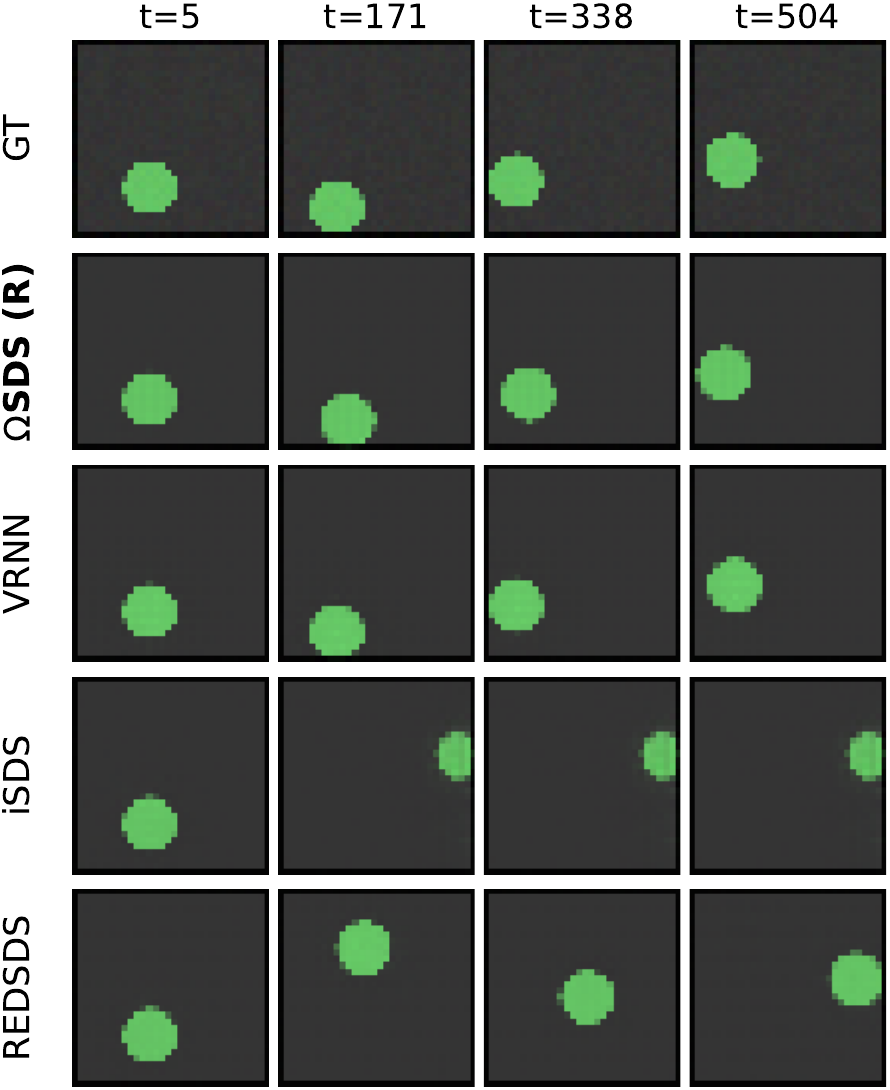}
\vspace{-0.3em}
\captionof{figure}{Long forecast example.}
\label{fig:long_rollout}
\end{minipage}
\vspace{-2em}
\end{figure}

\begin{wrapfigure}{r}{.50\linewidth}
    \centering
    \vspace{-1.2em}
    \includegraphics[width=\linewidth]{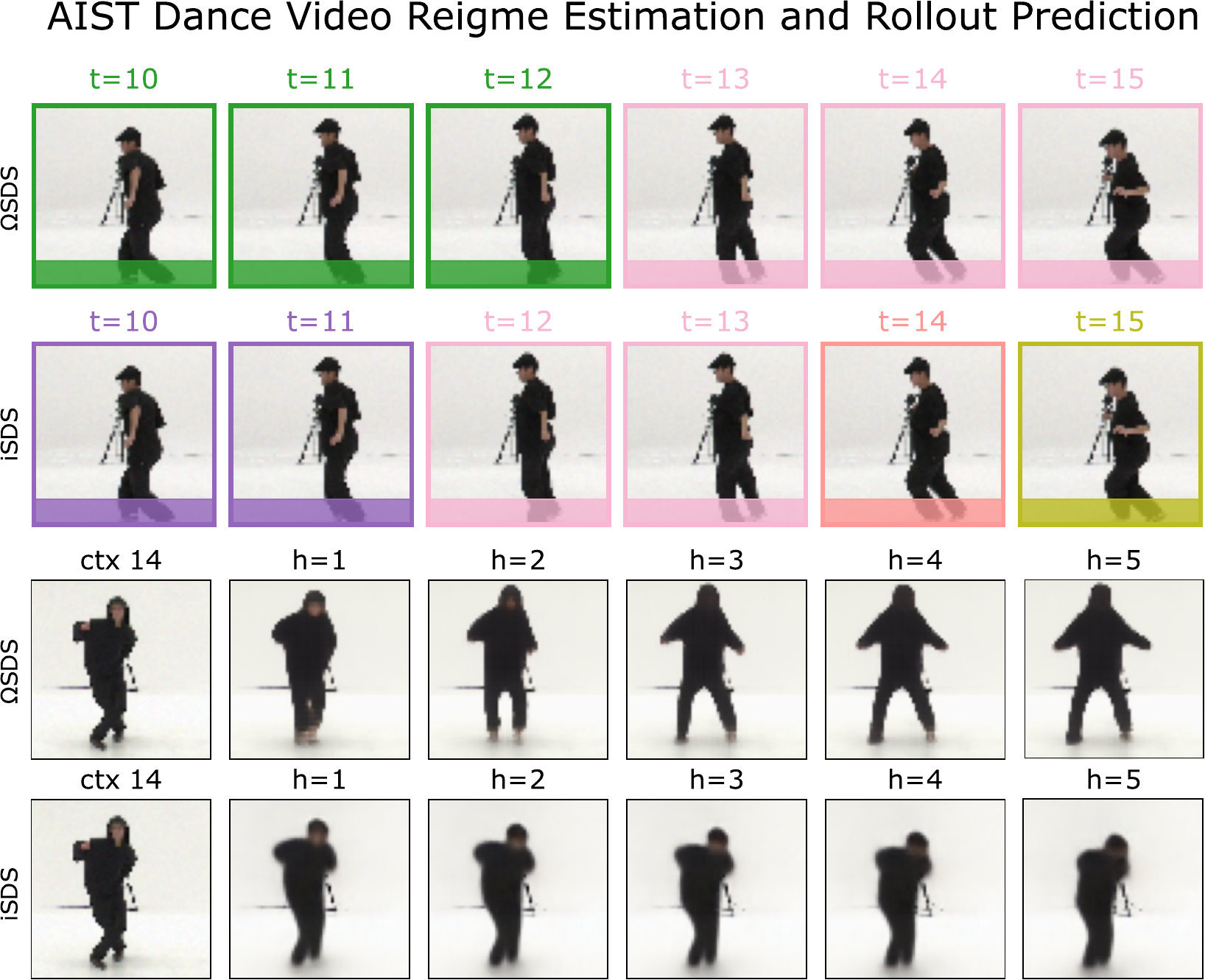}
    \vspace{-1.3em}
    \caption{Example results for AIST dance videos.}
    \label{fig:aist_mainpaper}
    \vspace{-4em}
\end{wrapfigure}
\paragraph{Dancing Videos.} Figure~\ref{fig:aist_mainpaper} shows example results on the real-world dance video dataset of \citet{aist-dance-db}. We compare posterior regime estimation (top two rows) and rollout predictions with 15 context frames (bottom two rows) for $\Omega$SDS and iSDS. Colors indicate distinct inferred regimes; see Appendix~\ref{app:dancing_videos} for additional results. $\Omega$SDS improves rollout prediction and yields more interpretable dynamics by decomposing dance patterns, such as ``move up'' or ``move down'' in the shown example. The generation quality of $\Omega$SDS is substantially better than iSDS, thanks to the use of recurrent dynamics and flow-based emissions.




\section{Conclusions}\label{sec:conclusions}

We presented identifiability results for recurrent switching nonlinear dynamical systems by imposing restrictions on the latent dynamics, recovering the latent representation up to acceptable equivalences under general invertible emissions. 
We also introduced $\Omega$SDS, a flow-based estimator that learns exact likelihood, which is consistent under further standard assumptions in MLE literature for rMSMs. Empirically, our method improves disentanglement compared to VAE-based estimators and shows strong forecasting performance due to recurrent switching. 
The main limitation of our approach is the need to constrain the latent dynamics. Specifically, the likelihood-separation condition (\ref{ass:posterior_domi}) does not trivially generalise to arbitrary nonlinear dynamics. Still, it can be balanced with sticky switching (\ref{ass:sticky_switch}). Our consistency result also relies on assumptions beyond those required for identifiability. While these are compatible with our setting, a dedicated consistency analysis of nonlinear rSDS remains open. Future work will study weaker separation conditions, robustness to model misspecification, and extensions of likelihood-based models such as $\Omega$SDS to broader nonlinear ICA and causal representation learning settings.

\clearpage

\bibliographystyle{plainnat}
\bibliography{bibtex}


\appendix
\newpage

\begin{center}
  \LARGE
  \textbf{Appendix for ``End-to-End Identifiable and Consistent Recurrent Switching Dynamical Systems''}
\end{center}

\etocdepthtag.toc{mtappendix}
\etocsettagdepth{mtchapter}{none}
\etocsettagdepth{mtappendix}{subsection}
{\small \tableofcontents}

\section{Unidentifiability of Non-Temporal Mixture Generative Models}\label{app:unidentifiability}

\citet{khemakhem2020variational} illustrates identifiability for unconditional priors. This notion also extends to models with mixture priors, where the latent variables are sampled from a mixture distribution:
\begin{equation}
   \x = \func(\z), \quad \z\sim \sum_{k=1}^K p(s=k)p(\z \mid s=k).
\end{equation}
In this setting, even the number of components $K$ is unidentifiable from the observed data, which leads to an unidentifiable latent representation. To see this, consider two models with transformations $f_{\mparam}$ and $f_{\mparam'}$, and latent distributions with $K$ and $K'$ mixture components, respectively, such that $p_{\mparam}(\x)=p_{\mparam'}(\x)$ for any  $\x\in\R^n$. We can write the observed density as a pushforward measure of $\z$ through $\func$ (or $f_{\mparam'_f}$): 
\begin{equation}
    (f_{\mparam\#} p_{\mparam_z})(\x) = (f_{\mparam'\#} p_{\mparam'_z})(\x),
\end{equation}
where the composition $\func^{-1} \circ f_{\mparam'_{f}}$ can transform one latent distribution into another. Since a general invertible transformation can map any distribution to a uniform via its CDF, the composition is flexible enough to arbitrarily create or destroy mixture components. This makes $K$ unidentifiable under the following construction:
\begin{equation}
    p_{\mparam}(\x) = \left((f^{-1}_{\mparam}\circ f_{\mparam'})_\# p_{\mparam_z'}\right)(\x) , \quad \text{with } (f^{-1}_{\mparam}\circ f_{\mparam'})  = F^{-1}_{p_{\mparam_z'}}\circ F_{p_{\mparam_z}},
\end{equation}
where $F^{-1}_{p_{\mparam_z'}}$ and $F_{\mparam_z}$ are the inverse CDF of $p_{\mparam_z'}$ and the CDF of  $p_{\mparam_z}$, respectively.
To solve the above problem, \citet{kivva2022identifiability} proves identifiability of mixture priors under the assumption of piece-wise linear $f_{\mparam}$. This imposes restrictions on $f_{\mparam}$, but enables identifiability while maintaining a flexible GMM prior. Comparing to \citet{khemakhem2020variational}, which instead constrains the prior, this highlights a ``no free lunch'' principle for identifiability: without restricting the generative mapping or the latent distribution, the model is fundamentally unidentifiable.

\section{Proofs and Extended Discussions on Identifiability}\label{sec:appendix_proof}

\subsection{Proof of Theorem \ref{thm:affine_identifiability}}\label{app:main_proof}

\begin{proof}

\textbf{Notation.} Consider two models $\mparam,\mparam'\in\bm\Theta$ with emission maps $f_{\mparam_f}$ and $f_{\mparam'_f}$, respectively. For simplicity, write $f := f_{\mparam_f},\ f' := f_{\mparam'_f}$. For the first model, define $\w_t = (\z_t, \eps_t) = f^{-1}(\x_t)$, $\z_t = f^{-1}(\x_t)_{1:m}$. For the second model, define  $\w'_t=(\z'_t,\eps'_t)= f'^{-1}(\x_t)$, $\z'_t = f'^{-1}(\x_t)_{1:m'}$.
Denote by $g_k(\z_{t+1}\mid \z_{t}) := \mathcal{N}(\z_{t+1}; \bm{m}_k(\z_{t}), \Sigma_k)$ the $k-$th latent regime transition density, and $\tilde{g}_k(\w_{t+1}\mid \z_{t}) = g_k(\z_{t+1}\mid \z_{t})p_{\eps}(\eps_{t+1})$. Since $\eps_{t+1}$ is independent of $\w_t$, we write $\tilde{g}_k(\w_{t+1}\mid \z_{t})$ instead of $\tilde{g}_k(\w_{t+1}\mid \w_{t})$. Define the pushforward measure of $\tilde{g}_k$ to observation space as $\phi_k(\cdot\mid\x_t) = f_\# \tilde{g}_k(\cdot\mid\z_t)$ with $\z_t=f^{-1}(\x)_{1:m}$, and define $g'_{k'}$, $\tilde{g}'_{k'}$, and $\phi'_{k'}$ analogously for the second model, using $f'$ and $p'_{\eps}$.

\textbf{Step I.} For two parametrisations $\mparam\neq \mparam'$ such that $p_{\mparam}(\x_{1:T})=p_{\mparam'}(\x_{1:T})$, their marginal densities on $\x_{1:t}$ are equal for every $t\leq T$. Using conditional densities implies
\begin{equation}
    p_{\mparam}(\x_{t+1}\mid\x_{1:t}) = p_{\mparam'}(\x_{t+1}\mid\x_{1:t}),
\end{equation}
for every $1\leq t < T$ and almost any $\x_{1:t}\in\R^{tn}$.

Now consider model $\mparam$, and fix a history $\bm{h}=\x_{1:t-1}$. Since $f$ is invertible, conditioning on $(\x_t, \bm h)$ is equivalent to conditioning on $(\z_t, \eps_t, \z_{1:t-1}, \eps_{1:t-1})$ through $f^{-1}$. Since the temporal dynamics are independent of the exogenous noise, the conditional distribution of $\bm w_t$ depends on the history only through $(\z_t, \tilde{\bm h})$, where $\z_t=f^{-1}(\x_t)_{1:m}$ and $\tilde{\bm h}=\z_{1:t-1}$. Therefore,
\begin{align}
p_{\mparam}(\x_{t+1}\mid \x_t, \bm h)
&=
\left|\det J_{f^{-1}}(\x_{t+1})\right|
\,p_{\mparam}(\w_{t+1}\mid \z_t,\tilde{\bm h})\\
&=
\left|\det J_{f^{-1}}(\x_{t+1})\right|
\sum_{k=1}^K p_{\mparam}(s_{t+1}=k\mid \z_t,\tilde{\bm h})\,
\tilde{g}_k(\w_{t+1}\mid \z_t)\\
&=
\sum_{k=1}^K p_{\mparam}(s_{t+1}=k\mid \z_t,\tilde{\bm h})\,
\phi_k(\x_{t+1}\mid \x_t).
\end{align}
Denote $\Phi(\x_{t+1}|\x_{t}):=(\phi_1(\x_{t+1}|\x_{t}), \dots, \phi_K(\x_{t+1}|\x_{t}))^\top$, and let $\bm H=(\tilde{\bm h}^{(1)},\dots,\tilde{\bm h}^{(K)})$ be a collection of $K$ observed histories, where $\bm h^{(i)}=\x_{1:t-1}^{(i)}$. For model $\mparam$, denote $\tilde{\bm h}^{(i)}$ the $i$-th latent history under $f^{-1}$, and define:
\begin{equation}
    \bm P_{\mparam,\bm H}(\x_{t+1}|\x_{t}) := \begin{pmatrix}
        p_{\mparam}(\x_{t+1} | \x_{t},\bm h ^{(1)}) \\
        \vdots \\
        p_{\mparam}(\x_{t+1} | \x_t,\bm h ^{(K)})
    \end{pmatrix}.
\end{equation}
Also define
\begin{equation}
    A_{\bm H}(\z_t):= \begin{pmatrix}
        p_{\mparam}(s_{t+1}=1 \mid \z_t,\tilde{\bm h}^{(1)}) & \dots &   p_{\mparam}(s_{t+1}=K |  \mid \z_t,\tilde{\bm h}^{(1)}) \\
        \vdots & \ddots & \vdots \\
        p_{\mparam}(s_{t+1}=1 \mid \z_t,\tilde{\bm h}^{(K)}) & \dots & p_{\mparam}(s_{t+1}=K \mid \z_t,\tilde{\bm h}^{(K)})
    \end{pmatrix}.
\end{equation}
Then, we can write the observed conditional densities as mixtures over shared one-step observational components $\Phi(\x_{t+1} | \x_{t})$, with mixture weights indexed by $\bm H$:
\begin{equation}
        \bm P_{\mparam,\bm H}(\x_{t+1}|\x_{t})
    = A_{\bm H}(\z_t)
    \Phi(\x_{t+1} | \x_{t}).
\end{equation}
For two parametrisations $\mparam\neq \mparam'$ where $p_{\mparam}(\x_{t+1}\mid \x_{1:t}) = p_{\mparam'}(\x_{t+1}\mid \x_{1:t})$, under the same set of histories $\bm H$, we define $\bm P_{\mparam',\bm H}(\x_{t+1}|\x_{t})$ and $A'_{\bm H}(\z'_t)$ analogously to $\bm P_{\mparam,\bm H}(\x_{t+1}|\x_{t})$ and $A_{\bm H}(\z_t)$, respectively. Note we use the same observed histories $\bm H$ for both models, which are model-dependent through $f^{-1}$ and
$(f')^{-1}$. Then, under $p_{\mparam}(\x_{1:T})=p_{\mparam'}(\x_{1:T})$, we have
\begin{equation}\label{eq:equivalence_identity}
    A_{\bm H}(\z_t)
    \Phi(\x_{t+1} | \x_{t}) = \bm P_{\mparam, \bm H}(\x_{t+1}|\x_{t}) = \bm P_{\mparam',\bm H}(\x_{t+1}|\x_{t})=  A'_{\bm H}(\z'_t)\Phi'(\x_{t+1} | \x_{t}).
\end{equation}
We note that, for the same collection of $K$ observed histories $\bm H$ and a model $\mparam'$ with $K'$ regimes, the matrix $A'_{\bm H}$ has size $K\times K'$, while $\Phi'(\x_{t+1} | \x_{t})$ has length $K'$.

We now show that $A_{\bm H}(\z_t)$ is invertible and continuous on a non-zero measure set in $\R^m$. By first-order Markov switching dynamics, row $i$ of $A_{\bm H}(\z_t)$ is the one-step predictive distribution of $s_{t+1}$ given the latent history $\tilde{\bm h} ^{i}$ ending at $\z_{t}$. Therefore, $A_{\bm H}(\z_t)$ factorises as follows:
\begin{equation}
    A_{\bm H}(\z_t)_{ik} = p_{\mparam}(s_{t+1}=k \mid \z_t,\tilde{\bm h}^{(i)}) = \sum_{j=1}^K p_{\mparam}(s_t=j\mid \z_t,\tilde{\bm h}^{(i)}) Q_{jk}(\z_t),
\end{equation}
\begin{equation}
    A_{\bm H}(\z_t) = F_{\bm H}(\z_t)Q(\z_t),
\end{equation}
where row $i$ of $F_{\bm H}(\z_t)$ is the filtering posterior $(p_{\mparam}(s_t=1\mid \z_t, \tilde{\bm h}^{(i)}), \dots , p_{\mparam}(s_t=K\mid \z_t, \tilde{\bm h}^{(i)}) )$ ending at $\z_t$. We note that exogenous noise terms disappear from the above equation due to conditioning. 

From (\ref{ass:sticky_switch}--\ref{ass:posterior_domi}), there exists a non-zero measure set $\mathcal{Z}$ on which we can choose $\tilde{\bm h}^{(i)}=(\z_1^{(i)}, \dots, \z_{t-1}^{(i)})$, with $\z_{\tau}^{(i)}\in\mathcal{Z}$, such that Lemma \ref{lemma:concentration_to_floor} gives $p(s_{t}= i | \z^*_t,\tilde{\bm h}^{(i)}) >  \frac{1}{2}$ for large enough $t>1$ and some $\z^*_t\in\mathcal{Z}$. Using Lemma \ref{lemma:concentration_to_floor} for $i\in\{1,\dots,K\}$, we can order histories such that $F_{\bm H}(\z^*_t)_{ii} > \frac{1}{2}$, i.e., history $i$ dominates the filtering distribution on regime $i$. Since each row of $F_{\bm H}(\z^*_t)$ sums to one, this implies strict row diagonal dominance, and therefore, $F_{\bm H}(\z^*_t)$ is invertible by the Levy--Desplanques theorem \citep{wu2005bounds}. Similarly by (\ref{ass:sticky_switch}), $Q_{kk}(\z_t)>1/2$ for $\z_t\in\mathcal Z$, which means $Q(\z_t)$ is also strictly row diagonally dominant and
invertible. Therefore,  $A_{\bm H}(\z^*_t)$ is invertible. Now we extend invertibility to a non-zero measure set $\mathcal{U}\subset\R^m$. For each fixed history $\tilde{\bm h}^{(i)}$ with a common endpoint $\z^*_t$, define
\begin{equation}
    M_i(\z^*_t) = \log g_i(\z^*_t \mid \z_{t-1}^{(i)}) - \max_{j\neq i}\log g_j(\z^*_t \mid \z_{t-1}^{(i)}).
\end{equation}
The functions in $M_i$ are continuous given Gaussianity, and since $M_i(\z^*_t)> \ell^*$, there exists a neighbourhood $\mathcal U_i\subset \mathcal{Z}$ of $\z^*_t$ such that 
\begin{equation}\label{eq:margin}
    M_i(\z_{t})> \ell_*, \quad \forall\z_t\in \mathcal U_i, \ \forall i\in\{1,\dots,K\}.
\end{equation}
Let $\mathcal U = \cap_{i=1}^K \mathcal U_i,$ which is a non-empty set since each $\mathcal U_i$ is a neighbourhood of the same endpoint $\z^*_t$. Then from Eq.~\eqref{eq:margin}, all $K$ histories satisfy the same conditions of Lemma~\ref{lemma:concentration_to_floor} for every $\z_t\in\mathcal{U}$. Therefore, the same strict diagonal dominance argument applies, and $F_{\bm H}(\z_t)$ is invertible for any $\z_t\in \mathcal U$. The filtering equation involves finite sums with Gaussians $g_k(\z_t\mid \z_{t-1})$, which implies $F_{\bm H}(\z_t)$ is continuous on $\mathcal{U}$. Since $Q(\z_t)$ is continuous on $\mathcal{Z}$, and $\mathcal{U}\subset \mathcal{Z}$, it follows that $A_{\bm H}(\z_t)$ is invertible and continuous for any $\z_t\in \mathcal{U}$, where $\mathcal{U}$ has non-zero measure.

\paragraph{Case (ii): Unknown $K$ and analytic $Q(\z_{t})$.}
The previous argument gives a non-zero measure set $\mathcal{U}\subset\R^m$ where $F_{\bm H}(\z_t)$ is invertible. This implies $\det F_{\bm H}(\z_t)$ is not identically zero. For fixed histories $\bm H$, the filtering probabilities are finite sums and products of Gaussians, and therefore $\det F_{\bm H}(\z_t)$ is analytic. From \citet{mityagin2015zero}, its zero set has measure zero, and this implies $F_{\bm H}(\z_t)$ is invertible almost
everywhere. When assuming $Q(\z_{t})$ is analytic, the same argument applies. By (\ref{ass:sticky_switch}), $Q(\z_{t})$ is strictly row diagonally dominant, and hence invertible on the non-empty open set $\mathcal{Z}$. Therefore, $\det Q(\z_{t})$ is analytic and not identically zero, which implies $Q(\z_{t})$ is invertible almost everywhere. Given the above, we have that $A_{\bm H}(\z_t) = F_{\bm H}(\z_t)Q(\z_t)$ is invertible for almost any $\z_t\in\R^m$.

\textbf{Step II.} On the non-zero set $\mathcal{U}$ where $A_{\bm H}(\z_t)$ is invertible, the identity from Eq.~\eqref{eq:equivalence_identity} gives
\begin{equation}\label{eq:component_relation}
    \Phi(\x_{t+1}\mid \x_t)
    =
    \Pi_{\bm H}(\x_t)
    \Phi'(\x_{t+1}\mid \x_t),
    \qquad
    \Pi_{\bm H}(\x_t):=A_{\bm H}(\z_t)^{-1}A'_{\bm H}(\z'_t),
\end{equation}
where $\z_t=f^{-1}(\x_t)_{1:m}$ and
$\z'_t=f'^{-1}(\x_t)_{1:m'}$. To clarify, for model $\mparam'$ with $K'$ regimes,
$A'_{\bm H}(\z'_t)\in\R^{K\times K'}$ and
$\Pi_{\bm H}(\x_t)\in\R^{K\times K'}$.
Integrating over $\x_{t+1}$ gives:
\begin{equation}
        \int\Phi(\x_{t+1} | \x_{t}) d\x_{t+1}
    = \int \Pi_{\bm H}(\x_{t}) \Phi'(\x_{t+1} | \x_{t}) d\x_{t+1} \implies \Pi_{\bm H}(\x_{t})\bm 1  = \bm 1.
\end{equation}
We can also show $\Pi_{\bm H}(\x_{t})$ is full row rank using linear independence properties of the Gaussian family. We start by writing a linear independence equation for the rows of $\Pi_{\bm H}(\x_t)$ in vector form:
\begin{equation}
    c^\top \Pi_{\bm H}(\x_t)=0,
    \qquad c\in\R^K .
\end{equation}
where $c=0$ if and only if the rows of $\Pi_{\bm H}(\x_{t})$ are linearly independent. Multiplying by $\Phi'(\x_{t+1}\mid\x_t)$ and using
\eqref{eq:component_relation}, we obtain, for every $\x_{t+1}\in\R^n$,
\begin{multline*}
     0 = \big(c^\top\Pi_{\bm H}(\x_t)\big)\Phi'(\x_{t+1}|\x_t) = c^\top\big(\Pi_{\bm H}(\x_t)\Phi'(\x_{t+1}|\x_t) \big)  \\ =c^\top\Phi(\x_{t+1}|\x_t) =  \sum_{k=1}^K c_k\phi_k(\x_{t+1}|\x_t),
\end{multline*}
where since the Gaussian family is linearly independent \citep{yakowitz1968identifiability} and $f$ is injective, their
pushforwards $\{\phi_1(\cdot\mid\x_t),\dots,\phi_K(\cdot\mid\x_t)\}$ are also linearly independent. Thus $c=0$, which implies $\Pi_{\bm H}(\x_{t})$ is full row rank. If model $\mparam;$ has $K$ regimes, then $K\leq K'$. At this stage, the proof branches according to (i) known $K$, and (ii) unknown $K$, but analytic $Q(\z_{t-1})$. 
\begin{itemize}[leftmargin=2em]
    \item[(i)] For known $K$,  $\Pi_{\bm H}(\x_t)\in\R^{K\times K}$ is square and, since it has full row rank, it is invertible.
    \item[(ii)] Unknown $K$ but analytic $Q$: By the analytic extension argument on the last paragraph of Step \textbf{I},  $A_{\bm H}(\z_t)$ is invertible almost everywhere. We now apply the same argument to model $\mparam'$ using (\ref{ass:sticky_switch}--\ref{ass:posterior_domi}) and Lemma~\ref{lemma:concentration_to_floor}. Therefore, we choose a different collection $K'$ observed histories $\bm H'$ of such that $A'_{\bm H'}(\z'_t)$ is invertible almost everywhere. Given $f$ and $f'$ are invertible, we can choose the same $\x_t$ such that both $A'_{\bm H'}(\z'_t)\big)$ and $A_{\bm H}(\z_t)$ are invertible. For this $\x_t$, have 
\begin{equation}
    \Phi'(\x_{t+1}\mid\x_t)
    =
    \Pi'_{\bm H'}(\x_t)\Phi(\x_{t+1}\mid\x_t),
    \qquad
    \Pi'_{\bm H'}(\x_t)
    :=
    \big(A'_{\bm H'}(\z'_t)\big)^{-1}A_{\bm H'}(\z_t),
\end{equation}
where $\Pi'_{\bm H'}(\x_t)\in\R^{K'\times K}$. Under the same linear independence arguments on model $\mparam'$, $\Pi'_{\bm H'}(\x_t)\in\R^{K'\times K}$ is full-row rank. Therefore, $K'\leq K$. Combined with the previous inequality $K\leq K'$, this yields $K=K'$ and hence $\Pi_{\bm H}(\x_t)$ is square and invertible.
\end{itemize}
Now we show $\Pi_{\bm H}(\x_t)$ is a permutation. Under (\ref{ass:gaussian_transition}) and (\ref{ass:regime_domi}) for model $\mparam'$, by Proposition \ref{prop:sufficient_conditions_nonnegativity} and Lemma \ref{lemma:nonnegative_entries}, each element in $\Pi_{\bm H}(\x_t)$ is nonnegative. Similarly, using the same result on model $\mparam$, and using
\begin{equation}
    \Phi'(\x_{t+1} | \x_{t}) = \Pi_{\bm H}(\x_t)^{-1} \Phi(\x_{t+1} | \x_{t}),
\end{equation}
each element in $\Pi_{\bm H}(\x_t)^{-1}$ is also nonnegative. Furthermore, integrating over $\x_{t+1}$ again gives $\Pi_{\bm H}(\x_t)^{-1}\bm 1=\bm 1$. Therefore, we know
\begin{equation}
     \Pi_{\bm H}(\x_t) \geq 0, \quad \Pi_{\bm H}(\x_t)^{-1} \geq 0 , \quad \Pi_{\bm H}(\x_t)\bm 1 = \bm 1.
\end{equation}
We prove $\Pi_{\bm H}(\x_t)$ has at most one positive entry by contradiction. Take any row $i$, and assume it has two positive entries in columns $j\neq j'$
\begin{equation}
    \Pi_{\bm H}(\x_t)_{ij} > 0, \quad\Pi_{\bm H}(\x_t)_{ij'} > 0.
\end{equation}
Given $\Pi_{\bm H}(\x_t)\Pi_{\bm H}(\x_t)^{-1} = I_{K}$, for some $k\neq i$ we have:
\begin{equation}
    0 = \left(\Pi_{\bm H}(\x_t)\Pi_{\bm H}(\x_t)^{-1}\right)_{ik} = \sum_{j=1}^{K}\Pi_{\bm H}(\x_t)_{ij}\Pi_{\bm H}(\x_t)^{-1}_{jk}.
\end{equation}
Given both matrices have nonnegative entries, for $\Pi_{\bm H}(\x_t)_{ij}>0$ and $\Pi_{\bm H}(\x_t)_{ij'}>0$, we have $\Pi_{\bm H}(\x_t)^{-1}_{jk}=0$ and $\Pi_{\bm H}(\x_t)^{-1}_{j'k}=0$ for every $k\neq i$. Therefore rows $j$ and $j'$ of $\Pi_{\bm H}(\x_t)^{-1}$ can only have nonnegative entries on column $i$. Since $\Pi_{\bm H}(\x_t)^{-1}$ is invertible, neither row can be zero, therefore 
\begin{equation}
    \Pi_{\bm H}(\x_t)^{-1}_{ji}>0, \quad \Pi_{\bm H}(\x_t)^{-1}_{j'i}>0.
\end{equation}
Now consider  $\Pi_{\bm H}(\x_t)^{-1}\Pi_{\bm H}(\x_t) = I_{K}$:
\begin{equation}
    0 = \left(\Pi_{\bm H}(\x_t)^{-1}\Pi_{\bm H}(\x_t)\right)_{jj'} = \sum_{k=1}^{K}\Pi_{\bm H}(\x_t)^{-1}_{jk}\Pi_{\bm H}(\x_t)_{kj'} = \Pi_{\bm H}(\x_t)^{-1}_{ji}\Pi_{\bm H}(\x_t)_{ij'}.
\end{equation}
But $\Pi_{\bm H}(\x_t)^{-1}_{ji}>0$ implies $\Pi_{\bm H}(\x_t)_{ij'}=0$, which is a contradiction.
Therefore,  every row of $\Pi_{\bm H}(\x_t)$ has at most one positive entry, and since it is invertible, every column has also at most one positive entry. Finally, the condition $\Pi_{\bm H}(\x_t)\bm 1 = \bm 1$, forces $\Pi_{\bm H}(\x_t)$ to be a permutation matrix, as each nonzero entry must be $1$. 

Finally, we show  $\Pi_{\bm H}(\x_t)$ is locally constant by continuity. Denote by $\mathcal{V}\subseteq\R^n$ a non-zero measure set of points $\x_t$ such that $\z_t=f^{-1}(\x_t)_{1:m}\in\mathcal U$, where $A_{\bm H}(\z_t)$ is invertible on $\mathcal{U}$, and $\Pi_{\bm H}(\x_t)=A_{\bm H}(\z_t)^{-1}A'_{\bm H}(\z'_t)$ with $\z_t=f^{-1}(\x_t)_{1:m}$ and $\z'_t=f'^{-1}(\x_t)_{1:m'}$. Since $F_{\bm H}(\z_t)$ and $F'_{\bm H}(\z'_t)$ are continuous, and each $Q$ and $Q'$ are continuous almost everywhere from (\ref{ass:recurrent}), both  $A_{\bm H}(\z_t)$ and $A'_{\bm H}(\z'_t)$ are continuous. In case (ii), continuity follows directly from the analyticity of $Q$ and $Q'$.

Therefore, $\Pi_{\bm H}(\x)$ is also continuous and a permutation for any $\x_t\in\mathcal{V}$. This implies there exists a non-zero measure subset $\mathcal V_0\subseteq\mathcal V$ on which $\Pi_{\bm H}(\x_t)$ is constant, as otherwise any change in the permutation would induce a discontinuity in $\mathcal{V}_0\subseteq\mathcal{V}$. Therefore, $(\z_t,\eps_t)$ such that $f(\z_t,\eps_t)\in\mathcal V_0$, there exists a fixed permutation $\pi\in S_K$, such that for any $k\in\{1,\dots, K\}$:
\begin{equation}
    \phi_k(\x_{t+1} \mid f(\z_{t},\eps_t)) =  \phi'_{\pi(k)}(\x_{t+1} \mid f(\z_{t}, \eps_t)).
\end{equation}

\textbf{Step III.} Now we finish the proof using an iVAE-style argument, with derivatives instead of auxiliary variables \citep{khemakhem2020variational}. Given $f$ invertible, there exists a latent reparametrisation such that:
\begin{equation}
   \w'_t =  (\z'_t, \eps'_t) = (f'^{-1}\circ f)(\z_t, \eps_t) = \psi(\z_t, \eps_t) = \psi(\w_t),
\end{equation}
with $\psi:\R^{m+d} \to \R^{m'+d'}$ invertible. By Step~\textbf{II}, there exists a non zero measure set $\mathcal{V}_0\subseteq\R^n$ and a permutation $\pi\in S_K$ such that, for every $k\in\{1,\dots, K\}$, and every $\w_t=(\z_t,\eps_t)$ with $f(\w_t)\in\mathcal{V}_0$, the reparametrisation $\w'_t = \psi(\w_t)$ gives the following relation:
\begin{equation}
    \tilde{g}_k(\w_{t+1}\mid \w_t) = |\det J_{\psi}(\w_{t+1})|  \tilde{g}_{\pi(k)}'( \psi(\w_{t+1})\mid \psi(\w_t) )
\end{equation}

Taking logarithms and derivatives with respect to $\w_{t+1}$ and $\w_t$, the Jacobian term disappears because it depends only on $\w_{t+1}$. Thus,
\begin{equation}\label{eq:mixed-hessian-identity}
H_k(\w_{t+1},\w_t)
=
J_{\psi}(\w_{t+1})^{\top} H'_{\pi(k)}\big(\psi(\w_{t+1}),\psi(\w_t)\big)J_{\psi}(\w_{t}),
\end{equation}
where
\begin{align}
H_k(\w_{t+1},\w_t)
&:=
\nabla_{\w_{t+1}}\nabla_{\w_t}^{\top}
\log \tilde g_k(\w_{t+1}\mid \w_t),\\
H'_{k'}(\w'_{t+1},\w'_t)
&:=
\nabla_{\w'_{t+1}}\nabla_{\w'_t}^{\top}
\log \tilde g'_{k'}(\w'_{t+1}\mid \w'_t).
\end{align}
Since $\log \tilde{g}_k(\w_{t+1}\mid \w_t) = \log g_k(\z_{t+1}\mid \z_t) + \log p_{\eps}(\eps_{t+1})$, and $ \log p_{\eps}(\eps_{t+1})$ does not depend on $\w_t$, it contributes zero to the mixed derivative. Therefore
\begin{equation}\label{eq:Hk-block}
H_k(\w_{t+1},\w_t)
=
\begin{pmatrix}
\Sigma_k^{-1}J_{\bm m_k}(\z_t) & \bm 0\\
\bm 0 & \bm 0
\end{pmatrix},
\qquad
H'_{k'}(\w'_{t+1},\w'_t)
=
\begin{pmatrix}
\Sigma_{k'}^{\prime -1}J_{\bm m'_{k'}}(\z'_t) & \bm 0\\
\bm 0 & \bm 0
\end{pmatrix}.
\end{equation}
We first identify the latent dimension $m$. Given Eq.~\eqref{eq:Hk-block},
\begin{equation*}
\operatorname{rank}(H_k(\w_{t+1},\w_t)) = \operatorname{rank}(J_{\bm m_k}(\z_t)) \le m, \ \operatorname{rank}(H'_{k'}(\w'_{t+1},\w'_t)) = \operatorname{rank}(J_{\bm m'_{k'}}(\z'_t)) \le m'.
\end{equation*}
Under (\ref{ass:nondegeneratus}), there exist $k_0\in\{1,\dots,K\}$ and $\z^*\in\R^m$ such that $\det J_{\bm m_{k_0}}(\z^*)\neq 0$. Since $\bm m_{k_0}$ is an analytic function, the function $\det J_{\bm m_{k_0}}$ is analytic and not identically zero. Therefore, its zero set has measure zero \citep{mityagin2015zero}. Therefore, for almost every
$\w_t=(\z_t,\eps_t)\in f^{-1}(\mathcal V_0)$, $\det J_{\bm m_{k_0}}(\z_t)\neq 0$, and $\operatorname{rank}(J_{\bm m_{k_0}}(\z_t)) = m$.
Fix  $\w_t\in f^{-1}(\mathcal V_0)$. Since $J_\psi$ is invertible, Eq.~\eqref{eq:mixed-hessian-identity} preserves rank, and therefore
\begin{equation*}
m=\operatorname{rank} \big(H_{k_0}(\w_{t+1},\w_t)\big)
=
\operatorname{rank} \big(H'_{\pi(k_0)}(\w'_{t+1},\w'_t)\big)
\le m'.
\end{equation*}
Equivalently, using (\ref{ass:nondegeneratus}) on model $\mparam'$ yields $m'\leq m$. Therefore, $m=m'$, which implies $d=d'$.

We now show that the temporal component of the reparametrisation $\psi$ depends only on temporal features $\z_t$. Write the reparameterisation as follows
\begin{equation*}
    \psi(\z, \eps) = \big(\psi_{\z}(\z,\eps), \psi_{\eps}(\z, \eps)\big),\quad \psi_{\z}:\R^{m+d}\to\R^{m}, \quad \psi_{\eps}:\R^{m+d}\to\R^{d}
\end{equation*} 
For $k=k_0$, the null space (or kernel) of matrix $H_{k_0}$ is $\ker H_{k_0}(\w_{t+1},\w_t)=\{\bm 0\}\times\R^d$
and similarly for $H'_{\pi(k_0)}$. We now choose any $\bm v\in\ker H_{k_0}(\w_{t+1},\w_t)$. From Eq.~\eqref{eq:mixed-hessian-identity} and using properties of the null space, we have
\begin{equation}\label{eq:null_space}
    0 = H_{k_0}(\w_{t+1},\w_t)\bm v = J_{\psi}(\w_{t+1})^{\top} H'_{\pi(k_0)}\big(\psi(\w_{t+1}),\psi(\w_t)\big)J_{\psi}(\w_{t})\bm v,
\end{equation}
where since $J_{\psi}(\w_{t+1})^{\top}$ is invertible (from invertibility of $\psi$), the vector it multiplies must be zero:
\begin{equation}
    H'_{\pi(k_0)}\big(\psi(\w_{t+1}),\psi(\w_t)\big)J_{\psi}(\w_{t})\bm v = 0.
\end{equation}
Therefore, $J_{\psi}(\w_{t})\bm v\in\ker H'_{\pi(k_0)}(\psi(\w_{t+1}),\psi(\w_t))$. We know $\bm v = (\bm 0, \bm u)^\top$, with arbitrary $\bm u \in \R^d$. Using the Eq.~\ref{eq:null_space}, for every $\bm u\in\R^d$, there exists some $\bm u'\in\R^d$ such that:
\begin{equation}\label{eq:block_triangular}
    \begin{pmatrix}
        \nabla_{\z}\psi_{\z}(\z,\eps) & \nabla_{\eps}\psi_{\z}(\z,\eps) \\
        \nabla_{\z}\psi_{\eps}(\z, \eps) & \nabla_{\eps}\psi_{\eps}(\z, \eps)
    \end{pmatrix}\begin{pmatrix}
        \bm 0 \\
        \bm u
    \end{pmatrix} = \begin{pmatrix}
        \bm 0 \\
        \bm u'
    \end{pmatrix} \implies \nabla_{\eps}\psi_{\z}(\z,\eps) \bm u = \bm 0.
\end{equation}
Therefore $\nabla_{\eps}\psi_{\z}(\z,\eps) = \bm 0$, which implies $\psi_{\z}(\z,\eps) = \psi_{\z}(\z)$. We note from Eq.~\eqref{eq:block_triangular}, $J_{\psi}(\w_{t})$ is block lower triangular, and since $\psi$ is invertible, the diagonal block  $J_{\psi_{\z}}(\z)$ is invertible. Now, take the top-left $m\times m$ block of Eq.~\eqref{eq:mixed-hessian-identity}. For any regime $k$, we obtain
\begin{equation}\label{eq:step3-top-left}
\Sigma_{k}^{-1}J_{\bm m_{k}}(\z_t)
=
J_{\psi_{\z}}(\z_{t+1})^\top
\Sigma_{\pi(k)}^{\prime -1}
J_{\bm m'_{\pi(k)}}(\psi_{\z}(\z_t))
J_{\psi_{\z}}(\z_t).
\end{equation}
Now, we use the regime $k'_0$ in model $\mparam'$ that satisfies (\ref{ass:nondegeneratus}), and set $\hat{k}=\pi^{-1}(k'_0)$ for convenience. Using similar arguments on analyticity of $\bm m'_{k'_0}$, $J_{\bm m'_{k'_0}}(\psi_{\z}(\z_t))$ is invertible almost everywhere. The matrices on the right-hand side are invertible almost everywhere. Rearranging yields,
\begin{equation*}
J_{\psi_{\z}}(\z_{t+1})^\top
=
\Sigma_{\hat k}^{-1}
J_{\bm m_{\hat k}}(\z_t)
J_{\psi_{\z}}(\z_t)^{-1}
J_{\bm m'_{k'_0}}(\psi_{\z}(\z_t))^{-1}
\Sigma'_{k'_0},
\end{equation*}
where the right-hand side does not depend on $\z_{t+1}$. Therefore,  $J_{\psi_{\z}}(\z_{t+1})$ is constant in $\z_{t+1}$, for $\z_{t}$ and $\eps_t$ such that $\w_t=(\z_t,\eps_t)\in f^{-1}(\mathcal V_0)$. Since the Gaussian transition density has full support on $\R^m$, this identity extends to all $\z_{t+1}\in\R^m$. Therefore $\psi_{\z}(\z_t)$ is affine:
\begin{equation}
\z'_t=\psi_{\z}(\z_t)=A\z_t+\Bb,
\quad A\in\R^{m\times m},\quad \Bb,\z_t\in\R^m,
\end{equation}
Here $A$ denotes the affine reparametrisation matrix and should not be confused
with the history-dependent matrix $A_{\bm H}$ used above. From
and from invertibility of $\psi_{\z}(\z_t)$, $A$ is invertible.

\end{proof}

\subsection{Discussion on Assumptions~(\ref{ass:sticky_switch}--\ref{ass:posterior_domi})} \label{app:dominance_assumptions}
As shown in Step \textbf{I} of the proof of Theorem~\ref{thm:affine_identifiability}, Assumptions~(\ref{ass:sticky_switch}--\ref{ass:posterior_domi}) are crucial to show invertibility of the linear system obtained by stacking $K$ histories. Below, we provide a concentration result showing that histories aligned with a given regime induce filtering posteriors dominated by that regime.
\begin{theorembox}
\begin{lemma}\label{lemma:concentration_to_floor}
Consider a regime path $(k_1^*, \dots, k_{t}^*)$, and latent variables $(\z_{1}, \dots, \z_{t})$, such that for all $\tau=2,\dots,t$,
\begin{equation}
    Q_{k^*_{\tau-1},k^*_{\tau}}(\z_{\tau-1})
    =
    \max_{k'\in\{1,\dots,K\}}
    Q_{k^*_{\tau-1},k'}(\z_{\tau-1})
    \geq 1-\epsilon^*,
\end{equation}
for some $\epsilon^*\in[0,\tfrac12)$. Denote $g_k(\z_t\mid \z_{t-1}) = \mathcal{N}(\z_{t}; \bm{m}_k(\z_{t-1}), \Sigma_k)$ and assume
\begin{equation}
    \frac{g_{k_\tau^*}(\z_{\tau}| \z_{\tau-1})}{\max_{k'\neq k_\tau^*}g_{k'}(\z_{\tau}| \z_{\tau-1})} \geq e^{\ell^*},
\end{equation}
for some $\ell^* > 0$. If
\begin{equation}\label{eq:dist_sticky_separation_1}
    \ell^{*} > \log \frac{1 + \epsilon^*}{1 -\epsilon^*},
\end{equation}
then, there exists $T^*<+\infty$, such that for all $t\geq T^*$, we have
\begin{equation}
    p(s_{t}= k_{t}^* | \z_{1:t}) >  \frac{1}{2}.
\end{equation}
\end{lemma}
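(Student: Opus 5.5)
We track the filtering posterior through the forward recursion and study the odds of the target regime against the rest. Write $\alpha_\tau(k) := p(s_\tau=k\mid \z_{1:\tau})$. The recursion is
\begin{equation*}
\alpha_\tau(k) \propto g_k(\z_\tau\mid\z_{\tau-1})\sum_{j=1}^K \alpha_{\tau-1}(j)\,Q_{jk}(\z_{\tau-1}).
\end{equation*}
The natural scalar to follow is $p_\tau := \alpha_\tau(k^*_\tau)$, together with the odds $r_\tau := p_\tau/(1-p_\tau)$. The goal is then to show that $r_\tau>1$ for all sufficiently large $\tau$.

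\textbf{Step 1 (one-step bound on the prior weights).} Let $k=k^*_\tau$ and $j^*=k^*_{\tau-1}$.
\begin{itemize}
\item The predictive mass on $k$ satisfies $\sum_j \alpha_{\tau-1}(j)Q_{jk}\ge p_{\tau-1}(1-\epsilon^*)$, obtained by keeping only the $j^*$ term.
\item The predictive mass on all other regimes is $\sum_{k'\neq k}\sum_j\alpha_{\tau-1}(j)Q_{jk'}$. Splitting this sum over $j$ gives at most $p_{\tau-1}\epsilon^* + (1-p_{\tau-1})$, since each remaining row sums to at most one.
\end{itemize}
Each competing regime also has likelihood at most $e^{-\ell^*}g_k(\z_\tau\mid\z_{\tau-1})$. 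Combining the two bounds gives the affine-fractional recursion
\begin{equation*}
r_\tau \;\ge\; e^{\ell^*}\,\frac{(1-\epsilon^*)\,p_{\tau-1}}{\epsilon^* p_{\tau-1} + (1-p_{\tau-1})} \;=\; e^{\ell^*}\,\frac{(1-\epsilon^*)\,r_{\tau-1}}{1+\epsilon^* r_{\tau-1}}.
\end{equation*}

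\textbf{Step 2 (dynamics of the lower-bound map).} Define $h(r)=e^{\ell^*}(1-\epsilon^*)r/(1+\epsilon^* r)$. This map is increasing and concave on $[0,\infty)$, with $h(0)=0$ and slope $c:=e^{\ell^*}(1-\epsilon^*)$ at the origin. Its positive fixed point is $r^\dagger=(c-1)/\epsilon^*$ (infinite if $\epsilon^*=0$). Condition \eqref{eq:dist_sticky_separation_1} is exactly $e^{\ell^*}(1-\epsilon^*)>1+\epsilon^*$, that is $c-1>\epsilon^*$, which is equivalent to $r^\dagger>1$. Since $h$ is monotone, $r_\tau\ge h^{(\tau-1)}(r_1)$ by induction. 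Iterates of a concave increasing map started in $(0,r^\dagger)$ increase monotonically to $r^\dagger$. Because $r^\dagger>1$, some finite $T^*$ has $h^{(T^*-1)}(r_1)>1$. Iterates started at or above $r^\dagger$ stay at or above $r^\dagger>1$. In either case $r_t>1$, i.e.\ $p_t>\tfrac12$, for all $t\ge T^*$.

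\textbf{Main obstacle.} The delicate point is the starting value: the argument needs $r_1>0$, i.e.\ the initial posterior must put positive mass on $k^*_1$. This follows from positivity of Gaussian densities as long as $p(s_1=k^*_1)>0$, which I will state as implicit. It is also what makes $T^*$ depend on $p(s_1\mid\z_1)$, consistent with Corollary~\ref{cor:half_dominance_time}.

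A second point to check is the one-step bound itself. The transition maximum $Q_{j^*k}=\max_{k'}Q_{j^*k'}\ge 1-\epsilon^*$ gives the stickiness-type floor only along the row of $j^*$. Rows $j\neq j^*$ are bounded crudely by $1$. That crude bound is precisely why the threshold reads $\log\frac{1+\epsilon^*}{1-\epsilon^*}$ rather than something sharper, so I expect the bound to close exactly at the stated condition.
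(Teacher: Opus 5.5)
Your proof is correct and takes essentially the same route as the paper. The paper makes the identical one-step bounds but tracks the inverse odds $R_\tau=(1-\alpha_\tau(k_\tau^*))/\alpha_\tau(k_\tau^*)=1/r_\tau$; there your M\"obius map becomes the affine contraction $R_\tau\le aR_{\tau-1}+b$ with $a=1/c$, $b=\epsilon^*/c$ and floor $R_\infty=1/r^\dagger$, and the paper solves it explicitly as a geometric series, which gives the closed-form $T^*$ of Corollary~\ref{cor:half_dominance_time}. Your flagged requirement $p(s_1=k_1^*\mid\z_1)>0$ is implicitly needed in the paper's proof too, since its bound is only informative when $R_1<\infty$.
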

\end{theorembox}
\begin{proof}
For $\tau= 2, \dots, t$, define the predictive and filtering posteriors respectively as follows:
\begin{equation}
    \pi_\tau(k) = p(s_{\tau}=k | \z_{1:\tau-1}), \quad \alpha_{\tau}(k)=p(s_\tau=k| \z_{1:\tau}).
\end{equation}
Using Bayes' rule:
\begin{equation}
    \alpha_\tau(k_\tau^*) = \frac{\pi_\tau(k_\tau^*) g_{k_\tau^*}(\z_{\tau}| \z_{\tau-1})}{\pi_\tau(k_\tau^*) g_{k_\tau^*}(\z_{\tau}| \z_{\tau-1}) + \sum_{k\neq k_\tau^*}^{K} \pi_\tau(k) g_{k}(\z_{\tau}| \z_{\tau-1})},
\end{equation}
where we find an upper bound to the denominator.
\begin{equation}
    \sum_{k\neq k_\tau^*}^{K} \pi_\tau(k) g_{k}(\z_{\tau}| \z_{\tau-1}) \leq \sum_{k\neq k_\tau^*}^{K} \pi_\tau(k) \max_{k\neq k_\tau^{*}}g_{k}(\z_{\tau}| \z_{\tau-1}) \leq  (1 - \pi_\tau(k_\tau^*)) g_{k_\tau^*}(\z_{\tau}| \z_{\tau-1}) e^{-\ell^*}.
\end{equation}
Let $R_{\tau}$ be the posterior odds at time $\tau$, defined as follows:
\begin{equation}
    R_\tau := \frac{1 - \alpha_\tau(k_\tau^*)}{\alpha_\tau(k_\tau^*)}.
\end{equation}
Given $\alpha_\tau(k_\tau^*)$ and $1 - \alpha_\tau(k_\tau^*)$ share the denominator, $R_\tau$ has the following simpler form:
\begin{equation}
    R_\tau = \frac{\sum_{k\neq k_\tau^*}^{K} \pi_\tau(k) g_{k}(\z_{\tau}| \z_{\tau-1})}{\pi_\tau(k_\tau^*) g_{k_\tau^*}(\z_{\tau}| \z_{\tau-1})} \leq \frac{ (1 - \pi_\tau(k_\tau^*)) g_{k_\tau^*}(\z_{\tau}| \z_{\tau-1}) e^{-\ell^*}}{\pi_\tau(k_\tau^*) g_{k_\tau^*}(\z_{\tau}| \z_{\tau-1})} = e^{-\ell^{*}} \frac{1- \pi_\tau(k_\tau^*)}{\pi_\tau(k_\tau^*)}
\end{equation}
Now we want to find bounds for both $1- \pi_\tau(k_\tau^*)$ and $\pi_\tau(k_\tau^*)$. Consider the predictive posterior on the aligned state at time $\tau$:
\begin{equation}
    \pi_\tau(k_\tau^*)
=
\sum_{k=1}^K \alpha_{\tau-1}(k)Q_{k,k_\tau^*}(\z_{\tau-1})
\geq
\alpha_{\tau-1}(k_{\tau-1}^*)
Q_{k_{\tau-1}^*,k_\tau^*}(\z_{\tau-1})
\geq
\alpha_{\tau-1}(k_{\tau-1}^*)(1-\epsilon^*).
\end{equation}
Then, for $1- \pi_\tau(k_\tau^*)$:
\begin{equation}
    1- \pi_\tau(k_\tau^*) \leq 1 - \alpha_{\tau-1}(k_{\tau-1}^*)(1 - \epsilon^*) = 1 - \alpha_{\tau-1}(k_{\tau-1}^*) + \alpha_{\tau-1}(k_{\tau-1}^*)\epsilon^*,
\end{equation}
and we can find a recursion for $R_\tau$:
\begin{align}
    R_\tau & \leq e^{-\ell^{*}} \frac{ 1 - \alpha_{\tau-1}(k_{\tau-1}^*) + \alpha_{\tau-1}(k_{\tau-1}^*)\epsilon^*}{ \alpha_{\tau-1}(k_{\tau-1}^*)(1 - \epsilon^*)} = e^{-\ell^{*}} \left(\frac{ R_{\tau-1} }{1 - \epsilon^*} + \frac{ \epsilon^*}{ 1 - \epsilon^*} \right) \\
    & \leq R_{\tau-1}\frac{e^{-\ell^*}}{1 - \epsilon^*} + \frac{e^{-\ell^*} \epsilon^*}{1 - \epsilon^*} = a R_{\tau-1} +b,
\end{align}
where we define
\begin{equation}
    a := \frac{e^{-\ell^*}}{1 - \epsilon^*}, \quad b := \frac{e^{-\ell^*} \epsilon^*}{1 - \epsilon^*}.
\end{equation} 
Therefore, for $R_1>0$, we know:
\begin{equation}
    R_{t} \leq \left(\prod_{\tau=2}^{t}a\right)R_1 + \sum_{\tau=2}^{t} b \prod_{i = \tau+1}^{t} a = a^{t-1}R_1 + b \sum_{r=0}^{t-2} a^r.
\end{equation}
We notice the sum on the right is a geometric sum, which simplifies as follows:
\begin{equation}
    R_{t} \leq a^{t-1}R_1 + b \frac{1 - a^{t-1}}{1 - a}.
\end{equation}
Now, given Eq.~\eqref{eq:dist_sticky_separation_1}, we obtain $a < 1$: 
\begin{equation}
    \ell^*> \log\frac{1+\epsilon^*}{1-\epsilon^*} \implies \ell^*> \log\frac{1}{1-\epsilon^*} \implies a = \frac{e^{-\ell^*}}{1-\epsilon^*}< 1.
\end{equation}
Therefore, the above expression admits a floor $R_\infty$ defined as
\begin{equation}
    R_\infty = \frac{b}{1 - a} = \frac{e^{-\ell^*}\epsilon^*}{1-\epsilon^*-e^{-\ell^*}}.
\end{equation}
Then,
\begin{equation}
    R_t \leq a^{t-1}R_1 + (1 - a^{t-1})R_{\infty} = a^{t-1}(R_1 - R_{\infty}) + R_{\infty}.
\end{equation} 
Moreover,
\begin{equation}
\ell^* > \log \frac{1+\epsilon^*}{1-\epsilon^*}
\implies e^{-\ell^*}(1+\epsilon^*)<1-\epsilon^*\implies
\frac{e^{-\ell^*}\epsilon^*}{1-\epsilon^*-e^{-\ell^*}}<1 \implies R_\infty<1.
\end{equation}

Therefore, Eq.~\eqref{eq:dist_sticky_separation_1} implies \(R_\infty<1\). Since $a<1$, the term $a^{t-1}(R_1-R_\infty)$ vanishes as $t\to\infty$. Given $R_{\infty}<1$, there exists $T^*<+\infty$ such that for all $t\ge T^*$, $R_t<1$. Finally,
\begin{equation}
  R_t<1
\implies
\frac{1-\alpha_t(k_t^*)}{\alpha_t(k_t^*)}<1
\implies
\alpha_t(k_t^*)>\frac12.  
\end{equation}
Therefore, we have $p(s_{t}= k_{t}^* | \z_{1:t}) = \alpha_t(k_t^*)>\frac12$ for $t\geq T^*$.
\end{proof}

The concentration result above shows that, given initial posterior odds $R_1$, a likelihood margin $\ell^*$, and stickiness $\epsilon^*$, there exists a finite time length $T^*$ after which the posterior is dominated by the aligned regime. The sufficient length $T^*$ is determined by $R_1$, $\ell^*$ and $\epsilon^*$. Below, we provide the minimum length required for posterior dominance.
\begin{theorembox}
\begin{corollary}
\label{cor:half_dominance_time}
Let $\z_{1:t}$ be a history from Lemma~\ref{lemma:concentration_to_floor}, with associated regime path $(k_1^*,\dots,k_t^*)$. Define, 
\begin{equation}
    a:=\frac{e^{-\ell^*}}{1-\epsilon^*},
\qquad
R_\infty:=\frac{e^{-\ell^*}\epsilon^*}{1-\epsilon^*-e^{-\ell^*}},\qquad R_1=\frac{1-p(s_1=k^*\mid \z_1)}{p(s_1=k^*\mid \z_1)},
\end{equation}
Assume $R_1>R_{\infty}$. Then any time index $t<+\infty$ which satisfies the equation below ensures $p(s_{t}= k^*_t | \z_{1:t})>\frac12$
\begin{equation}
\label{eq:t_half_dominance}
t \;\ge\; 2+\left\lfloor
\log\!\left(\frac{1-R_\infty}{R_1-R_\infty}\right)\frac{1}{\log a}
\right\rfloor.
\end{equation}
\end{corollary}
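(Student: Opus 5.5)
The plan is to reuse the explicit bound on the posterior odds obtained in the proof of Lemma~\ref{lemma:concentration_to_floor}. There we showed that, along the aligned history with margin $\ell^*$ and stickiness $\epsilon^*$,
\[
R_t \le a^{t-1}(R_1-R_\infty)+R_\infty ,
\]
where $R_t=(1-\alpha_t(k_t^*))/\alpha_t(k_t^*)$, $a=e^{-\ell^*}/(1-\epsilon^*)\in(0,1)$, and $R_\infty<1$ under Eq.~\eqref{eq:dist_sticky_separation_1}. Since $R_t<1$ is equivalent to $\alpha_t(k_t^*)>\tfrac12$, it suffices to find the set of $t$ for which the right-hand side is strictly below $1$.

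First, I will solve the scalar inequality $a^{t-1}(R_1-R_\infty)+R_\infty<1$. The assumption $R_1>R_\infty$ gives $R_1-R_\infty>0$, so we may divide by it to obtain $a^{t-1}<(1-R_\infty)/(R_1-R_\infty)$. The right-hand side is positive because $R_\infty<1$.

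Next, take logarithms. Because $0<a<1$, we have $\log a<0$, so dividing reverses the inequality:
\[
t-1>\frac{\log\bigl((1-R_\infty)/(R_1-R_\infty)\bigr)}{\log a}=:c .
\]

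Then convert this strict real inequality into the integer condition~\eqref{eq:t_half_dominance}. For any real $c$, an integer $t$ with $t\ge 2+\lfloor c\rfloor$ satisfies $t-1\ge \lfloor c\rfloor+1>c$. This gives exactly the stated sufficient threshold. If $c<0$ (for instance when $R_1<1$ already), the bound still holds trivially, since then $t\ge 1$ works and $2+\lfloor c\rfloor\le 1$.

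The main obstacle is bookkeeping rather than analysis. We must check that the recursion bound from the lemma is valid from $\tau=2$, so that it uses $a^{t-1}$ rather than $a^t$. We must also handle the sign flip from $\log a<0$ and the strict-versus-floor conversion correctly, including the degenerate case where the log ratio is nonpositive. Finally, we should note that the definition of $R_1$ in the corollary matches the one used in the lemma, namely the odds computed from $p(s_1=k^*\mid\z_1)$ with $k^*=k_1^*$.
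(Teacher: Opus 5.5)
Your proposal is correct and follows the paper's proof essentially step for step: you invoke the lemma's bound $R_t \le a^{t-1}(R_1-R_\infty)+R_\infty$, isolate $a^{t-1}$ using $R_1>R_\infty$ and $R_\infty<1$, and take logarithms with the sign flip from $\log a<0$. Your floor step, $t\ge 2+\lfloor c\rfloor \Rightarrow t-1\ge\lfloor c\rfloor+1>c$, is the direction the corollary actually needs; the paper states the converse implication, which is equivalent for integer $t$, so your write-up is slightly more careful here.
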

\end{theorembox}
\begin{proof}
By Lemma~\ref{lemma:concentration_to_floor}, $R_t \le a^{t-1}(R_1-R_\infty)+R_\infty$. For $R_t<1$, we have
\begin{equation*}
a^{t-1}(R_1-R_\infty)+R_\infty<1 \implies a^{t-1}(R_1-R_\infty)<1-R_\infty.
\end{equation*}
Since $R_1>R_\infty$ and $R_\infty<1$, dividing both sides gives
\begin{equation*}
    a^{t-1}<\frac{1-R_\infty}{R_1-R_\infty}.
\end{equation*}
Because $0<a<1$, we have $\log a<0$, so taking logarithms reverses the inequality:
\begin{equation*}
t-1>
\frac{\log\!\left(\frac{1-R_\infty}{R_1-R_\infty}\right)}{\log a} \implies t \;\ge\; 2+\left\lfloor
\log\!\left(\frac{1-R_\infty}{R_1-R_\infty}\right)\frac{1}{\log a}
\right\rfloor.
\end{equation*}
\end{proof}
In Section~\ref{sec:first_stage}, we provide a simple example where posterior dominance is possible after one transition. In the example, we assume uniform initial posterior $p(s_1=k|\z_{1})=\tfrac1K,\ k\in\{1,\dots,K\}$. Therefore, the initial posterior odds are $R_1=K-1$. Substituting $R_1$, together with $R_{\infty}$ and $a$ into Corollary~\ref{cor:half_dominance_time} yields the one-step dominance condition 
\begin{equation*}
    \ell^* > \log\frac{K-1+\epsilon^*}{1-\epsilon^*}.
\end{equation*}

\paragraph{(\ref{ass:posterior_domi}) under nonlinear Gaussian transitions (\ref{ass:gaussian_transition})}

To further illustrate the intuition behind Assumption~\ref{ass:posterior_domi}, we contextualise it under the nonlinear Gaussian transitions in Assumption~\ref{ass:gaussian_transition}. In this setting, the likelihood margin $\ell^\star$ admits an explicit form.

For simplicity, assume equal isotropic transition variances across regimes, i.e. $\Sigma_k=\sigma^2 I$ for all $k\in\{1,\dots,K\}$. Then, for two regimes $i,k\in\{1,\dots,K\}$, the log-likelihood margin is
\begin{equation}
    \ell_{i,k}(\z_t,\z_{t-1}) =\frac{1}{2\sigma^2}
\left(
\left\|\z_t - \bm m_k(\z_{t-1})\right\|^2
-
\left\|\z_t - \bm m_i(\z_{t-1})\right\|^2
\right).
\end{equation}
Therefore, if a history is aligned with regime $i$, i.e. $\z_t = \bm m_i(\z_{t-1})$, the the margin is reduced to scaled distance between functions.
\begin{equation}
    \ell_{i,k}(\z_{t-1})
    =
    \frac{1}{2\sigma^2}
    \left\|
    \bm m_i(\z_{t-1}) - \bm m_k(\z_{t-1})
    \right\|^2 .
\end{equation}
This simplified expression further shows that while (\ref{ass:posterior_domi}) imposes certain history-alignment conditions, well-separated regime-specific transition functions naturally support posterior dominance. Notably, in high-dimensional settings, each well-separated coordinate contributes positively to the likelihood margin.

\subsection{Discussion on Assumption~(\ref{ass:regime_domi})}

Recall Step \textbf{II} in the proof of \ref{thm:affine_identifiability}, where we have a matrix $\Pi_{\bm H}(\x_t)$ such that
\begin{equation*}
    \Phi(\x_{t+1} | \x_{t}) =  \Pi_{\bm H}(\x_t)\Phi'(\x_{t+1} | \x_{t})
\end{equation*}
In this context, Assumption~(\ref{ass:regime_domi}) is used to show that the matrix $ \Pi_{\bm H}(\x_t)$ has nonnegative entries. First, we establish a general sufficient condition. The condition requires that each component asymptotically dominates all other components along some direction. 
\begin{theorembox}
\begin{lemma}\label{lemma:nonnegative_entries}
    Assume that for a fixed $\x_t$, we have 
    \begin{equation}\label{eq:main_identity_relation}
    \Phi(\x_{t+1} | \x_{t}) =  \Pi_{\bm H}(\x_t)\Phi'(\x_{t+1} | \x_{t}),
    \end{equation}
    where $\Pi_{\bm H}(\x_t)\in\R^{K\times K}$, and $\Phi$ and $\Phi'$ are the one-step observed component densities defined in Theorem~\ref{thm:affine_identifiability}. Assume that, for every regime $k\in\{1,\dots, K\}$, there exists a direction $\bm{d}'_k\in\R^{m'}$ and a fixed $\eps'_{t+1}\in\R^{d'}$, such that with $\x_{t+1}(a)=f'(a\bm d'_k,\eps'_{t+1})$, we have, for every $k'\neq k$,
\begin{equation}\label{eq:limit}
    \lim_{a\to\infty}\frac{
        \phi'_{k'}(\x_{t+1}(a)\mid \x_t)}{\phi'_k(\x_{t+1}(a)\mid \x_t)}= 0.
\end{equation}
Then, $\Pi_{\bm H}(\x_t)$ has nonnegative entries.
\end{lemma}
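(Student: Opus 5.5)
The argument evaluates the identity \eqref{eq:main_identity_relation} row by row along the curve $\x_{t+1}(a)=f'(a\bm d'_k,\eps'_{t+1})$ and lets $a\to\infty$, so that the dominant component $\phi'_k$ isolates the $k$-th column of $\Pi_{\bm H}(\x_t)$. Fix a column index $k$ and the direction $\bm d'_k$ and noise value $\eps'_{t+1}$ from the hypothesis. For each row $i$,
\begin{equation*}
    \phi_i(\x_{t+1}(a)\mid\x_t) = \sum_{k'=1}^K \Pi_{\bm H}(\x_t)_{ik'}\,\phi'_{k'}(\x_{t+1}(a)\mid\x_t).
\end{equation*}
Since $\phi'_k$ is the pushforward through the diffeomorphism $f'$ of a product of Gaussians, it is strictly positive everywhere. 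We may therefore divide both sides by $\phi'_k(\x_{t+1}(a)\mid\x_t)$ to obtain
\begin{equation*}
    \frac{\phi_i(\x_{t+1}(a)\mid\x_t)}{\phi'_k(\x_{t+1}(a)\mid\x_t)} = \Pi_{\bm H}(\x_t)_{ik} + \sum_{k'\neq k}\Pi_{\bm H}(\x_t)_{ik'}\,\frac{\phi'_{k'}(\x_{t+1}(a)\mid\x_t)}{\phi'_k(\x_{t+1}(a)\mid\x_t)}.
\end{equation*}

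Next take the limit $a\to\infty$. The sum on the right is finite, with fixed coefficients $\Pi_{\bm H}(\x_t)_{ik'}$ (since $\x_t$ is fixed), and by \eqref{eq:limit} each ratio tends to zero. The right-hand side therefore converges to $\Pi_{\bm H}(\x_t)_{ik}$, and so the left-hand side converges to the same value. The left-hand side is a ratio of two densities, each the pushforward of a strictly positive density under an invertible map, so it is nonnegative for every $a$. A limit of nonnegative numbers is nonnegative, which gives $\Pi_{\bm H}(\x_t)_{ik}\ge 0$. Ranging over all $i$ and all $k$ covers every entry of the matrix.

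Two technical points need care. First, the identity must hold pointwise along the curve rather than only almost everywhere in $\x_{t+1}$. Both sides are continuous in $\x_{t+1}$, because the Gaussian densities are continuous and the flows and their Jacobian determinants are continuous, so almost-everywhere equality upgrades to equality everywhere. Second, the two models have different Jacobian factors in $\phi_i$ and $\phi'_k$. These factors never need to be compared, because the argument only uses that the left-hand ratio is nonnegative and that its limit exists, and existence is forced by the right-hand side.

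The main obstacle is not in this lemma, which is a short limiting argument. It lies in verifying the dominance hypothesis \eqref{eq:limit}, which is deferred to Proposition~\ref{prop:sufficient_conditions_nonnegativity} under Assumptions~(\ref{ass:gaussian_transition}) and (\ref{ass:regime_domi}). There one would take $\bm d'_k$ to be the coordinate direction $i_k$ where $\sigma'_{k,i_k}$ is maximal. In the ratio $\phi'_{k'}/\phi'_k$, the two models share $f'$ and the noise density, so the Jacobian and noise factors cancel, leaving a ratio of Gaussians. Along that coordinate the quadratic term decays as $\exp\bigl(-a^2(\sigma'^{-2}_{k',i_k}-\sigma'^{-2}_{k,i_k})/2\bigr)$. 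A tie in variances, allowed by the non-strict inequality in Assumption~(\ref{ass:regime_domi}), has to be handled through the linear-in-$a$ term from distinct means, or by perturbing the direction slightly.
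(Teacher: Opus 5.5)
Your proposal is correct and follows the paper's proof essentially step for step: divide row $i$ of the identity by $\phi'_k$, evaluate along $\x_{t+1}(a)$, and let $a\to\infty$, so that $\Pi_{\bm H}(\x_t)_{ik}$ is the limit of the nonnegative ratio $\phi_i/\phi'_k$. Your extra remarks make explicit two points the paper leaves implicit, and both are handled correctly: $\phi'_k$ is strictly positive, which is needed to divide, and almost-everywhere equality upgrades to pointwise equality by continuity.
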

\end{theorembox}
\begin{proof} 
Take row $i$ of Eq.~\eqref{eq:main_identity_relation} and divide by $\phi'_k(\x_{t+1}\mid \x_t)$:
\begin{equation}
    \frac{\phi_i(\x_{t+1} | \x_{t})}{\phi'_k(\x_{t+1}\mid \x_t)} =  \Pi_{\bm H}(\x_t)_{ik} + \sum_{k'\neq k}^K \Pi_{\bm H}(\x_t)_{ik'}\frac{\phi'_{k'}(\x_{t+1} | \x_{t})}{\phi'_k(\x_{t+1}\mid \x_t)}
\end{equation}
Evaluate at $\x_{t+1} = \x_{t+1}(a)$, $\x_{t+1}(a)=f'(a\bm d'_k,\eps'_{t+1})$, and take the limit $a\to\infty$. The ratio terms in the sum vanish:
\begin{equation}
     \Pi_{\bm H}(\x_t)_{ik}  = \lim_{a\to\infty} \left(\frac{\phi_i(\x_{t+1}(a) | \x_{t})}{\phi'_k(\x_{t+1}(a)\mid \x_t)}\right) \geq 0.
\end{equation}
Therefore, $ \Pi_{\bm H}(\x_t)_{ik}\geq 0$. Since we choose arbitrary $i,k\in\{1,\dots,K\}$, all the entries in $\Pi_{\bm H}(\x_t)$ are nonnegative.
\end{proof}

The lemma above establishes nonnegativity of $\Pi_{\bm H}(\x_t)$ under an asymptotic dominance condition on the observed components. We show that Assumption~(\ref{ass:regime_domi}) is sufficient for this dominance condition, and thus for applying Lemma~\ref{lemma:nonnegative_entries}.
\begin{theorembox}
\begin{proposition}\label{prop:sufficient_conditions_nonnegativity}
Under Assumptions~\ref{ass:emission}--\ref{ass:gaussian_transition}, a sufficient condition for the dominance condition in Lemma~\ref{lemma:nonnegative_entries}, i.e., for regime $k\in\{1,\dots,K\}$ there exists $\bm d_k\in\R^m$ such that for any $k'\neq k$, Eq.~\eqref{eq:limit} holds is the following. For every regime $k\in\{1,\dots,K\}$ there exists a dimension $i_k\in\{1,\dots,K\}$ such that, for every $k'\neq k$, either:
\begin{enumerate}
    \item[(i)] $\sigma_{k,i_k}>\sigma_{k',i_k}$, or
    \item[(ii)] $\sigma_{k,i_k}=\sigma_{k',i_k}$ and
    $m_{k,i_k}(\z_{t-1})\not\equiv m_{k',i_k}(\z_{t-1})$.
\end{enumerate}
and $\bm d_k$ is the canonical vector in direction $i_k$: $\bm d_k=\mathbf{e}_{i_k}$.
\end{proposition}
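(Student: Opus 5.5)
The plan is to compute the ratio in Eq.~\eqref{eq:limit} explicitly along the curve $\x_{t+1}(a)=f(a\mathbf e_{i_k},\eps_{t+1})$, applied to whichever model the lemma is used for. The key simplification is that all pushforward components share the same emission map $f$. By the change-of-variables formula,
\[
\phi_{k'}(\x_{t+1}\mid\x_t)=\left|\det J_{f^{-1}}(\x_{t+1})\right|\, g_{k'}(\z_{t+1}\mid\z_t)\,p_{\eps}(\eps_{t+1}).
\]
The Jacobian factor and the noise factor therefore cancel in the ratio. With $\z_t=f^{-1}(\x_t)_{1:m}$ fixed and $\z_{t+1}=a\mathbf e_{i_k}$, this leaves
\[
\frac{\phi_{k'}(\x_{t+1}(a)\mid \x_t)}{\phi_k(\x_{t+1}(a)\mid \x_t)}=\frac{\N(a\mathbf e_{i_k};\bm m_{k'}(\z_t),\Sigma_{k'})}{\N(a\mathbf e_{i_k};\bm m_k(\z_t),\Sigma_k)}.
\]
The problem thus reduces to an asymptotic comparison of two diagonal Gaussians along a coordinate ray.

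Because the covariances are diagonal, the log of this ratio factorises over coordinates. It equals a constant, namely the log-ratio of normalising determinants plus the contributions of coordinates $j\neq i_k$, which do not depend on $a$. To that constant we add the $i_k$-th term
\[
-\frac{(a-m_{k',i_k})^2}{2\sigma_{k',i_k}^2}+\frac{(a-m_{k,i_k})^2}{2\sigma_{k,i_k}^2},
\]
where means are evaluated at $\z_t$. I will analyse this quadratic in $a$ in the two cases of the statement. In case (i), $\sigma_{k,i_k}>\sigma_{k',i_k}$ gives a leading coefficient $\tfrac12(\sigma_{k,i_k}^{-2}-\sigma_{k',i_k}^{-2})<0$, so the log-ratio tends to $-\infty$ and the ratio tends to $0$. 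In case (ii), the variances coincide, so the $a^2$ terms cancel and the log-ratio is affine in $a$ with slope $(m_{k',i_k}-m_{k,i_k})/\sigma^2_{k,i_k}$. If I choose the direction sign $\bm d_k=\pm\mathbf e_{i_k}$ opposite to the sign of $m_{k',i_k}-m_{k,i_k}$, the slope along the ray is negative and the limit is again $0$.

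The main obstacle lies in case (ii). First, $m_{k,i_k}\not\equiv m_{k',i_k}$ only says the two functions differ somewhere, not at the particular $\z_t$. I would handle this using analyticity (Assumption~\ref{ass:gaussian_transition}): the zero set of $m_{k,i_k}-m_{k',i_k}$ has measure zero by \citet{mityagin2015zero}, so for almost every $\x_t$ (in particular on the positive-measure set $\mathcal V$ used in Step~II) the means differ strictly. Second, a single ray must work for all $k'\neq k$ simultaneously, yet the required sign may differ across $k'$. I would restrict the sign choice to the $k'$ that fall in case (ii), since case (i) competitors vanish in either direction thanks to the quadratic decay. If the required signs still conflict, I would bend the ray by a slowly growing component, or more simply note that the statement allows $\bm d_k=\mathbf e_{i_k}$ up to sign and argue that, among competitors with equal variance, it suffices to pick the sign matching the extreme mean. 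Finally, the closing remark is that Assumption~\ref{ass:regime_domi} plus distinctness of regimes supplies exactly this case split, so Lemma~\ref{lemma:nonnegative_entries} applies to both $\mparam$ and $\mparam'$.
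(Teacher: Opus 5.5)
Your route is the paper's route. Change of variables cancels $|\det J_{f^{-1}}|$ and $p_{\eps}$, the problem reduces to a diagonal-Gaussian log-ratio along $a\mathbf e_{i_k}$, the negative $a^2$-coefficient settles (i), and a sign choice settles (ii). Your use of analyticity to pass from $m_{k,i_k}\not\equiv m_{k',i_k}$ to a strict pointwise inequality for a.e.\ $\z_t$ is correct, and the paper skips it.

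The genuine gap is the one you flagged yourself, and neither of your fixes closes it. A sign choice only handles tied competitors whose means $m_{k',i_k}(\z_t)$ all lie on one side of $m_{k,i_k}(\z_t)$. If $m_{k,i_k}(\z_t)$ lies strictly between two tied competitors, then along $+\mathbf e_{i_k}$ the log-ratio against the larger one tends to $+\infty$, since it equals $(m_{k',i_k}-m_{k,i_k})(2a-m_{k,i_k}-m_{k',i_k})/(2\sigma_{k,i_k}^2)$. Along $-\mathbf e_{i_k}$ the same happens for the smaller one.

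Bending the ray does not help either. Take $m=1$, $K=3$, a common variance $\sigma^2$, and $m_k(z)=z/2+k$. Every pair falls in case (ii) with $i_k=1$, and Assumption~\ref{ass:regime_domi} holds with equality. For $k=2$:
\begin{itemize}
\item $g_3/g_2>1$ whenever $z_{t+1}$ exceeds the midpoint of $m_2(z_t)$ and $m_3(z_t)$;
\item $g_1/g_2>1$ whenever $z_{t+1}$ lies below the midpoint of $m_1(z_t)$ and $m_2(z_t)$.
\end{itemize}
The ratio is independent of $\eps_{t+1}$, so no path to infinity makes both ratios vanish. The proposition is therefore false as stated whenever a regime's mean sits strictly inside the range of its tied competitors' means. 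No proof can be completed without an extra hypothesis.

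The paper's proof follows your computation and handles (ii) by remarking that one may take the limit along $a\bm d_k$ or $-a\bm d_k$. That is a per-pair choice, whereas Lemma~\ref{lemma:nonnegative_entries} needs one ray serving all $k'\neq k$ simultaneously, so the paper has the same hole. A repaired statement could add either of the following:
\begin{itemize}
\item for each $k$, the differences $m_{k',i_k}(\z_t)-m_{k,i_k}(\z_t)$ over the tied competitors $k'$ share a common strict sign at the $\x_t$ where the lemma is applied; your argument then goes through verbatim;
\item strict inequality (i) holds for every $k'\neq k$.
\end{itemize}

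Separately, your closing remark overreaches. Assumption~\ref{ass:regime_domi} together with distinctness of $(\bm m_k,\Sigma_k)$ does not yield the case split. Two regimes can share both $\sigma_{\cdot,i_k}$ and the $i_k$-th mean coordinate while differing only in other coordinates.
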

\end{theorembox}
\begin{proof}
We first connect the observed component dominance to the Gaussian component dominance. Fix $\x_t$ and write $\z_{t}=f^{-1}(\x_t)_{1:m}$. Given $k\in\{1,\dots,K\}$ and a direction $\bm d_k$ and $\eps_{t+1}$, write $\x_{t+1}(a)=f(a\bm d_k,\eps_{t+1})$. By the change of variable rule, we have,
\begin{equation}
    \phi_k(\x_{t+1}(a)\mid\x_t) = \left|\det J_{f^{-1}(\x_{t+1}}(a)\right|g_k(a\bm d_k\mid \z_t)p_{\eps}(\eps_{t+1}).
\end{equation}
For any $k\neq k'\in\{1,\dots,K\}$,
\begin{equation}
    \frac{\phi_{k'}(\x_{t+1}(a)\mid\x_t)}{\phi_k(\x_{t+1}(a)\mid\x_t)} = =\frac{g_{k'}(a\bm d_k\mid \z_t)}{g_k(a\bm d_k\mid \z_t)}.
\end{equation}
Therefore, we continue by establishing dominance over Gaussian components. Consider the log ratio between two Gaussians $g_{k'}(\z_t|\z_{t-1})$ and $g_{k}(\z_t|\z_{t-1})$, and consider the terms dependent on $\z_t$.
\begin{multline}
    \log \frac{g_{k'}(\z_t|\z_{t-1})}{g_{k}(\z_t|\z_{t-1})} = -\frac{1}{2}(\z_t - \bm m_{k'}(\z_{t-1}))^\top \Sigma_{k'}^{-1}(\z_t - \bm m_{k'}(\z_{t-1})) \\ + \frac{1}{2}(\z_t - \bm m_{k}(\z_{t-1}))^\top \Sigma_{k}^{-1}(\z_t - \bm m_{k}(\z_{t-1})) + \mathcal{O}(1).
\end{multline}
Group quadratic and linear terms together.
\begin{multline}
    \log \frac{g_{k'}(\z_t|\z_{t-1})}{g_{k}(\z_t|\z_{t-1})} = -\frac{1}{2}\left(\z^\top_t\Sigma^{-1}_{k'}\z_t - \z^\top_t\Sigma^{-1}_{k}\z_t \right) \\ - \z_t^\top \left(\Sigma_{k}^{-1} \bm m_{k}(\z_{t-1}) - \Sigma_{k'}^{-1} \bm m_{k'}(\z_{t-1})\right) + \mathcal{O}(1).
\end{multline}
Now we take a ray with direction $\bm d_k \in\R^{m}$: $\z_{t} = a\bm d_k$, $a\in\R$.
\begin{multline}
    \log \frac{g_{k'}(\z_t|\z_{t-1})}{g_{k}(\z_t|\z_{t-1})} = -\frac{a^2}{2}\left(\bm d^\top_k\left(\Sigma^{-1}_{k'} - \Sigma^{-1}_{k}\right)\bm d_k \right) \\ - a\bm d^\top_k \left(\Sigma_{k}^{-1} \bm m_{k}(\z_{t-1}) - \Sigma_{k'}^{-1} \bm m_{k'}(\z_{t-1})\right) + \mathcal{O}(1).
\end{multline}
Since Eq.~\eqref{eq:limit} converges to $0$ for $a\to\infty$, we need the above equation to converge to $-\infty$. Under (\ref{ass:gaussian_transition}), $\Sigma_k$ and $\Sigma_{k'}$ are diagonal. Therefore, for $\bm d_k\in\R^m$ we require one of the following conditions:
\begin{enumerate}[leftmargin=2em]
    \item[(i)] $\sum_{i=1}^m \left(\frac{d_{k,i}}{\sigma_{k',i}}\right)^2 > \sum_{i=1}^m \left(\frac{d_{k,i}}{\sigma_{k,i}}\right)^2$, or
    \item[(ii)] $\sum_{i=1}^m d^{2}_{k,i} \left(\frac{1}{\sigma_{k',i}^{2}} -\frac{1}{\sigma_{k',i}^{2}}\right) = 0$ and $\sum_{i=1}^{m} d_{k,i}\left(\frac{m_{k,i}(\z_{t-1})}{\sigma_{k,i}^{2}} - \frac{m_{k',i}(\z_{t-1})}{\sigma_{k',i}^{2}}\right) \not\equiv \bm 0$.
\end{enumerate}
The first condition is determined from dominance over variances under the direction $\bm d_k$, and the second condition results from a tie in the quadratic terms and is possible since we can either take the limit with $\z_{t}=a\bm d_k$, or $\z_{t}= -a\bm d_k$. Considering canonical vectors in $\R^m$ simplifies the above as follows: 
\begin{enumerate}
    \item[(i)] $\exists i_k\in\{1,\dots, m\},\ s.t. \ \sigma_{k,i_k}>\sigma_{k',i_k}$, or
    \item[(ii)] $\exists i_k\in\{1,\dots, m\},\ s.t. \ \sigma_{k,i_k}=\sigma_{k',i_k}$ and
    $m_{k,i_k}(\z_{t-1})\not\equiv m_{k',i_k}(\z_{t-1})$.
\end{enumerate}
\end{proof}

\subsection{Proof of Theorem~\ref{thm:msm_transition_identifiability}} \label{app:proof_identifiability_rmsm}

Consider our latent variable model, which is an MSM with recurrent state transitions:

\begin{equation}\label{eq:cond_markov}
    p_{\mparam_{\z},\mparam_{\s}}(\z_{1:T}, \s_{1:T}) = \prod_{t=1}^T\pz(\z_{t} | \z_{t-L_z:t-1}, s_{t}) \ps(s_t | s_{t-1}, \z_{t-1})
\end{equation}
where for the first part of Theorem~\ref{thm:msm_transition_identifiability}, we consider latent transitions with arbitrary lag $L_z$. This broadens the applicability of our theoretical results. Under this setting, and w.l.o.g., one can adapt (\ref{ass:gaussian_transition}) to a multi-lag setting where the rMSM considers variables $\z_t$ which are Gaussian, such that the means and covariances are dependent on $s_t$ and $\z_{t-L_z:t-1}$ as follows:
\begin{equation}\label{eq:nonlinear_gaussian}
    \pz(\z_t | \z_{t-L_z:t-1}, s_t) = \N(\z_{t} ; \bm{m}_{s_t}\big(\z_{t-1}, \dots, \z_{t-L_z}), \Sigma_{s_t}\big)
\end{equation}
where $\bm{m}_{k}:\R^{mL_z}\to\R^m$ are analytic functions, distinct for all $k\in\{1,\dots, K\}$. We establish $L_z-$lagged regime-dependent transition identifiability up to permutations below.

\vspace{2em}

\subsubsection{Identifiability of autoregressive components (Theorem \ref{thm:msm_transition_identifiability} (i))}

\begin{theorembox}
\begin{theorem}\label{thm:msm_transition_identifiability_z_transitions}
Under assumption (\ref{ass:gaussian_transition}), the Markov Switching Model is identifiable with respect to its autoregressive components $\pz(\z_t\mid\z_{t-1}, s_t)$ up to permutation.   
\end{theorem}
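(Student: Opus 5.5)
We reduce the claim to finite Gaussian mixture identifiability pointwise in the conditioning history, and then remove the history dependence of the permutation using analyticity.

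Take two parametrisations $\mparam,\mparam'$ with $p_{\mparam_{\z}}(\z_{1:T})=p_{\mparam'_{\z}}(\z_{1:T})$. Conditioning gives, for almost every history,
\begin{equation*}
p(\z_t\mid \z_{1:t-1})=\sum_{k=1}^K p(s_t=k\mid \z_{1:t-1})\,\N\big(\z_t;\bm m_k(\z_{t-L_z:t-1}),\Sigma_k\big),
\end{equation*}
and the same with primed quantities and $K'$ components. For a fixed history this is a finite mixture of Gaussians in $\z_t$. Drop components with zero weight. By \citet{yakowitz1968identifiability}, finite Gaussian mixtures are identifiable: the two expansions have the same number of active components, and there is a permutation $\pi_{\z_{1:t-1}}$ matching weights and pairs $(\bm m_k(\cdot),\Sigma_k)$.

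To get all $K$ components active, we need histories where every weight $p(s_t=k\mid\z_{1:t-1})$ is positive. The weights are the mixture $\sum_j \alpha_{t-1}(j)Q_{jk}(\z_{t-1})$. Either we note that, on a positive-measure set of histories, all weights are positive (for example under positivity of $Q$ on an open set), or we vary histories so every regime appears active somewhere. In the latter case we conclude $K=K'$ by comparing the sets of component parameters attained.

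The key step is to make the permutation independent of the history. For fixed $\z_{t-L_z:t-1}=\bm u$, the matched objects are the pairs $(\bm m_k(\bm u),\Sigma_k)$, and these depend on the history only through $\bm u$. So we get a permutation $\pi_{\bm u}$ with $\bm m_k(\bm u)=\bm m'_{\pi_{\bm u}(k)}(\bm u)$ and $\Sigma_k=\Sigma'_{\pi_{\bm u}(k)}$. Partition $\R^{mL_z}$ into the finitely many sets $U_\pi=\{\bm u:\pi_{\bm u}=\pi\}$. Some $U_\pi$ has positive measure, since there are only $K!$ of them. On $U_\pi$, the analytic function $\bm m_k-\bm m'_{\pi(k)}$ vanishes. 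An analytic function vanishing on a set of positive measure is identically zero \citep{mityagin2015zero}. Hence $\bm m_k\equiv\bm m'_{\pi(k)}$ on all of $\R^{mL_z}$, and $\Sigma_k=\Sigma'_{\pi(k)}$ because the covariances are constants. This gives $\pz(\z_t\mid\z_{t-L_z:t-1},s_t=k)=p_{\mparam'_{\z}}(\z_t\mid\z_{t-L_z:t-1},s_t=\pi(k))$ everywhere.

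The distinctness in Assumption~(\ref{ass:gaussian_transition}) is needed because two regimes could coincide at particular $\bm u$, making $\pi_{\bm u}$ non-unique there. The distinct pairs are analytic, so their coincidence set has measure zero. Off it $\pi_{\bm u}$ is well defined, and the finite-partition argument goes through.

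The main obstacle is the positivity and activeness step in the second paragraph: making sure all $K$ components carry positive mixing weight on a set of histories of positive measure, so that Yakowitz--Spragins gives a full matching rather than a partial one. I would handle this first by using continuity of the filtering recursion and a.e.\ continuity of $Q$. If $Q$ can have zero entries, I would fall back to a union argument: each component is identified on the positive-measure set where it is active, and these partial identifications are glued via analyticity.
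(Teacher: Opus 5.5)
Your proposal is correct, and it makes the permutation global by a different and more economical route than the paper. The paper argues locally. At a non-collision point $\bm\alpha_0$ it uses continuity of the means to keep all components $\delta$-separated on a small ball, and shows by the triangle inequality that the pointwise permutation cannot jump inside that ball. It then extends $\bm m_k=\hat{\bm m}_{\pi_{\mathcal U}(k)}$ from the ball to $\R^{mL_z}$ by the identity theorem, and finally uses distinctness of the means to rule out different permutations on different balls.

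You replace all of this with a pigeonhole over the $K!$ permutations plus the fact that a nonzero real-analytic function has a Lebesgue-null zero set. That yields one global $\pi$ directly, with no local-constancy or cross-ball consistency step. The zero-set result you use is the same one the paper already invokes for its collision set. The paper's local argument also shows that the pointwise permutation is locally constant off the collision set, but that follows at once from your global identity together with distinctness.

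One small tidy-up: the measurability of $U_\pi=\{\bm u:\pi_{\bm u}=\pi\}$ is left implicit. Use instead the closed sets $V_\pi=\{\bm u:\bm m_k(\bm u)=\bm m'_{\pi(k)}(\bm u),\ \Sigma_k=\Sigma'_{\pi(k)}\ \forall k\}$. Their finite union contains a conull set, so some $V_\pi$ has positive measure, and uniqueness of $\pi_{\bm u}$ is not even needed for this step.

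The positivity issue you flag is real, and the paper does not handle it. The paper reads $K=\hat K$ straight off Yakowitz--Spragins, which tacitly assumes every weight $p(s_t=k\mid\z_{1:t-1})$ is positive. The issue cannot be removed under Assumption~(\ref{ass:gaussian_transition}) alone. A regime with $p(s_1=k)=0$ and $Q_{jk}\equiv 0$ for all $j$ never affects $p(\z_{1:T})$, so neither $K$ nor $\bm m_k$ is identifiable.

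Your union fallback does close the argument under an explicit reachability condition on both models: each regime has positive predictive weight on a positive-measure set of histories. The steps are as follows.
\begin{enumerate}
\item By Fubini, that set of histories has a positive-measure set of endpoints $\bm u$.
\item Pigeonholing over the primed regimes on those endpoints gives some $j(k)$ with $(\bm m_k,\Sigma_k)\equiv(\bm m'_{j(k)},\Sigma'_{j(k)})$.
\item Distinctness makes $k\mapsto j(k)$ injective, and symmetry between the two models then gives $K=K'$.
\end{enumerate}
Continuity of the filtering recursion alone does not supply this positivity; it only propagates positivity from a point where it already holds. So state reachability as an assumption. For example, a positive initial distribution together with $Q_{kk}>0$ on an open set, as Assumption~(\ref{ass:sticky_switch}) provides, makes all weights positive at $t=2$ on that set.
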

\end{theorembox}
\begin{proof}
    For notation simplicity, we drop the subscript $\mparam$, and consider two MSM distributions $p(\z_{1:T})$ and $\hat{p}(\z_{1:T})$. Assume there exist two MSMs satisfying $p(\z_{1:T}) = \hat{p}(\z_{1:T})$, with $K$ and $\hat{K}$ regimes, respectively. Since wlog. $p(\z_{1:t}) = p(\z_{t} | \z_{1:t-1})p(\z_{1:t-1})$, the fact that $p(\z_{1:T}) = \hat{p}(\z_{1:T})$, and $p(\z_{t} | \z_{1:t-1})$ is a probability distribution imply that
\begin{equation}
    p(\z_{t} | \z_{1:t-1}) = \hat{p}(\z_{t} | \z_{1:t-1}), \quad \forall t = 2, ..., T.
\end{equation}
Now, since the model assumes a conditional Markov structure controlled by the previous state $s_{t}$ \eqref{eq:cond_markov}, we can show that the conditional distribution is a finite mixture distribution:
\begin{equation}
    p(\z_{t} | \z_{1:t-1}) = \sum_{k=1}^K p(s_{t} = k | \z_{1:t-1}) p(\z_{t} |\z_{t-L_z:t-1}, s_{t}=k).
\label{eq:mixture}
\end{equation}
Since $p(\z_{t} |\z_{t-L_z:t-1}, s_{t}=k)$ is Gaussian \eqref{eq:nonlinear_gaussian}, Proposition 2 in \citet{yakowitz1968identifiability} shows Gaussians with distinct parameters are identifiable up to permutations. Therefore, we have $K = \hat{K}$, and for almost any given $\z_{t-L_z:t-1} = \bm{\alpha} \in \mathbb{R}^{mL_z}$ (modulo some zero-measure sets) we have the following result: there exists a permutation indexed by $\bm{\alpha}$,  $\pi:\R^{mL_z}\to S_K$ such that $\hat{k} = \pi(\bm{\alpha},k)$ for any $k \in \{1, ..., K\}$
\begin{equation}\label{eq:identifiability}
\begin{aligned}
p(s_{t} = k | \z_{1:t-L_z-1}, \z_{t-L_z:t-1} = \bm{\alpha}) &= \hat{p}\big(s_{t} = \pi(\bm{\alpha}, k) | \z_{1:t-L_z-1}, \z_{t-L_z:t-1}  = \bm{\alpha}\big), \\
p(\z_{t} |\z_{t-L_z:t-1} = \bm{\alpha}, s_{t}=k) &= \hat{p}\big(\z_{t} | \z_{t-L_z:t-1} = \bm{\alpha}, s_{t}=\pi(\bm{\alpha}, k)\big).
\end{aligned}
\end{equation}
To clarify, Proposition 2 in \citet{yakowitz1968identifiability} establishes that multivariate Gaussian distributions are identifiable if and only if the means or covariances are distinct. Wlog., assume covariances are identical, the distribution is unidentifiable on the set where pairs of mean functions coincide. For any model, we can determine this set:
\begin{equation}
    \mathcal{C}_{k,k'} := \Big\{\bm{\alpha}\in\R^{mL_z} :  \bm{m}_k(\bm{\alpha})=\bm{m}_{k'}(\bm{\alpha}) \Big\}, \quad \mathcal{C} := \bigcup_{k\neq k'}^K\mathcal{C}_{k,k'}.
\end{equation}
where for $\hat{p}$ we can determine a similar collision set $\hat{\mathcal{C}}$ finding similar $\hat{\mathcal{C}}_{k,k'}$ for any $k\neq k'\in\{1,\dots,\hat{K}\}$. Given that the zero set of an analytic function has Lebesgue measure zero \cite{mityagin2015zero},  $\mathcal{C}_{k,k'}$ has Lebesgue measure zero. Therefore $\mathcal{C}$ and $\hat{\mathcal{C}}$ have Lebesgue measure zero since $K<+\infty$. Therefore, for any pair of models, Equation \eqref{eq:identifiability} holds for any $\bm{\alpha}\in\mathcal{Z}$, where $\mathcal{Z} =\R^{mL_z} \setminus( \mathcal{C} \cup \hat{\mathcal{C}} )$, and $\mu(\mathcal{Z})=\mu(\R^{mL_z})$.

Now we show that $\pi(\bm{\alpha}, k)$ is a constant almost everywhere, i.e., $\pi(\bm{\alpha}, k) = \pi(k)$. First, define the pairwise distance between the mean functions and the minimum distance:
\begin{equation}
    d_{k,k'}(\bm{\alpha}) := || \bm{m}_k(\bm{\alpha}) -  \bm{m}_{k'}(\bm{\alpha})||, \quad \delta_0(\bm{\alpha}) := \min_{k\neq k'} d_{k,k'}(\bm{\alpha}).
\end{equation}
Choose a non-collision point $\bm{\alpha}_0\in\mathcal{Z}$, where we know $\delta_0(\bm{\alpha}_0) > 0$, and $\bm{m}_k(\bm{\alpha}_0) = \hat{\bm{m}}_{\hat{k}}(\bm{\alpha}_0)$ with $\hat{k}=\pi(\bm{\alpha}_0,k)$ for any $k\in\{1,\dots,K\}$ (from Eq. \eqref{eq:identifiability}). Define $\hat{\delta}_0(\bm{\alpha}_0)>0$  similarly for $\hat{\bm{m}}_{\hat{k}},\ \hat{k}\in\{1,\dots,K\}$. Now set $\delta:=\frac{1}{4} \min \{\delta_0(\bm{\alpha}_0),  \hat{\delta}_0(\bm{\alpha}_0)\}$. By continuity, for all $k, \hat{k}\in\{1,\dots,K\}$, there exists $\epsilon >0$ such that we can construct an $\epsilon$-ball $B(\bm{\alpha}_0, \epsilon)$ using $\ell_2$-norm, where
\begin{equation}
    || \bm{m}_k(\bm{\alpha}_0) -  \bm{m}_{k}(\bm{\alpha})|| < \delta, \quad || \hat{\bm{m}}_{\hat{k}}(\bm{\alpha}_0) -  \hat{\bm{m}}_{\hat{k}}(\bm{\alpha})|| < \delta, \quad \forall \bm{\alpha}\in B(\bm{\alpha}_0, \epsilon).
\end{equation}
Now for any $\bm{\alpha}\in B(\bm{\alpha}_0, \epsilon)$ and $k\neq k'\in\{1,\dots,K\}$,  we can compute a lower bound for $d_{k,k'}(\bm{\alpha})$:
\begin{align*}
    d_{k,k'}(\bm{\alpha}) & = ||\bm{m}_k(\bm{\alpha}) -  \bm{m}_{k'}(\bm{\alpha}) || \\ & \geq   ||\bm{m}_k(\bm{\alpha}_0) -  \bm{m}_{k'}(\bm{\alpha}_0)||  - ||\bm{m}_k(\bm{\alpha}_0) -  \bm{m}_{k}(\bm{\alpha})|| - ||\bm{m}_{k'}(\bm{\alpha}_0) -  \bm{m}_{k'}(\bm{\alpha})||  \\ & >  \delta_0(\bm{\alpha}_0) - 2 \delta  \geq\delta_0(\bm{\alpha}_0)/2.
\end{align*}
And similarly, we can establish $\hat{d}_{k,k'}(\bm{\alpha}) > \hat{\delta}_0(\bm{\alpha}_0)/2$ for any $\bm{\alpha}\in B(\bm{\alpha}_0, \epsilon)$ with $\epsilon>0$.

Assume $\pi(\bm{\alpha}, k)$ is non-constant. Then $\exists \bm{\alpha}_1\in B(\bm{\alpha}_0, \epsilon)$ such that:
\begin{equation}
    \hat{k}_1=\pi(\bm{\alpha}_1,k)\neq\pi(\bm{\alpha}_0, k)=\hat{k}_0
\end{equation}
From Eq. \eqref{eq:identifiability}, we have $\bm{m}_k(\bm{\alpha}_0) = \hat{\bm{m}}_{\hat{k}_0}(\bm{\alpha}_0)$ and $\bm{m}_k(\bm{\alpha}_1) = \hat{\bm{m}}_{\hat{k}_1}(\bm{\alpha}_1)$ and continuity gives
\begin{align*}
    ||\hat{\bm{m}}_{\hat{k}_1}(\bm{\alpha}_1) -  \hat{\bm{m}}_{\hat{k}_0}(\bm{\alpha}_1) || \leq   ||\hat{\bm{m}}_{\hat{k}_1}(\bm{\alpha}_1) -  \bm{m}_{k}(\bm{\alpha}_0) || +  ||\bm{m}_k(\bm{\alpha}_0) -  \hat{\bm{m}}_{\hat{k}_0}(\bm{\alpha}_1) || < 2\delta \leq \hat{\delta}_0(\bm{\alpha}_0)/2,
\end{align*}
but because  $\hat{k}_0 \neq \hat{k}_1$, we also have
\begin{equation*}
    ||\hat{\bm{m}}_{\hat{k}_1}(\bm{\alpha}_1) -  \hat{\bm{m}}_{\hat{k}_0}(\bm{\alpha}_1) || = \hat{d}_{\hat{k}_0,\hat{k}_1}(\bm{\alpha}_1) > \hat{\delta}_0(\bm{\alpha}_0)/2,
\end{equation*}
and a contradiction is reached. Therefore, for any  $\bm{\alpha}_0\in\mathcal{Z}$, there exists an open subset $\mathcal{U}\subset\mathcal{Z}$ where $\pi(\alpha,k)=\pi_{\mathcal{U}}(k)$ for all $\bm{\alpha}\in\mathcal{U}$
\begin{equation}
    \bm{m}_k(\bm{\alpha}) = \hat{\bm{m}}_{\pi_{\mathcal{U}}(k)}(\bm{\alpha}), \quad \forall \bm{\alpha} \in \mathcal{U},
\end{equation}
Since the mean functions on both models are analytic and distinct, equality on a nonempty open set $\mathcal{U}$ implies equality everywhere on $\R^{mL_z}$ by the identity theorem for vector-valued real analytic functions. This implies two distinct permutations $\pi_{\mathcal{U}_1} \neq \pi_{\mathcal{U}_2}$ cannot occur on disjoint open subsets $\mathcal{U}_1\subset\mathcal{Z},\ \mathcal{U}_2\subset\mathcal{Z}$. Otherwise, $\exists k \in\{1,\dots,K\}$ such that $\pi_{\mathcal{U}_1}(k)\neq \pi_{\mathcal{U}_2}(k)$, and hence forcing $\hat{\bm{m}}_{\pi_{\mathcal{U}_1}(k)} \equiv \bm{m}_{k} \equiv \hat{\bm{m}}_{\pi_{\mathcal{U}_2}(k)}$ contradicting the assumption of distinct means on $\hat{\bm{m}}_k,\ k\in\{1,\dots,K\}$. 

In summary, there exists a global permutation $\pi\in S_{K}$ such that $\hat{k} = \pi(k),\ k\in{1,\dots,K}$ and the following equivalence holds almost everywhere for $\z_{t-L_z:t-1} \in \R^{mL_z}$:
\begin{equation}
p(\z_{t} | \z_{t-L_z:t-1}, s_{t}=k) = \hat{p}\big(\z_{t} | \z_{t-L_z:t-1}, s_{t}=\pi(k)\big), \quad \forall \z_{t-L_z:t-1} \in \mathcal{Z}, \quad \mu(\mathcal{Z}) = \mu(\mathbb{R}^{mL_z}),
\end{equation}
and from Gaussian assumptions on $\z_t$, and analyticity on $\bm{m}_k$,
\begin{equation}
\bm{m}_k \equiv \hat{\bm{m}}_{\pi(k)}, \quad \Sigma_{k} \equiv \hat{\Sigma}_{\pi(k)}.
\end{equation}
\end{proof}

\subsubsection{Identifiability of switching process (Theorem~\ref{thm:msm_transition_identifiability} (ii))}
Below we present the identifiability of the switching process up to permutation by further assuming (\ref{ass:sticky_switch}--\ref{ass:posterior_domi}). Therefore, we fall back to the single-lagged case with $L_z=1$ for simplicity.
\begin{theorembox}
\begin{theorem}\label{thm:msm_transition_identifiability_ps}
The MSM is identifiable up to state permutation equivalence, further assuming (\ref{ass:sticky_switch}--\ref{ass:posterior_domi}) and that the transition matrix $p(s_t | s_{t-1}, \z_{t-1}) = Q(\z_{t-1})$ is an analytic function in $\z_{t-1}$.
\end{theorem}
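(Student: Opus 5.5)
By Theorem~\ref{thm:msm_transition_identifiability_z_transitions}, two models with $p(\z_{1:T})=\hat p(\z_{1:T})$ have $K=\hat K$. After relabelling $\hat{\mparam}$ by the global permutation $\pi$, the components agree identically: $g_k\equiv\hat g_k$, i.e.\ $\bm m_k\equiv\hat{\bm m}_k$ and $\Sigma_k=\hat\Sigma_k$. It therefore remains to show $Q\equiv\hat Q$ in these aligned labels, and also $p(s_1)=\hat p(s_1)$.

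\textbf{Filtering and predictive weights.} Equality of the mixtures in Eq.~\eqref{eq:mixture} at time $t=2$, combined with linear independence of distinct Gaussians \citep{yakowitz1968identifiability}, gives equality of the weights $p(s_2=k\mid\z_1)=\hat p(s_2=k\mid\z_1)$ for almost every $\z_1$. The initial weights follow in the same way from $p(\z_1)=\sum_k p(s_1=k)g_k(\z_1)$. By induction on $t$ I would then show that the filtering posteriors coincide, $p(s_t=\cdot\mid\z_{1:t})=\hat p(s_t=\cdot\mid\z_{1:t})$ a.e. The induction step uses Bayes' rule: the posterior is the predictive weight times $g_k(\z_t\mid\z_{t-1})$, normalised, and both factors already agree. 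Equality of the predictive weights at $t+1$ follows again from equality of the mixtures $p(\z_{t+1}\mid\z_{1:t})$ and linear independence. These predictive weights off the collision set $\mathcal C$ are exactly the weights in Eq.~\eqref{eq:identifiability}, with $\pi$ now the identity.

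\textbf{Linear system for $Q$.} As in Step~\textbf{I} of Theorem~\ref{thm:affine_identifiability}, I would write the predictive weights as
\[
p(s_{t+1}=k\mid\z_{1:t})=\sum_j p(s_t=j\mid\z_{1:t})\,Q_{jk}(\z_t).
\]
Stacking $K$ histories $\tilde{\bm h}^{(i)}$ that share the endpoint $\z_t$ gives $F_{\bm H}(\z_t)Q(\z_t)=A_{\bm H}(\z_t)$, and likewise $F_{\bm H}(\z_t)\hat Q(\z_t)=A_{\bm H}(\z_t)$, with the same $F_{\bm H}$ and $A_{\bm H}$ by the previous paragraph. Assumptions~(\ref{ass:sticky_switch}--\ref{ass:posterior_domi}) and Lemma~\ref{lemma:concentration_to_floor} let me choose histories in $\mathcal Z$ with $F_{\bm H}(\z_t)_{ii}>\tfrac12$. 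Hence $F_{\bm H}(\z_t)$ is strictly diagonally dominant and invertible, first at $\z^*$ and then, by continuity of the margins $M_i$, on a neighbourhood $\mathcal U$. There $Q(\z_t)=F_{\bm H}(\z_t)^{-1}A_{\bm H}(\z_t)=\hat Q(\z_t)$.

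\textbf{Main obstacle.} The delicate point is this dominance step. Lemma~\ref{lemma:concentration_to_floor} needs stickiness for both $Q$ and $\hat Q$, and it needs to hold on a set of positive measure rather than a single point; I would handle this by applying Assumptions~(\ref{ass:sticky_switch}--\ref{ass:posterior_domi}) to both models. A further subtlety is that "a.e." equalities must hold at the chosen histories. I would resolve this by perturbing within $\mathcal U$, since every quantity involved is continuous there.

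\textbf{Globalisation.} Once $Q\equiv\hat Q$ on the open set $\mathcal U$, analyticity of $Q$ and $\hat Q$ and the identity theorem extend the equality to all of $\R^m$. Undoing the relabelling gives $Q_{jk}=\hat Q_{\pi(j)\pi(k)}$, which is Definition~\ref{def:perm_identifiability}.
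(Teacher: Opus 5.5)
Your proposal is correct and follows the paper's proof. You identify the filtering and one-step predictive posteriors via Theorem~\ref{thm:msm_transition_identifiability_z_transitions} and Bayes' rule, stack $K$ histories sharing a common endpoint so that the predictive matrix equals the filtering matrix times $Q(\z)$, obtain strict diagonal dominance from Lemma~\ref{lemma:concentration_to_floor} and invertibility on a neighbourhood by continuity, and then extend $Q=\hat Q$ to $\R^m$ by the identity theorem.

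Two minor remarks. First, you only need stickiness for one of the two models: once the posteriors are identified, the filtering matrix is common to both, and its invertibility alone forces $Q=\hat Q$. Second, your concern about a.e.\ equalities at specific histories, which the paper glosses over, is better settled by continuity of the posteriors in the whole history $\z_{1:t}$ than by perturbing only the endpoint.
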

\end{theorembox}
\begin{proof}
From Theorem \ref{thm:msm_transition_identifiability}, we have established that the number of discrete states $K$ is identifiable, and the transitions $p(\z_{t}| \z_{t-1}, s_t)$ and one-step predictive filtering posteriors $p(s_{t}| \z_{1:t-1})$ are identifiable up to permutation equivalence. This identifiability result extends to all time steps $t > 1$, and due to regime stationarity of MSMs, this permutation equivalence does not change over time.

Now w.l.o.g., fix a particular ordering of the discrete state. Note that for $t > 1$, $p(s_{t} = k' | \z_{1:t})$ is also identifiable for all $k' \in \{1, ..., K \}$, as $p(\z_{t} | \z_{t-1}, s_{t}=k')$ and $p(s_{t} = k' | \z_{1:t-1})$ are identifiable as well (from Theorem \ref{thm:msm_transition_identifiability}) and then by applying Bayes' rule:
\begin{equation}
    p(s_{t} = k' | \z_{1:t}) \propto p(\z_{t} | \z_{t-1}, s_{t}=k') p(s_{t} = k' | \z_{1:t-1}).
\end{equation}
Now fix $t\geq T^*$ and choose $\bm\beta\in\mathcal Z$ as given by (\ref{ass:sticky_switch}--\ref{ass:posterior_domi}).
For each regime $k\in\{1,\dots,K\}$, (\ref{ass:sticky_switch}--\ref{ass:posterior_domi}) and
Lemma~\ref{lemma:concentration_to_floor} provide a history $\z_{1:t}^{(k)}=(\bm\alpha_k,\bm\beta)$ such that $p(s_t=k\mid \z_{1:t-1}=\bm\alpha_k, \z_t=\bm\beta)>\frac12.$

Define the matrices
\begin{equation}
A(\z)=
\begin{pmatrix}
a_1(\z)\\
\vdots\\
a_K(\z)
\end{pmatrix},
\qquad
a_k(\z):=p(s_t\mid \z_{1:t-1}=\bm\alpha_k, \z_t=\z),
\end{equation}
and
\begin{equation}
B(\z)=
\begin{pmatrix}
b_1(\z)\\
\vdots\\
b_K(\z)
\end{pmatrix},
\qquad
b_k(\z):=p(s_{t+1}\mid \z_{1:t-1}=\bm\alpha_k, \z_t=\z).
\end{equation}
For every $\z\in\R^m$, the recurrent transition model implies $B(\z)=A(\z)Q(\z)$. At $\z=\bm\beta$, the matrix $A(\bm\beta)$ is strictly diagonally dominant given Lemma \ref{lemma:concentration_to_floor}, and therefore $A(\bm\beta)$ is invertible by the Levy-Desplanques theorem \citep{wu2005bounds}. Furthermore, $A(\z)$ is also continuous, as each row is formed by a weighted average of continuous functions in $\z$. Therefore, there exists a nonzero measure set $\mathcal{U_{\bm\beta}}$, such that $A(\z)$ is invertible for any $\z\in\mathcal{U}_{\bm\beta}$. This implies
\begin{equation}
    Q(\z) = A(\z)^{-1}B(\z), \quad \z\in\mathcal{U}_{\bm\beta},
\end{equation}
i.e., $Q(\z)$ is identifiable on $\mathcal{U}_{\bm\beta}$.

The above statement implies any other transition matrix $Q'(\z)$ inducing the same distribution $p(\z_{t+1}\mid \z_{1:t-1}, \z_t=\z)$ satisfies $Q(\z)=Q'(\z)$ for any $\z\in\mathcal{U}_{\bm\beta}$. Since both $Q(\z)$, and $Q'(\z)$ are analytic and coincide on the nonempty open set $\mathcal{U}_{\bm\beta}$, the identity theorem for real-analytic functions implies $Q(\z)=Q'(\z)$ for any $\z\in\R^m$. Therefore, $Q$ is identifiable up to state permutation equivalence.

\end{proof}

\subsubsection{Alternative assumptions on the switching process}
We additionally establish permutation identifiability of more restrictive switching processes, such as observation-dependent switching $\ps(s_{t}\mid\z_t)$, and autonomous switching  $\ps(s_{t}\mid s_{t-1})$. Establishing such results does not involve additional assumptions beyond (\ref{ass:gaussian_transition}). Therefore, for completeness, we consider again an arbitrary lag $L_z$.
\begin{theorembox}
\begin{corollary}
    Under assumption (\ref{ass:gaussian_transition}), the switch transitions are identifiable up to state permutation equivalence if the transition matrix is observation-dependent $\ps(s_t|s_{t-1}, \z_{t-1}):=\ps(s_{t}| \z_{t-1})$.
\end{corollary}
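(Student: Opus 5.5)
The plan is to reduce the problem to Theorem~\ref{thm:msm_transition_identifiability_z_transitions} and then show that observation-dependent switching is read off directly from the mixture weights. No inversion of a history-stacked matrix should be needed.

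\textbf{Step 1: identify the components.} Take two models $p$ and $\hat p$ with $p(\z_{1:T})=\hat p(\z_{1:T})$. Theorem~\ref{thm:msm_transition_identifiability_z_transitions} uses only Assumption~(\ref{ass:gaussian_transition}). It gives $K=\hat K$ and a global permutation $\pi\in S_K$ with
\[
\pz(\z_t\mid \z_{t-L_z:t-1},s_t=k)=\hat p(\z_t\mid \z_{t-L_z:t-1},s_t=\pi(k))
\]
almost everywhere.

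\textbf{Step 2: read off the weights.} Under observation-dependent switching, the predictive weight in the mixture \eqref{eq:mixture} simplifies:
\[
p(s_t=k\mid \z_{1:t-1})=\sum_j p(s_{t-1}=j\mid \z_{1:t-1})\,\ps(s_t=k\mid \z_{t-1})=\ps(s_t=k\mid \z_{t-1}).
\]
So the mixture weights are exactly the switching probabilities. The first line of \eqref{eq:identifiability} is the weight-matching statement from Yakowitz's finite-mixture identifiability. It holds for almost every $\bm\alpha$ outside the collision sets $\mathcal C\cup\hat{\mathcal C}$. Combined with the now-global permutation, it yields
\[
\ps(s_t=k\mid \z_{t-1})=\hat p(s_t=\pi(k)\mid \z_{t-1})
\]
for almost every $\z_{t-1}$.

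\textbf{Step 3: conclude.} The transition $\ps(s_t=k\mid s_{t-1}=j,\z_{t-1})$ does not depend on $j$. The required relation $Q_{jk}(\z)=\hat Q_{\pi(j)\pi(k)}(\z)$ therefore holds trivially for every $j$, which is Definition~\ref{def:perm_identifiability}.

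\textbf{Expected obstacle.} The weight matching holds only almost everywhere, while $Q$ is assumed merely a.e.\ continuous, not analytic. Equality a.e.\ of the densities gives equality a.e.\ of the weights as functions of $\z_{t-1}$, with the conditioning on $\z_{1:t-2}$ dropping out. This is the natural sense of identifiability here.

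There is one subtlety: the weight equality must hold as functions of $\bm\alpha$ across histories. I would handle it by applying Yakowitz pointwise at each fixed $\z_{1:t-1}$ with $\z_{t-1}\notin\mathcal C\cup\hat{\mathcal C}$. Distinct Gaussian components then have uniquely determined weights, so no extra regularity is needed.
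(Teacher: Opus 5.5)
Your proposal is correct and follows essentially the same route as the paper. The paper invokes Theorem~\ref{thm:msm_transition_identifiability_z_transitions} to identify $K$ and the permutation-matched predictive weights $p(s_t\mid \z_{1:t-1})$, then uses observation-dependent switching to collapse these weights to $\ps(s_t\mid \z_{t-1})$. Your version additionally spells out the marginalisation over $s_{t-1}$ and the trivial $j$-independence needed for Definition~\ref{def:perm_identifiability}, both of which the paper leaves implicit.
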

\end{theorembox}
\begin{proof}
    Drop the subscript $\ps$ for simplicity. From Theorem \ref{thm:msm_transition_identifiability_z_transitions}, the number of discrete states $K$ is identifiable, and the one-step filtering posteriors are $p(s_t| \z_{1:t-1})$ identifiable. In an observation-dependent model, $s_t$ is conditionally independent of $\z_{1:t-2}$  given $\z_{t-1}$. Therefore,
    \begin{equation}
        p(s_t| \z_{1:t-1}) = p(s_t | \z_{t-1})
    \end{equation}
    which implies the transitions are identifiable up to permutations.
\end{proof} 
\begin{theorembox}
\begin{theorem}\label{thm:autonomous:Q}
 Under assumption (\ref{ass:gaussian_transition}), the switch transitions are identifiable up to state permutation equivalence if the transition matrix is autonomous: $\ps(s_t|s_{t-1}, \z_{t-1}):=\ps(s_t | s_{t-1})$.
\end{theorem}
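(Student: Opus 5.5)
Our approach is to reduce the autonomous case to finite-state HMM-type identifiability. Throughout, we use the predictive switch posteriors, which Theorem~\ref{thm:msm_transition_identifiability_z_transitions} already identifies, together with the filtering posteriors obtained from them by Bayes' rule.

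By Theorem~\ref{thm:msm_transition_identifiability_z_transitions}, $K=\hat K$ and there is a global permutation $\pi$ with $\bm m_k\equiv\hat{\bm m}_{\pi(k)}$ and $\Sigma_k=\hat\Sigma_{\pi(k)}$. Relabel the second model so that $\pi$ is the identity. From Eq.~\eqref{eq:identifiability}, with the now global permutation, the one-step predictive posteriors coincide for almost every history:
$$p(s_t=k\mid \z_{1:t-1})=\hat p(s_t=k\mid\z_{1:t-1}).$$
Applying Bayes' rule with the common transition densities then shows that the filtering posteriors $p(s_{t-1}\mid\z_{1:t-1})$ also coincide. In the autonomous model we have
$$p(s_t\mid\z_{1:t-1})=\sum_j p(s_{t-1}=j\mid\z_{1:t-1})\,Q_{jk},$$
so for any $K$ histories $\z^{(1)}_{1:t-1},\dots,\z^{(K)}_{1:t-1}$ we obtain the matrix identity $B=FQ=F\hat Q$. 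Here the rows of $F$ are the identified filtering posteriors and the rows of $B$ the identified predictive posteriors. It therefore suffices to find $K$ histories for which $F$ is invertible, since then $Q=\hat Q$.

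The main obstacle is to guarantee that invertibility without the stickiness and separation assumptions (\ref{ass:sticky_switch}--\ref{ass:posterior_domi}). The first plan is to exploit the unbounded Gaussian tails along a single step. Take $t-1=2$ (or more generally the last step), fix $\z_1$, and let $\z_2$ run along rays as in Proposition~\ref{prop:sufficient_conditions_nonnegativity}. The filtering weight
$$p(s_2=k\mid\z_{1:2})\propto p(s_2=k\mid\z_1)\,g_k(\z_2\mid\z_1)$$
then concentrates on the component that asymptotically dominates along the chosen direction. This requires only that $p(s_2=k\mid\z_1)>0$. If some regime $k$ had zero predictive mass for almost every $\z_1$, it would never be visited, contradicting that the $K$ components are identified as present in the mixture.

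Distinct analytic means or covariances do not by themselves give a direction dominating every regime. We therefore intend to avoid needing dominance by arguing linear independence of the functions $\z_2\mapsto p(s_2=k\mid\z_{1:2})$, $k=1,\dots,K$, which are the normalised $w_k g_k(\z_2\mid\z_1)$. Linear independence of the Gaussian family \citep{yakowitz1968identifiability}, with weights $w_k>0$ and distinct parameters at a non-collision $\z_1$, shows these $K$ functions are linearly independent. Hence one can pick $K$ points $\z_2^{(1)},\dots,\z_2^{(K)}$ making $F$ nonsingular, by the standard fact that linearly independent functions admit a nonsingular evaluation matrix.

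With $F$ invertible, $Q=F^{-1}B=\hat Q$, so the transition matrix is identified under the global relabelling $\pi$. Reverting the relabelling gives $Q_{jk}=\hat Q_{\pi(j)\pi(k)}$. The initial switch distribution follows the same way from $p(s_1\mid\z_1)$ if it is required. We expect the step that establishes strict positivity of the predictive weights, needed for the linear-independence argument, to require the most care, possibly by taking $t$ large and using the fact that every state is reachable.
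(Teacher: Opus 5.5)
Your proposal is correct and is essentially the paper's argument: fix a prefix history and vary the endpoint $\z_t$ over $K$ points. Linear independence of distinct Gaussians, together with positive predictive weights and the common normaliser, then makes the filtering matrix invertible (the paper writes it as $\mathrm{diag}(w)GD^{-1}$ with $G$ invertible), and you solve $Q=A^{-1}B$. The paper likewise simply assumes $p(s_t=k\mid\z_{1:t-1})>0$, so the positivity point you flag is left implicit there too, and your discarded ray/dominance plan is not needed.
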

\end{theorembox}
\begin{proof}
Again, drop the subscript $\ps$ for simplicity. From Theorem \ref{thm:msm_transition_identifiability_z_transitions}, we have established that the number of discrete states $K$ is identifiable, and the transitions $p(\z_{t}| \z_{t-1}, s_t)$ and one-step predictive filtering posteriors $p(s_{t}| \z_{1:t-1})$ are identifiable up to permutation equivalence. This identifiability result extends to all time steps $t > 1$, and due to regime stationarity of MSMs, this permutation equivalence does not change over time.

Now fix a particular ordering of the discrete state w.l.o.g.. Note that for $t > 1$, $p(s_{t} = k' | \z_{1:t})$ is also identifiable for all $k' \in \{1, ..., K \}$, as $p(\z_{t} | \z_{t-L_z:t-1}, s_{t}=k')$ and $p(s_{t} = k' | \z_{1:t-1})$ are identifiable as well (from Theorem \ref{thm:msm_transition_identifiability}) and then by applying Bayes' rule:
\begin{equation}
    p(s_{t} = k' | \z_{1:t}) \propto p(\z_{t} | \z_{t-L_z:t-1}, s_{t}=k') p(s_{t} = k' | \z_{1:t-1}).
    \label{eq:bayes_rule_discrete_state}
\end{equation}
Under an autonomous first-order Markov state transition model (s1):
\begin{equation}
    p(s_{t+1} = k | \z_{1:t}) = \sum_{k'=1}^K p(s_{t} = k' | \z_{1:t}) p(s_{t+1} = k | s_{t} = k').
\label{eq:marginal_state_prob_s1}
\end{equation}
Denote the $K \times K$ transition matrix $p(s_{t+1} | s_{t})$ as $Q$ whose $k$th row is $Q_k = [p(s_{t+1} = k | s_{t} = 1), ..., p(s_{t+1} = k | s_{t} = K)]$. To determine $Q$, we can construct a linear system from Eq.~\eqref{eq:bayes_rule_discrete_state} by using different values of $\z_{1:t}$. Fix $\z_{1:t-1}$ and select $K$ different points $\{\bm{\beta}_1, \dots, \bm{\beta}_K\}$ for $\z_t$. Now, denote $A,B$ matrices:
\begin{equation}
    A =(a_1, \dots, a_K), \quad a_k =  p(s_{t}\mid \z_{1:t-1}, \z_t=\bm{\beta}_k),
\end{equation}
\begin{equation}
    B =(b_1, \dots, b_K)^\top, \quad b_k =  p(s_{t+1}\mid \z_{1:t-1}, \z_{t}=\bm{\beta}_k),
\end{equation}
which yields to the following linear system:
\begin{equation}
    B = AQ.
\end{equation}
Using Eq.~\eqref{eq:bayes_rule_discrete_state} and given $p(s_{t}=k| \z_{1:t-1})>0$, the filtering posterior admits the following form:
\begin{equation}
    p(s_{t} = k | \z_{1:t}) = \frac{w_k p(\z_{t}| \z_{t-L_z:t-1}, s_{t}=k)}{\sum_{k'} w_{k'} p(\z_{t}| \z_{t-L_z:t-1}, s_{t}=k')}, \quad w_k> 0.
\end{equation}
And we can write $A$ as follows.
\begin{equation}
    A = \mathrm{diag}(w) G D^{-1},
\end{equation}
where
\begin{equation}
    G_{kj} = p(\z_{t}=\bm{\beta}_j | \z_{t-L_z:t-1}, s_{t}=k), \quad D_{jj} = \sum_{k=1}^K w_k p(\z_{t}=\bm{\beta}_j| \z_{t-L_z:t-1}, s_{t}=k).
\end{equation}
From $w_k>0$ and $p(\z_{t-1}=\bm{\beta}_j| \z_{t-2}, s_{t-1}=k) > 0$, $D$ is invertible. Furthermore, $G$ is invertible. To see why, Gaussian distributions with distinct means or covariances are linearly independent \citep{yakowitz1968identifiability}. Therefore, we can select $\bm{\beta}_1, ..., \bm{\beta}_K$, such that $G$ is invertible.
Since both $A$ and $B$ are identifiable and $A$ is invertible,
$Q = A^{-1}B$.
Therefore, the autonomous transition matrix is identifiable up to state permutations.
\end{proof}

\subsection{Extended discussion on disentanglement and Proof of Theorem~\ref{lemma:disentanglement}}\label{app:disentanglement_proof}
We first begin with a simple example of \citet[Assumption~(P2)]{kivva2022identifiability}
\paragraph{Example.} Consider $m=3,\ K=2$ with $\bm{\sigma}_1 = (1, \tfrac12, \tfrac14)$, and $\bm{\sigma}_2 = (1, \tfrac14, \tfrac12)$. We note the above condition holds given $\bm{\sigma}_1/\bm{\sigma}_2 = (1, 2 ,\tfrac12)$.

The intuition of Theorem~\ref{lemma:disentanglement} starts by considering the case where the ratio vector partially satisfies having distinct elements. For a pair of regimes where \citet[Assumption~(P2)]{kivva2022identifiability} is violated, we have dimension groups with equal ratio (from labels $\{1,\dots,m\}$), such that we obtain identifiability across groups, but cannot resolve intra-group affine ambiguity. Therefore, with additional regimes, such that each regime resolves different groups, latent variable disentanglement could be obtained.

\paragraph{Example.} Assume $m=3$, but now $K=3$. Assume for $K=1,2$, the ratio vector is:
\begin{align*}
    & \sigma_{1,1} =1,\quad \sigma_{1,2} =0.25,\quad \sigma_{1,3} =0.25,\\
    & \sigma_{2,1} =1,\quad \sigma_{2,2} =0.25,\quad \sigma_{3,2} =0.5,\\
    & = (1, \quad 1, \quad 0.5).
\end{align*}

Intuitively, using a similar Kivva-style SVD-based proof, we can distinguish the first and second groups. Therefore, the affine ambiguity is reduced, as the equivalence introduces a block-diagonal matrix.
$\z'=PB\z + \Bb$, where for this example the block diagonal matrix is as follows:
\begin{equation*}
    B = \begin{pmatrix}
        b_1 & b_2 & 0 \\
        b_3 & b_4 & 0 \\
        0 & 0 & b_5 \\
    \end{pmatrix}
\end{equation*}
Introducing additional regimes will introduce additional restrictions to the equivalence, and therefore, if we obtain another pair, e.g., $K=2,3$, such that the relation for that becomes
\begin{equation*}
    B = \begin{pmatrix}
        b_1' & 0 & 0 \\
        0 & b_2' & b_3' \\
        0 & b_4' & b_5' \\
    \end{pmatrix}
\end{equation*}
we will obtain a diagonal term in the equivalence.

\subsubsection{Block-permutation disentanglement}

We formalise the above example by proving block-permutation identifiability. Assume we have a non-temporal GMM for $\z\in \R^m$, expressed as follows:
\begin{equation}
    \z \sim \sum_{k=1}^K c_k \mathcal{N}(\bm{\mu}_k, diag(\bm{\sigma}^2_k)).
\end{equation}

\begin{theorembox}
\begin{lemma}\label{lemma:block_diagonal_identifiaiblity}
    Let $K\geq 2$ and $c_k>0$ for all $k\in \{1,\dots,K\}$. Let $\z' = A\z+\Bb$, where $A\in\R^{m\times m}$ is invertible and $\Bb\in\R^m$. Assume there exist $k_1\neq k_2\in\{1,\dots,K\}$ such that the ratio vector $\bm{r}_{k_1,k_2}$ induces a partition $\{1,\dots,m\}=I_1\cup\dots\cup I_L$ where for any $\ell\in\{1,\dots,L\}$, $L\leq m$,
    \begin{equation}
        i,j \in I_\ell \ \iff\ r_{k_1,k_2,i} = r_{k_1,k_2,j}.
    \end{equation}
    and for $\ell\neq\ell',\ I_\ell\cap I_\ell'=\emptyset$.
    That is, each $I_\ell$ contains exactly those dimension labels with the same variance ratio. Then from $\z'$, we can recover an invertible matrix $A'\in\R^{m\times m}$ such that $(A')^{-1}A = P_1BP_2$, where $P_1, P_2$ are permutation matrices, and $B$ is a block diagonal matrix defined as follows:
    \begin{equation}
        B = \operatorname{blockdiag}(B_1, \dots, B_L)
    \end{equation}
    where each $B_{\ell}\in\R^{|I_\ell|\times|I_\ell|},\ \ell\in\{1,\dots,L\}$ is invertible.
\end{lemma}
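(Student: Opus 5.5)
The plan follows the eigendecomposition argument of \citet{kivva2022identifiability}, applied to one pair of mixture components. From the distribution of $\z'$, which is a GMM with components $\mathcal{N}(A\bm\mu_k+\Bb,\ A\Sigma_kA^\top)$ and weights $c_k>0$, identifiability of finite Gaussian mixtures \citep{yakowitz1968identifiability} recovers every component covariance $\Sigma'_k = A\Sigma_kA^\top$, up to relabelling of the components. Pairs of components of $\z'$ therefore correspond to pairs of components of $\z$, so we may work with the pair $(k_1,k_2)$ from the statement; we know it corresponds to some recoverable pair of $\z'$-components, and since the recovered block structure does not depend on which representative pair is used, the argument can be run on it directly. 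Swapping the roles of $k_1$ and $k_2$ only inverts the ratios, which leaves the partition $\{I_\ell\}$ unchanged.

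The key construction is the matrix
\[
M:=\Sigma'_{k_2}{}^{-1}\Sigma'_{k_1} = A^{-\top}\Sigma_{k_2}^{-1}\Sigma_{k_1}A^{\top} = A^{-\top}\operatorname{diag}(\bm r_{k_1,k_2}^2)A^{\top}.
\]
This matrix is computable from $\z'$ and diagonalisable. Its eigenvalues are $r^2_{k_1,k_2,i}$, which coincide exactly on the groups $I_\ell$ because the ratios are positive. Its eigenspaces are $E_\ell = A^{-\top}\operatorname{span}\{\mathbf e_i : i\in I_\ell\}$, and they are recoverable from $M$ alone. We then choose any basis of each $E_\ell$ and stack these bases, ordered by distinct eigenvalue, as the columns of a matrix $W$, which is invertible. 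By construction $W = A^{-\top}P_2^\top\tilde B$, where $P_2$ reorders the coordinates so that the groups are contiguous and $\tilde B$ is block diagonal with invertible blocks. Each block is the change of basis within one eigenspace, and it is invertible because both the chosen basis and the images of the $\mathbf e_i$ are bases of $E_\ell$. We set $A':=W^{-\top}$.

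It remains to compute the relation between $A'$ and $A$. Since $(A')^{-1}=W^{\top}=\tilde B^\top P_2 A^{-1}$, we obtain $(A')^{-1}A=\tilde B^\top P_2$. Writing $B:=\tilde B^\top$, which is again block diagonal with invertible blocks, and $P_1=I$, gives the form $P_1BP_2$. The ordering of the groups is arbitrary, which is why a permutation is allowed on both sides. I will double-check the transpose conventions, since one could equally build $A'$ from eigenvectors of $M^\top=\Sigma'_{k_1}\Sigma'_{k_2}{}^{-1}=A\operatorname{diag}(\bm r^2)A^{-1}$. That alternative avoids transposes: its eigenspaces are $A\operatorname{span}\{\mathbf e_i: i\in I_\ell\}$, so stacking bases gives $A'=AP_2^\top B^{-1}$ directly, which is cleaner and I would present this version.

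The main obstacle is the first step. We must justify that the covariance matrices $A\Sigma_kA^\top$ are recoverable from $\z'$ and correctly paired across the two components, including the case where components share a covariance but differ in mean, and that the resulting block structure does not depend on the relabelling ambiguity. The linear algebra afterwards is routine.
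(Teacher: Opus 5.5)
Your proposal is correct and follows essentially the paper's argument: form $M=\hat\Sigma_{k_2}^{-1}\hat\Sigma_{k_1}=A^{-\top}\operatorname{diag}(\bm r_{k_1,k_2}^2)A^{\top}$, build $A'$ from its eigenvectors, and conclude that $(A')^{-1}A$ is block diagonal up to permutations. The only differences are minor: the paper gets the block structure from the commutation relation $SD^2=D^2S$ with $S=(A')^{-1}A$ rather than from an explicit eigenspace description, and it takes the component covariances of $\z'$ as given, so your shared-covariance worry goes beyond it and is in any case harmless, since then $\bm r_{k_1,k_2}=\bm 1$, $L=1$, and the conclusion holds for any invertible $A'$.
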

\end{theorembox}
\begin{remark}
    In particular, if all ratios are distinct, i.e., $|I_\ell|=1$ for all $\ell\in\{1,\dots,L\}$ (equivalently $L=m$), $B$ is diagonal. This reduces to \citet{kivva2022identifiability}.
\end{remark}
\begin{remark}
    If coordinates are ordered by blocks, $P_2=I$, and $(A')^{-1}A = P_1B$.
\end{remark}
\begin{proof}

Let $\z$ be GMM-distributed with diagonal covariances $\Sigma_k=diag(\bm{\sigma}_k),\ k\in\{1,\dots,K\}$. Let $\hat{\Sigma}_k$ denote the covariance matrix of the $k$-th component of $\z'$. Under the affine transformation $\z'=A\z + \Bb$, we have
\begin{equation}
    \hat{\Sigma}_k = A\Sigma_k A^\top,
\end{equation}
where each  $\hat{\Sigma}_k,\ k\in\{1,\dots,K\}$ is symmetric positive definite. Fix two regimes $k_1$ and $k_2$ (wlog. $k_1=1, k_2=2$) and consider the matrix $M:=\hat{\Sigma}^{-1}_2\hat{\Sigma}_1$:
\begin{equation}\label{eq:diagonal_product}
    M=\hat{\Sigma}^{-1}_2\hat{\Sigma}_1 = A^{-\top}\Sigma_2^{-1}A^{-1}A\Sigma_1 A^\top = A^{-\top}D^2 A^\top,
\end{equation}
where $D=diag(\bm{r}_{(1,2)})$. Given Eq \eqref{eq:diagonal_product}, $M$ is similar to the diagonal $D^2$, and hence is diagonalisable. From the eigen-decomposition of $M$, we may find invertible $U$ such that $M=UD^2U^{-1}$. Define $A':=U^{-\top}$. Then, $M=(A')^{-\top}D^2 (A')^\top$. Combining with Eq. \eqref{eq:diagonal_product} yields (after transposing)
\begin{equation}
    AD^2 A^{-1} = A'D^2 (A')^{-1} \implies SD^2 = D^2S, 
\end{equation}
with $S=(A')^{-1} A$. Given $D^2$ is diagonal, we have
\begin{equation}
    (d_j^2-d^2_i) S_{ij} = 0,\quad i,j\in\{1,\dots,m\}.
\end{equation}
This implies $S_{ij}=0$ whenever $d_i\neq d_j$, and no restriction appears if $d_i=d_j$. Given $d_i = r_{1,2,i}$, $R$ is structured from the partition $\{I_1, \dots, I_L\}$, since $I_\ell=\{i\in\{1,\dots,m\}: d_i=d_\ell\},\ \ell\in\{1,\dots,L\}$. Therefore, $S$ is block-diagonal with respect to the partition up to coordinate re-ordering. That is, there exists a permutation matrix $P$, such that:
\begin{equation}
    S = (A')^{-1}A = P^\top BP, \quad B = \operatorname{blockdiag}(B_1,\dots, B_L),
\end{equation}
where each block $B_{\ell}\in\R^{|I_\ell|\times|I_\ell|}, \ell\in\{1,\dots,L\}$ is invertible. Invertibility of each $B_{\ell}$ follows from invertibility of $S$. Finally, note that the eigendecomposition of $M$ is not unique, as we can reorder eigenvectors. Therefore, we may redefine $\tilde{A}':=A'\tilde{P}$. Then, $\tilde{S} = (A'\tilde{P})^{-1}A = \tilde{P}^\top S$. Since we have shown $S=P^\top B P$, it follows that
\begin{equation}
    \tilde{S} = (\tilde{P}^\top P^\top) B P.
\end{equation}
Since the choice of $\tilde{P}$ relabels coordinates, we therefore can obtain invertible $A'$ (replacing $\tilde{A}'$ by $A'$) such that
\begin{equation}
    (A')^{-1}A=P_1 BP_2,
\end{equation}
with $P_1, P_2$ permutation matrices and $B$ block-diagonal with invertible blocks.
\end{proof}

\subsubsection{Proof of Theorem~\ref{lemma:disentanglement}}

Lemma~\ref{lemma:block_diagonal_identifiaiblity} shows block-pemutation identifiability for a single pair of regimes. When considering all possible pairs, we can construct the ratio matrix $R$ presented under (\ref{ass:disentanglement}), and the proof of Theorem~\ref{lemma:disentanglement} aggregates all regime pairs together to strengthen the block-permutation identifiability result into a single permutation of latent variables.

\begin{proof}
The proof follows largely from Lemma \ref{lemma:block_diagonal_identifiaiblity}, where the previous commutativity requirement now induces increasing restrictions for every row of the ratio matrix $R$. For each pair of regimes $i=(i_1, i_2)$ with $i_1< i_2,\ i_1,i_2\in\{1,\dots,K\}$, define $M_i :=\hat{\Sigma}^{-1}_{i_2}\hat{\Sigma}_{i_1}$. Using $\hat{\Sigma}_k = A\Sigma_kA^\top$,
\begin{equation}\label{eq:diagonal_2}
    M_i = A^{-\top}\Sigma_{i_2}^{-1}A^{-1}A\Sigma_{i_1} A^\top = A^{-\top}D_i^2 A^\top.
\end{equation}
where $D_i$ is diagonal, and $D_i=R_{i,:}=diag(\bm{r}_{(i_1,i_2)})$ ($D_i$ is the $i$-th row of $R$). Therefore, all $M_i$ are diagonalisable under $A^{-\top}$, and furthermore, any two $M_i,M_j, i\neq j$ commute:
\begin{equation}
M_iM_j = A^{-\top}D_i^2 A^\top A^{-\top}D_j^2 A^\top = A^{-\top}D_i^2D_j^2 A^\top = A^{-\top}D_j^2D_i^2 A^\top = M_jM_i.
\end{equation}
Therefore, all $M_i$ are simultaneously diagonalisable under the same basis \citep[p. 64]{horn2012matrix}. That is, there exists an invertible matrix $U$ such that $M_i=UD^2_iU$ for all $i\in\{1,\dots,\binom{K}{2}\}$. Define $A':=U^{-\top}$, then
\begin{equation}
M_i=(A')^{-\top}D_i^2(A')^\top.   
\end{equation}
Define $S:=(A')^{-1} A$. Combining with Eq. \eqref{eq:diagonal_2} (and transposing) yields $SD_i^2 = D_i^2S$. And since each $D_i^2$ is diagonal, we have
\begin{equation}
    (d_{i,k}^2-d^2_{i,j}) S_{kj} = 0,\quad k,j\in\{1,\dots,m\}, \quad \forall i\in\left\{1,\dots,\binom{K}{2}\right\}.
\end{equation}
This implies $S_{kj}=0$ whenever $d_{i,k}\neq d_{i,j}$. Given $d_{i,j}=R_{ij}$, and no two columns of $R$ are equal, this implies $S$ is a diagonal matrix. As in Lemma \ref{lemma:block_diagonal_identifiaiblity}, the eigendecomposition is defined up to permutation of eigenvectors. Replacing $A'$ for $A'P$, with $P$ a permutation matrix, transforms $S$ into $P^{\top}S$. Therefore, we can write $S$ up to relabelling:
\begin{equation}
    S = (A')^{-1}A = PD,
\end{equation}
with $P$ a permutation matrix and $D$ diagonal.
\end{proof}

\section{Method Details and Discussions}\label{app:estimation}

\subsection{Forward-backward Inference}\label{app:derivations_fb}
Below we provide further details on computing the posterior distributions $\gamma_{t,k}(\z_{1:T})=p(s_{t}=k \mid \z_{1:T})$ and $\xi_{t,k,l}(\z_{1:T})=p(s_{t}=k, s_{t-1}=l \mid \z_{1:T})$. Given a sequence $\z_{1:T}$ with $K$ regimes, for $k\in\{1,\dots,K\}$, the forward algorithm computes messages $\alpha_{t,k}(\z_{1:t}) = p(\z_{1:t},s_t=k)$ for $t=2,\dots,T$:
\begin{multline}\label{eq:forward}
    \alpha_{t,k}(\z_{1:t}) = p(\z_t \mid \z_{t-1}, s_t = k)  
\times \sum_{l=1}^K Q_{lk}(\z_{t-1}) \alpha_{t-1,l}(\z_{1:t-1}), \\ \alpha_{1,k}(\z_{1}) = p(\z_1\mid s_1=k)p(s_1=k),
\end{multline}
and the backward algorithm computes messages $\beta_{t,k}(\z_{t+1:T}) = p(\z_{t+1:T}\mid \z_t, s_t=k)$ for $t=T-1,\dots,1$:
\begin{equation}\label{eq:bck}
   \beta_{t,k}(\z_{t+1:T}) = \sum_{l=1}^K p_{\mparam}(\z_{t+1} | \z_{t}, s_{t+1} = l) 
\times Q_{kl}(\z_{t}) \beta_{t+1,l}(\z_{t+1:T}),
\end{equation}
with $\beta_{T,k}=1$ for all $k\in\{1,\dots,K\}$. The posteriors $\gamma_{t,k}(\z_{1:T})=p(s_{t}=k \mid \z_{1:T})$ and $\xi_{t,k,l}(\z_{1:T})=p(s_{t}=k, s_{t-1}=l \mid \z_{1:T})$ are then computed as follows 
\begin{multline}
    \gamma_{t,k}(\z_{1:T}) = \frac{\alpha_{t,k}(\z_{1:t})\beta_{t,k}(\z_{t+1:T})}{\sum_{k=1}^K\alpha_{t,k}(\z_{1:t})\beta_{t,k}(\z_{t+1:T})}, \\
    \xi_{t,k,l}(\z_{1:T}) = \frac{\alpha_{t-1,l}(\z_{1:t-1})\beta_{t,k}(\z_{t+1:T})p(\z_{t}\mid\z_{t-1}, s_t=k)Q_{lk}(\z_{t-1})}{\sum_{j=1}^K \alpha_{t,j}(\z_{1:t})\beta_{t,j}(\z_{t+1:T})}
\end{multline}
To avoid numerical issues, we normalise Equations~\eqref{eq:forward} and~\eqref{eq:bck} at each time step $t$ with respect to $s_t$, and run computations in log-space using log-sum-exp tricks. 

\subsection{$\Omega$SDS implementation details}\label{app:architecture}

\paragraph{Synthetic data.} As discussed in Section~\ref{sec:estimation}, for the synthetic experiments we parametrise the emission map using TarFlow \citet{zhai2025normalizing}. TarFlow provides an expressive invertible transformation that unmixes the observed features without requiring deep implementations. In contrast, affine couplings \citep{dinh2017density} require many coupling layers to perform nonlinear unmixing, which we found to destabilise training in higher-dimensional settings. 

TarFlow was originally designed for image data. Considering the $n$-dimensional input vectors used in our synthetic dataset, we use the corresponding non-spatial configuration with channel size $1$ and patch size $1$. All the synthetic data experiments use $4$ TarFlow blocks, where the transformer architecture of each block uses $64$ hidden dimensions, $2$ layers, and $4$ heads. However, instead of the permutation operation used in the original TarFlow architecture, we use invertible LU linear mixing layers, as in RealNVP \citet{dinh2017density}. We found this modification important in practice, as keeping the original permutation operation led to training failure with the rMSM failing to recover regime information. The output of the final flow block is split into rMSM features $\z_t$ and exogenous noise $\eps_t$.

\paragraph{Video data.} To train videos, we instead use a multi-scale Glow-style architecture \citep{kingma2018glow}, which provides scalability in high-dimensional image inputs. The architecture consists of invertible $1\times 1$ convolutions, squeeze operations, split operations, and coupling layers. Following the original Glow design, the flow is organised into $B$ blocks, each containing $L$ coupling layers. At each block, the squeeze operation downsamples the spatial dimension by a factor of $2$, and rearranges the extra dimensions into additional channels. The split operation then discards half of the features from subsequent flow operations, but retains them as part of the latent representation. 

This hierarchical structure provides a decomposition that we use to distinguish temporal features from exogenous noise. For an input image $\x_t$ and $B$ blocks, the inverse flow computes:
\begin{equation*}
    f^{-1}(\x_t) = \big(\eps_t^{(1)}, \dots, \eps_t^{(B-1)}, \z_t\big),
\end{equation*}
where the rMSM features represent the deepest features of the flow, and the remaining features are aggregated as exogenous noise $\eps_t = \operatorname{concat}(\eps_t^{(1)}, \dots, \eps_t^{(B-1)})$.
The marginal likelihood is then computed following Eq.~\ref{eq:omega_sds_likelihood}. For training, both the bouncing-ball and dancing-video experiments use $7$ coupling layers per block with $128$ channels. We use $4$ blocks for bouncing balls, and $5$ blocks for dancing videos. We found that using deeper architectures made the model more prone to state collapse.

\paragraph{rMSM implementation details} Our latent rMSM is implemented similarly to \citet{balsells-rodas2024on}. For synthetic data, the transitions consist of two-layer MLPs with $32$ hidden dimensions and a nonlinear activation that matches the one used in the data-generation process (see Section~\ref{app:synthetic_data}). For video experiments, we use three-layer MLPs with $128$ hidden dimensions and GeLU activations. In this setting, the recurrent switching process $Q$ is parametrised using a two-layer MLP.

\subsection{Likelihood-based vs VAE-based estimation}\label{app:vae_estimation}

To further motivate likelihood-based models in identifiable deep generative models, we extend the discussion in Section~\ref{sec:experiments} by comparing the exact likelihood objective of $\Omega$SDS with VAE-based alternatives. Our goal is to understand how the form of a variational approximation can affect structure learning in contrast to exact inference under the same generative modeling assumptions. 

We consider synthetic data with $n=m=10$ dimensions and $K=4$  where further data-generation and training details are discussed in Appendix~\ref{app:synthetic_data}. Table~\ref{tab:nonlinear_snlds_posteriors} summarises the results in terms of regime $F_1$ score and latent recovery measured by MCC. We consider three variational approximations under the collapsed-switching factorisation $q(\z_{1:T}\mid \x_{1:T})\,p(\s_{1:T}\mid \z_{1:T})$ used in \cite{balsells-rodas2024on}, together with one fully amortised posterior of the form $q(\z_{1:T}\mid \x_{1:T})\,q(\s_{1:T}\mid \x_{1:T})$. The latter uses bidirectional RNNs to encode both continuous and latent variables, and $\s_{1:T}$ is sampled using Gumbel-softmax relaxation \cite{jang2017categorical}. For the collapsed-switching family, we consider the following choices for the continuous posterior: (1) a factored encoder $q(\z_{1:T}\mid\s_{1:T})=\prod_t q(\z_t\mid \s_{1:T})$ implemented with a shared two-layer MLP, (2) the same factored encoder under IWAE \citep{burda2015importance} using $10$ importance samples, and (3) a encoder which uses a bi-directional RNN, followed by a forward causal RNN, as proposed in \citet{dong2020collapsed,ansari2021deep}.

The results show that VAE-based estimators can recover the regime information accurately, but produce substantially worse continuous latent representations. Furthermore, the choice of variational family is critical, as the more expressive amortised posterior fails in both structure learning and regime estimation. In comparison, $\Omega$SDS achieves strong performance in both regime estimation and structure learning. These results suggest that identifiability alone is not sufficient for reliable latent recovery, since the estimation method plays a crucial role in uncovering the underlying dynamics. By directly optimising the exact likelihood, $\Omega$SDS avoids estimation bottlenecks that can arise from the variational posterior design. We note this result is more pronounced in settings with many latent dimensions and a comparable number of observed dimensions. In our experiments, VAE-based disentanglement improves in lower-dimensional settings ($m=3$ or $m=5$), or when the observed dimension is much larger than the latent dimensions, e.g., $n=5n$ (see Section~\ref{sec:synthetic_exp}).

\begin{table}[t]
     \centering
     \caption{Comparison of different posterior parameterisations on the synthetic data.}
     \begin{tabular}{lcc}
         \toprule
         Posterior & MCC & Regime $F_1$ \\
         \midrule
         $\prod_tq(\z_{t}\mid \x_{t})\,p(\s_{1:T}\mid \z_{1:T})$ & 0.621 & 0.995 \\
         $\prod_tq(\z_{t}\mid \x_{t})\,p(\s_{1:T}\mid \z_{1:T})$ + IWAE & 0.621 & 0.998 \\
         $q(\z_{1:T}\mid \x_{1:T})\,p(\s_{1:T}\mid \z_{1:T})$ (BiRNN+RNN) & 0.622 & 0.990 \\
         $q(\z_{1:T}\mid \x_{1:T})\,q(\s_{1:T}\mid \x_{1:T})$ & 0.629 & 0.757  \\
         \midrule
         $\Omega$SDS & 0.977 & 0.993 \\
         \bottomrule
     \end{tabular}
     \label{tab:nonlinear_snlds_posteriors}
 \end{table}

\section{Experimental Details}\label{app:experimental_details}

\subsection{Metrics}
\paragraph{Regime $F_1$ score} We evaluate the performance of our method to extract regimes by computing the $F_1$ score in terms of estimated switches $\hat{\s}_{1:T}$. We compute each $\hat{s}_t$ by selecting the regime label with highest probability weight according to the marginal posterior conditioned on $\z_{1:T}$: i.e., $\hat{s}_t = \arg\max_{k\in\{1,\dots,K\}} p(s_{t}=k\mid\z_{1:T})$.

\paragraph{MCC} Mean coefficient correlation (MCC) is a standard metric to evaluate identifiable generative models \citep{khemakhem2020variational}. It evaluates whether the learned representation agrees with the ground truth up to permutation and component-wise nonlinearity. Therefore, it fits our linear scaling and permutation equivalence class for the continuous latent variables. 

\paragraph{LPIPS} LPIPS \citep{zhang2018unreasonable} measures perceptual similarity between two images using deep visual features extracted from a pretrained network. Compared with pixel-wise errors such as MSE, LPIPS better reflects human-perceived visual differences, making it suitable for evaluating the perceptual quality of video rollouts.

\subsection{Baseline specifications}\label{app:baselines}

Throughout our empirical evaluation, we compare our method against baselines for regime estimation, disentanglement, and forecasting. Below, we briefly describe each baseline and provide implementation details for reproducibility. 
\paragraph{Synthetic data.}
For the synthetic experiments, we consider the following baselines:
\begin{itemize}[leftmargin=2em]
    \item \textbf{iSDS}\citep{balsells-rodas2024on}, based on non-recurrent SDSs with piece-wise linear emissions.
    \item \textbf{CtrlNS}\citep{song2024causal}, based on observation-dependent switching with a known number of regimes $K$.
    \item \textbf{NCTRL}\citep{song2023temporally}, which assumes non-Gaussian temporal latent variables and Gaussian observations.
    \item \textbf{TDRL}\citep{yao2022temporally}, which assumes a stationary non-Gaussian temporal representation.
    \item \textbf{iVAE}\citep{khemakhem2020variational}, a non-temporal identifiable generative model. We provide ground-truth regime information as auxiliary information to obtain a stronger baseline.
\end{itemize}
For all synthetic-data baselines, public code is available and used in our experiments. Our synthetic dataset can be directly imported into the model training pipeline, and we use the default training configuration. Since SDS training can fail due to poor local optima, we train each model with $3$ random seeds for each dataset and select the checkpoint with the best validation loss, corresponding to the ELBO for all cases.

\paragraph{Bouncing Balls}
For bouncing balls, we compare against VAE-based baselines, which use the same encoder-decoder representation for fairness. The encoder consists of a $5$-layer CNNs with $64$ channels, padding $1$, and kernel size $3$, with LeakyReLU activations, and strides alternating between $2$ and $1$. A $2$-layer MLP with LeakyReLU activations and $64$ hidden units is used as the posterior distribution head. The decoder uses a similar architecture in reverse, starting with a $2$-layer MLP with $64$ hidden dimensions and LeakyReLU activations, followed by transposed convolutions under the same configuration. We use the following baselines:
\begin{itemize}[leftmargin=2em]
\item \textbf{iSDS}\citep{balsells-rodas2024on}, described above. Notably, the encoder-decoder architecture matches the original implementation.
\item \textbf{REDSDS}\citep{ansari2021deep}, which proposes recurrent explicit-duration models and uses the same collapsed-switching as in \citet{dong2020collapsed}. To improve segmentation results, we explored several annealing schedules implemented in their publicly available code to prevent state collapse and selected the best result on validation ELBO. 
\item \textbf{SNLDS}\citep{dong2020collapsed}, which combines collapsed-switching with recurrent switching dynamics. This is the closest VAE-based design to $\Omega$SDS (R). We used the publicly available implementation from REDSDS. For both SNLDS and REDSDS, we adapt the implementations to image data, which was not originally supported.
\item \textbf{KVAE}\citep{fraccaro2017disentangled}, where a soft-switching Kalman filter is used as the prior dynamics. We used our own implementation. To encourage regime learning, we use a warm-up phase where the prior dynamics weights are detached for $1$k iterations, as suggested in the original paper.
\item \textbf{VRNN}\citep{chung2015recurrent}, a recurrent latent variable model for sequential data. We use our own implementation.
\end{itemize}

\paragraph{Dancing Videos.}
For dancing videos, we compare with iSDS \citep{balsells-rodas2024on}. Since they use the same dataset in their original work, we use the same architecture as described in their implementation details. 

\subsection{Synthetic Data}\label{app:synthetic_data}

For synthetic evaluation, we generate data using a similar procedure described in \citet{balsells-rodas2024on}, with several modifications. Since we aim to compare our likelihood-based formulation against VAE-based baselines, we consider a non-recurrent dataset to align with previous approaches. For each dataset, we sample $10{,}000$ train sequences and $1{,}000$ test sequences, both with length $T=100$. 

The latent regime dynamics evolve in terms of a first-order Markov chain, where $Q$ is set to maintain the same regime with probability $90\%$ and transition to the next one with probability $10\%$, and the initial distribution is uniform over. The initial latent distributions are sampled from random Gaussian components, and the mean transitions $\bm m_k$ are parametrised using locally connected networks \citep{zheng2018dags} where the sparsity is set to allow $3$ interactions per element on average. The networks consist of two-layer MLPs with 16 hidden units and cosine activations. We modify data generation in terms of \citet{balsells-rodas2024on}, where we use regime-dependent variances and sample scales $\sigma_k\sim\mathcal U(0.001,0.5)$. To avoid ambiguity in disentanglement during learning, we verify the variance ratios for pairs of regimes as described in Section~\ref{sec:stage_3}, and resample if the ratios are too similar across dimensions and regime pairs. Specifically, for $m=3,5,10,20$, we resample if no pair of regimes has dimension-wise ratio values that differ by more than $0.35,\ 0.35,\ 0.10,\ 0.05$, respectively. For noise-augmented settings, we concatenate the latent variables with noise sampled from $\N(0,0.01)$ at each time step. Observations are then generated using random two-layer LeakyReLU networks with a hidden dimension equal to the observation dimension.

In our experiments, we increase the number of regimes $K$ with the latent dimension, setting $K=3,3,4,5$ for $m=3,5,10,20$. Sparsity of the transitions is also adjusted with dimension as described above. For each setting, we generate $5$ data seeds and train our models on NVIDIA GeForce RTX 2080 Ti GPUs. We use the Adam optimiser \citep{kingma2014adam} with a learning rate of $5\cdot10^{-3}$ for the first $100$ epochs, after which we reduce it to $5\cdot10^{-4}$. In all our settings, we detach the transition matrix $Q$ for the first $10$ epochs. In the noise-augmented settings, we additionally use PCA-based initialisations to encourage better initial MSM parametrisations and prevent state collapse. We $\Omega$SDS train for $1000$ epochs with batch size $64$, which takes roughly 3 hours per seed for $m=3$ and $16$ hours for $m=20$. The increased training time is due to the causal connectivity in TarFlow, as it considers a single-vector sequence. Future work could explore using a similar patch-based architecture for vector-valued inputs to improve scalability.

\subsection{Bouncing Ball} \label{appendix:bouncing_ball}


\begin{table}[t]
\centering
\caption{Regime estimation and long-rollout quality reported in Table~\ref{tab:bb32_regime_rollout} with $95\%$ CI for completeness.}
\label{tab:bb32_regime_rollout_ci}
\resizebox{\textwidth}{!}{
\begin{tabular}{lcccc}
\toprule
Method
& Input $F_1 \uparrow$
& Pred. $F_1 \uparrow$
& LPIPS@20 $\downarrow$
& LPIPS@500 $\downarrow$ \\
\midrule
VRNN
& --
& --
& $0.014 \pm 0.016$
& $0.028 \pm 0.021$ \\
KVAE
& $0.105 \pm 0.020$
& $0.105 \pm 0.029$
& $0.053 \pm 0.029$
& $0.324 \pm 0.014$ \\
iSDS
& $0.819 \pm 0.027$
& $0.345 \pm 0.032$
& $0.111 \pm 0.034$
& $0.106 \pm 0.022$ \\
REDSDS
& $0.655 \pm 0.025$
& $0.580 \pm 0.051$
& $0.073 \pm 0.067$
& $0.135 \pm 0.051$ \\
SNLDS
& $0.998 \pm 0.001$
& $0.913 \pm 0.007$
& $0.007 \pm 0.003$
& $0.059 \pm 0.017$ \\
\midrule
$\Omega$SDS (auto)
& $0.867 \pm 0.023$
& $0.362 \pm 0.030$
& $0.110 \pm 0.034$
& $0.092 \pm 0.015$ \\
$\Omega$SDS (recurrent)
& $\mathbf{1.000 \pm 0.000}$
& $\mathbf{0.932 \pm 0.004}$
& $\mathbf{0.003 \pm 0.002}$
& $\mathbf{0.027 \pm 0.026}$ \\
\bottomrule
\end{tabular}
}
\end{table}

As described in Section~\ref{sec:videos}, we generate videos of a bouncing ball with a resolution of $32\times32$. All methods we included are trained for $30$k steps with a batch size of $64$ and a video length of $T=64$, and we use $100$ held-out samples for evaluation. In this setting, we use Adam \cite{kingma2014adam} optimizer with a learning rate of $10^{-4}$ for the flow architecture and $10^{-3}$ for the rMSM architecture. We do not use we do not use PCA-based initialisation as described in Section~\ref{sec:experiments}. Instead, we freeze the weights of the switching network $Q$ for $1$k iterations. Training takes roughly 2 hours on 4 NVIDIA RTX 5080 GPUs for $\Omega$SDS to run. We use uniform dequantisation before centering the pixels, following \citet{kingma2018glow}. Specifically, each normalized pixel value is mapped from a discrete bin to a continuous value by adding uniform noise within its quantization bin. Unless otherwise stated, we use $32$ dequantisation bins and fixed-seed noise for evaluation to ensure deterministic comparisons across methods. The added dequantisation noise relaxes the discrete 8-bit pixel values into a continuous input distribution while leaving the perceptual appearance of the frames essentially unchanged. Because this small pixel-level noise can disproportionately increase image-space MSE, we report rollout frame quality using the perceptual LPIPS metric instead.

We show representative regime segmentations across baselines in Figure~\ref{fig:appendix_bb_regime}, and visualizations of rollout predictions for both short and long horizons in Figures~\ref{fig:appendix_bb_rollout_short} and~\ref{fig:appendix_bb_rollout_long}. These examples illustrate that $\Omega$SDS correctly identifies the distinct velocity patterns of the ball and produces accurate forecasts. Importantly, we distinguish the non-recurrent setting, represented by iSDS and $\Omega$SDS (A), from the recurrent setting, represented by $\Omega$SDS (R) and SNLDS. Notably, we observed that KVAE makes the ball disappear as time passes, while no-switching baselines move the ball or split it across boundaries. The improved results in the recurrent case motivate the development of our identifiability theory for recurrent dynamics.

\begin{figure}[htbp]
    \centering
    \includegraphics[width=\linewidth]{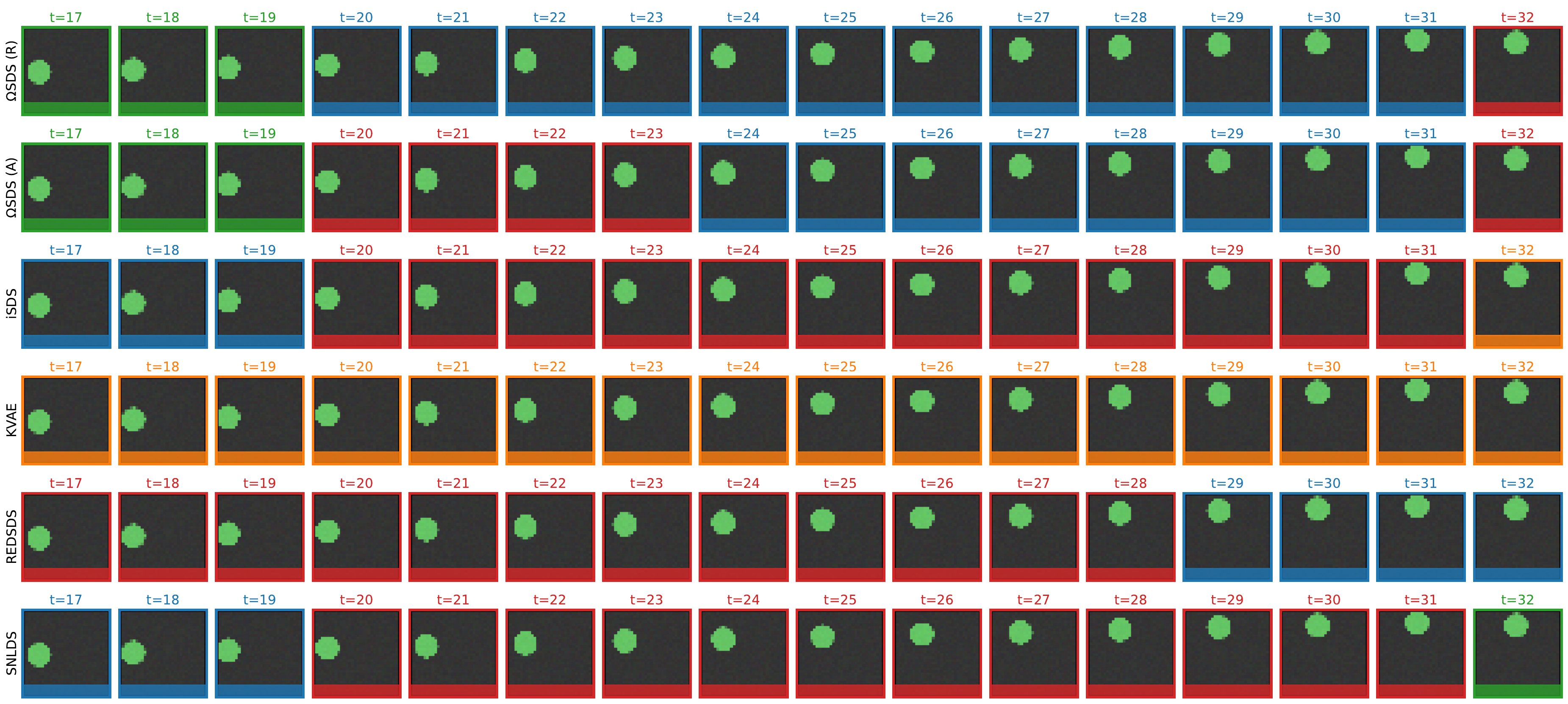}
    \caption{Posterior regimes detected by different models. Regimes are recovered up to permutations.}
    \label{fig:appendix_bb_regime}
\end{figure}

\begin{figure}[htbp]
    \centering
    \includegraphics[width=\linewidth]{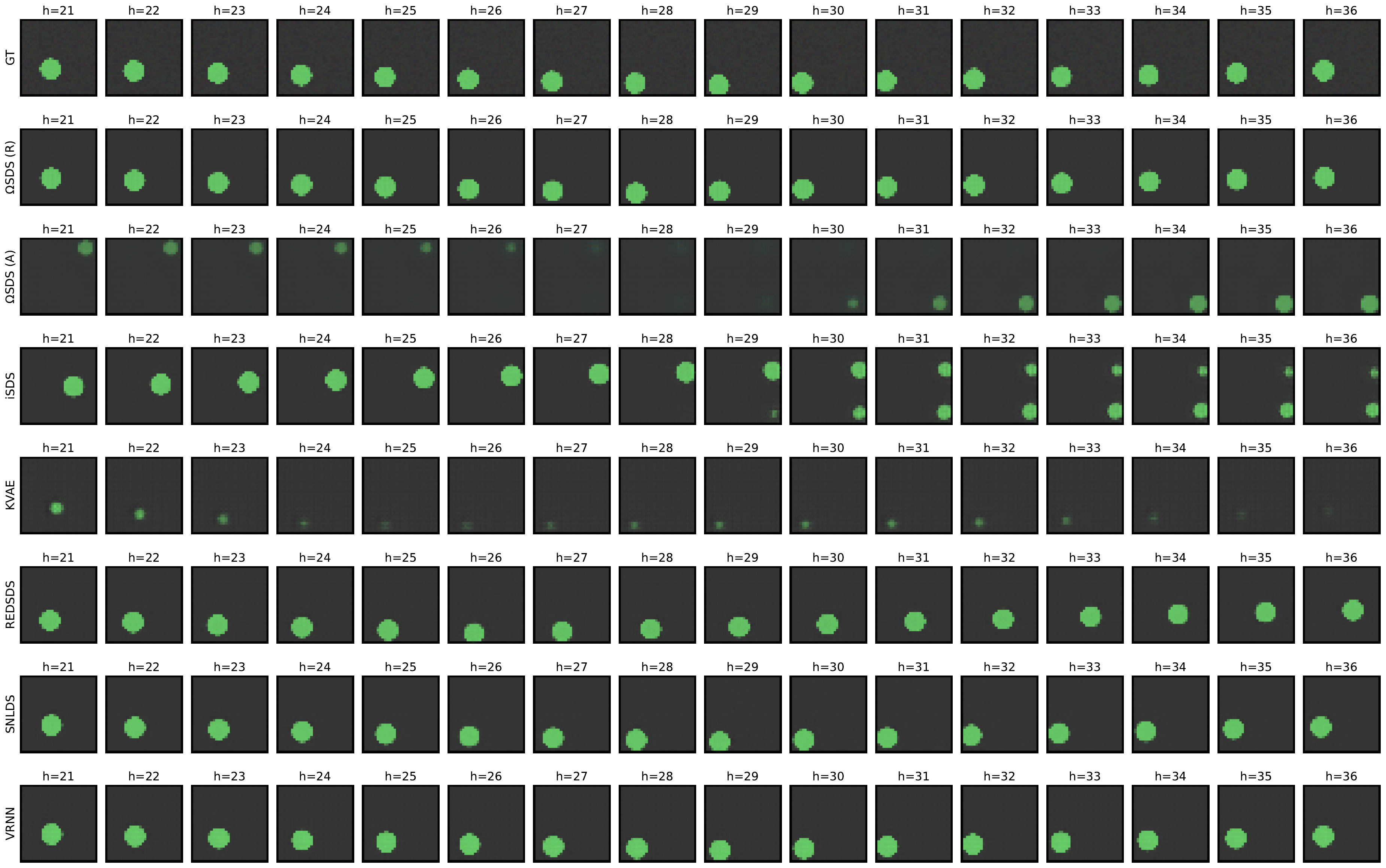}
    \caption{Comparison of short predictive rollouts with different methods. Given context length is 5. }
    \label{fig:appendix_bb_rollout_short}
\end{figure}

\begin{figure}[htbp]
    \centering
    \includegraphics[width=\linewidth]{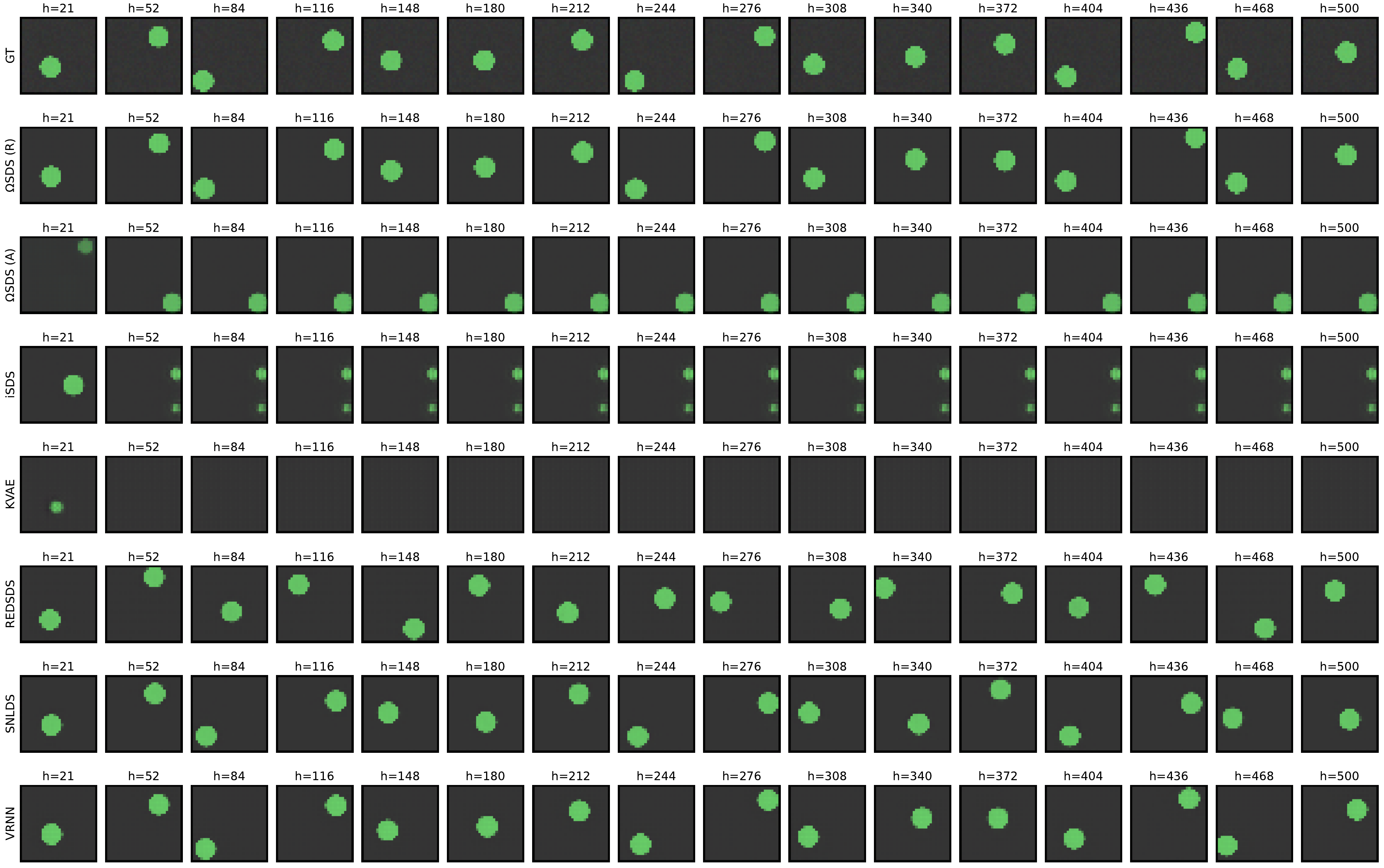}
    \caption{Comparison of long predictive rollouts with different methods. Given context length is 5.}
    \label{fig:appendix_bb_rollout_long}
\end{figure}

For quantitative comparison, Figure~\ref{fig:appendix_bb_rollout_lineplot} shows LPIPS as a function of rollout time on held-out data across baselines. We observe that both recurrent-switching methods SNLDS and $\Omega$SDS (R) maintain low perceptual error, as well as the non-switching VRNN baseline, while other switching-based methods with weaker assumptions fail to capture the long-term underlying dynamics.  

\begin{figure}
    \centering
    \includegraphics[width=.6\linewidth]{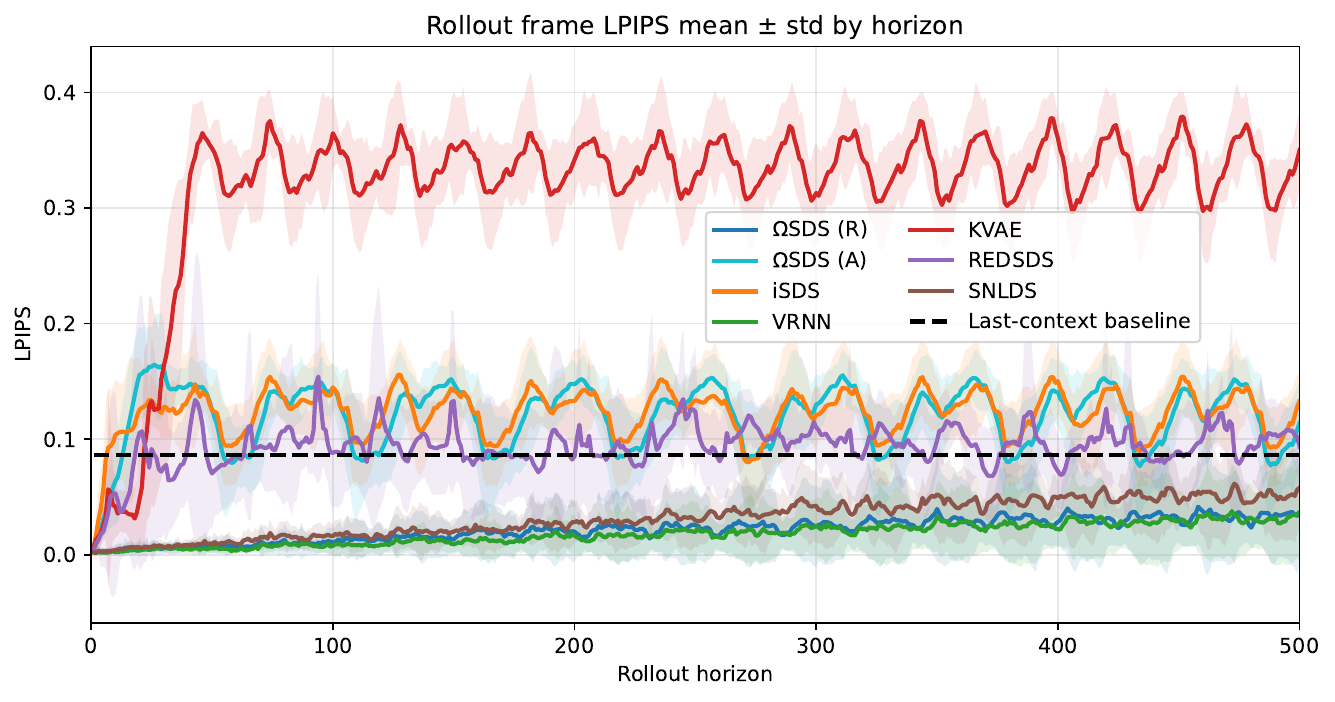}
    \caption{Plot of rollout frame LPIPS, context length = 5.}
    \label{fig:appendix_bb_rollout_lineplot}
\end{figure}

\subsection{AIST Dance Videos}\label{app:dancing_videos}

We conduct experiments on the real-world AIST dance video dataset \citet{aist-dance-db}. The dataset contains $12{,}670$ sequences of varying lengths, spanning $10$ dance genres, with $1{,}267$ sequences per genre, and includes variations in actors and camera viewpoints. We focus on segmenting sequences from the \textit{Middle Hip Hop} genre, reserving $100$ sequences for testing. Each sequence is preprocessed by: (i) temporally subsampling the video by a factor of $4$, (ii) cropping each frame around the dancer, (iii) resizing frames to $64 \times 64$. 

Since video lengths vary, during training we randomly choose $64$ consecutive frames from each sequence. We use the same dequantisation setting as in the bouncing ball experiment (see Section~\ref{appendix:bouncing_ball}). We train the model under the same conditions as described in the bouncing ball experiments, and set $K=16$. We train both $\Omega$SDS and iSDS for $150$k steps with a batch size $64$, which takes roughly one day on $4$ NVIDIA RTX 5080 GPUs. 

Training $\Omega$SDS on real-world videos is challenging since the number of regimes $K$ is unknown and the underlying dynamics may not be Markovian. Despite these difficulties, $\Omega$SDS learns meaningful regime-dependent dynamics. We show qualitative regime prediction results in Figure \ref{fig:appendix_aist_regime}. Overall, $\Omega$SDS produces more stable regime inference, although it suffers from state collapse in some examples (bottom). In contrast, iSDS estimates regimes with very short durations, suggesting that it fails to uncover informative regimes.

\begin{figure}[ht]
    \centering
    \includegraphics[width=\linewidth]{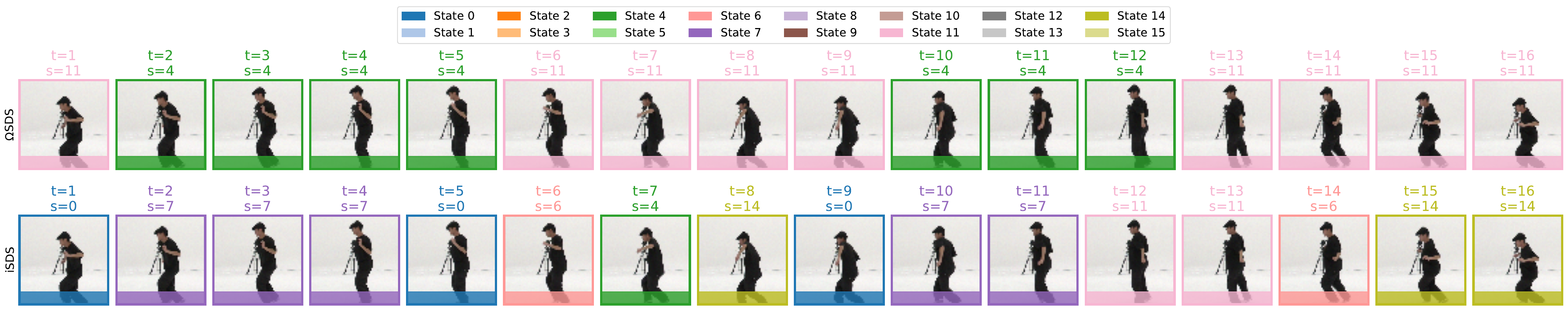}
    \includegraphics[width=\linewidth]{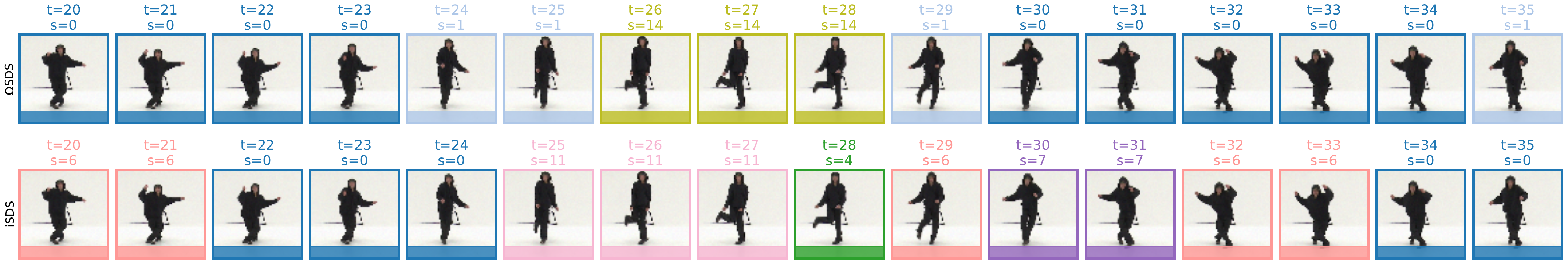}
    \includegraphics[width=\linewidth]{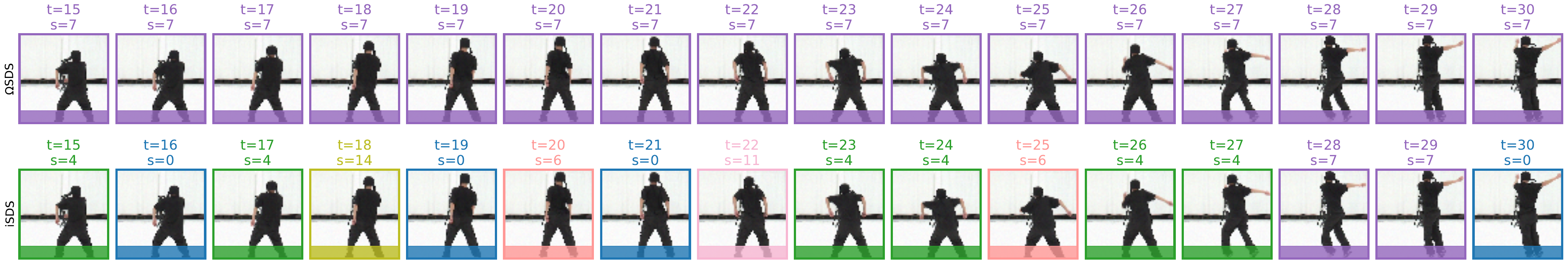}
    \caption{Posterior regimes of AIST dance videos.}
    \label{fig:appendix_aist_regime}
\end{figure}
We also show rollout predictions for both $\Omega$SDS and iSDS in Figure \ref{fig:appendix_aist_rollout}. The examples illustrate the benefit of adding recurrent feedback to the switching process, which helps the model capture future dance patterns. For example, in the top rollout sample, $\Omega$SDS predicts expressive body-motion patterns, while iSDS preserves a stationary pose. We further report LPIPS as a function of rollout time in Figure~\ref{fig:appendix_aist_rollout_lineplot}, which is consistent with the qualitative examples. Although $\Omega$SDS does not predict the exact future dance sequence, it produces more plausible movements and therefore achieves superior forecasting performance.

\begin{figure}[ht]
    \centering
    \includegraphics[width=\linewidth]{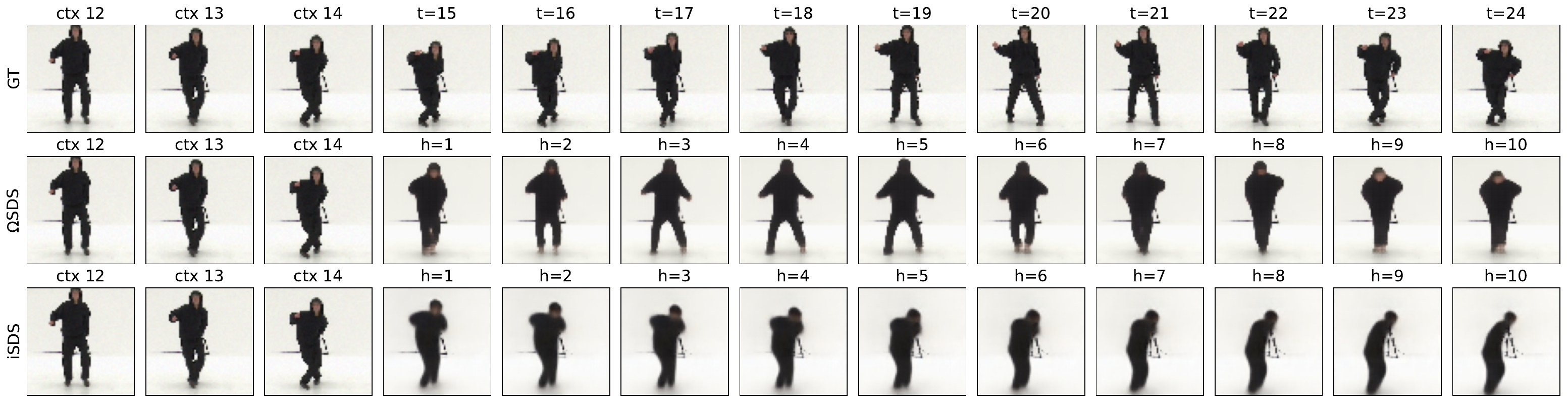}
    \includegraphics[width=\linewidth]{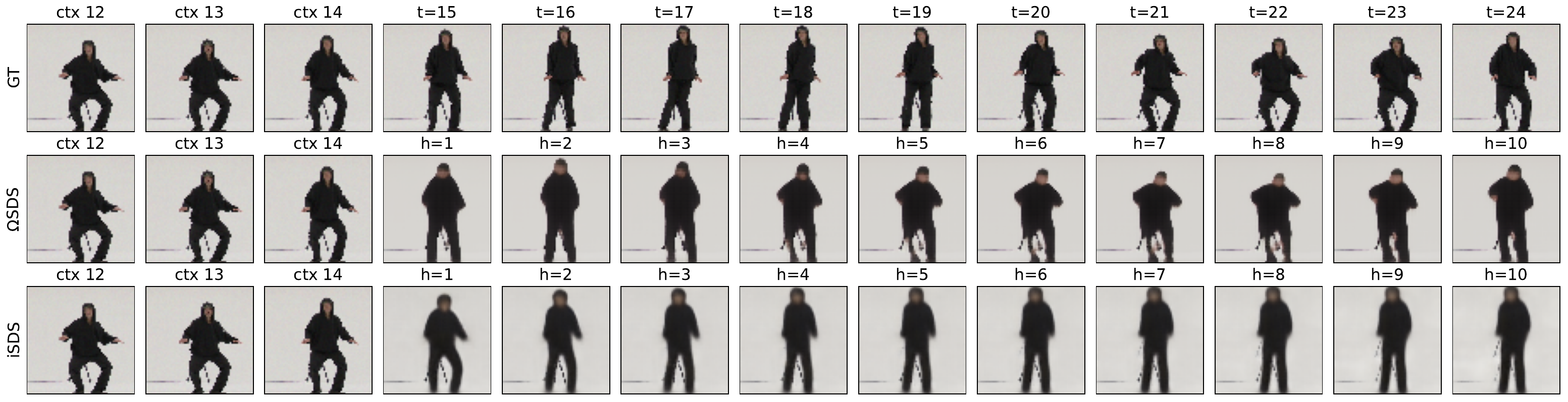}
    \includegraphics[width=\linewidth]{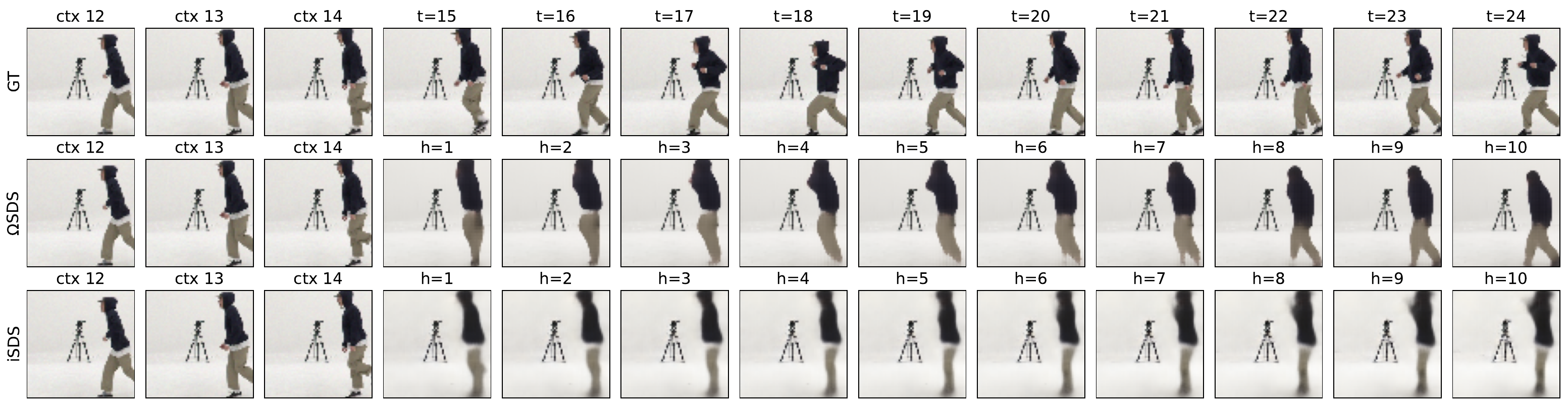}
    \caption{MAP predictive rollout results of $\Omega$SDS and iSDS given 15 context frames.}
    \label{fig:appendix_aist_rollout}
\end{figure}

\begin{figure}
    \centering
    \includegraphics[width=0.63\linewidth]{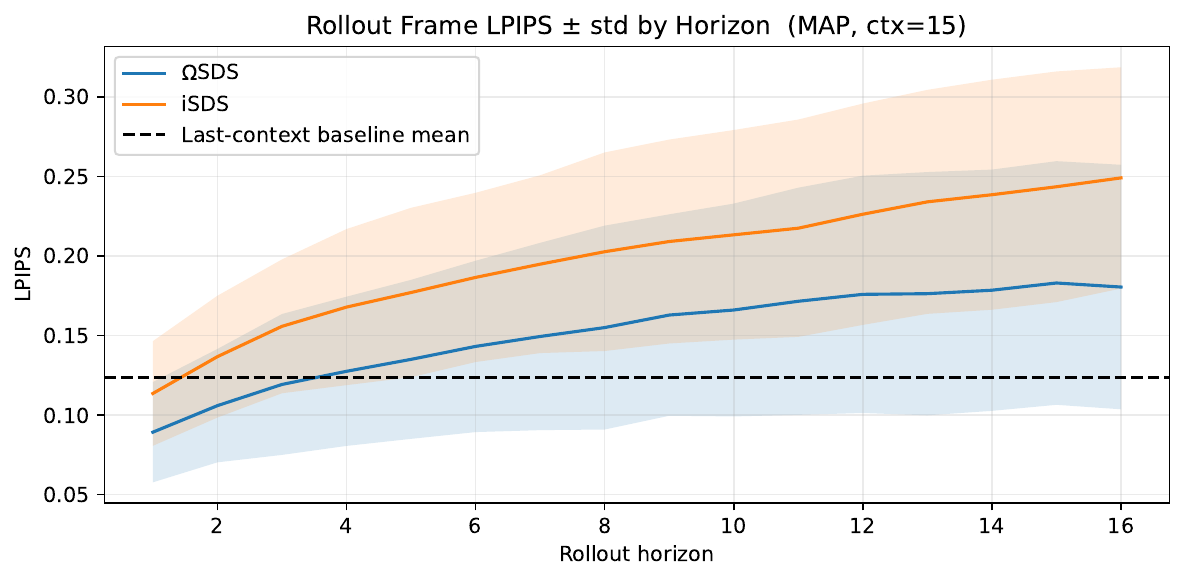}
    \caption{Plot of rollout frame LPIPS, context length = 15.}
    \label{fig:appendix_aist_rollout_lineplot}
\end{figure}


\section{Broader Impacts}\label{app:broaderImpact}
This work contributes to identifiable and interpretable representation learning for sequential data with regime-switching dynamics. By improving the recovery of latent states and switching processes, the proposed framework may support more transparent modelling of complex time series, with potential applications such as climate data, motion analysis, behavioural modelling, and neuroscience. Such interpretability can be beneficial when downstream decisions require understanding of the underlying dynamics rather than relying only on opaque predictions. However, applying these models to real-world human behavioural data requires caution, and inferred regimes without expert validation should not be interpreted as definitive labels.


\end{document}